\documentclass[letterpaper]{article} 
\usepackage[draft]{aaai2027}  
\usepackage[hyphens]{url}  
\usepackage{graphicx} 
\usepackage{etoc}
\usepackage{natbib}  
\usepackage{caption} 
\usepackage{amsmath, amssymb, amsthm}
\usepackage{cleveref}

\usepackage{algorithm}
\usepackage{algpseudocode}
\algrenewcommand\algorithmicindent{0.4em}
\algrenewcommand{\algorithmiccomment}[1]{\;\;{\small$\triangleright$ #1}}
\usepackage{newfloat}
\usepackage{listings}
\DeclareCaptionStyle{ruled}{labelfont=normalfont,labelsep=colon,strut=off} 
\floatstyle{ruled}
\newfloat{listing}{tb}{lst}{}
\floatname{listing}{Listing}
\usepackage{amsmath,amsfonts,amsthm}
\usepackage{mathtools}
\usepackage{booktabs}
\usepackage{cancel}
\usepackage{makecell}
\usepackage{enumitem}
\usepackage{thmtools}
 \usepackage{multirow} 
\usepackage{subcaption}

\declaretheorem[name=Theorem]{thm}

\declaretheorem[name=Assumption,sibling=thm]{assumption}
\declaretheorem[name=Definition,sibling=thm,style=definition]{definition}

\newcommand{\cI}{\text{\textcircled{I}}}
\newcommand{\cII}{\text{\textcircled{II}}}
\newcommand{\dJFB}{d_{x}^{JFB}}

\newcommand{\Jcal}{\mathcal{J}}

\newcommand{\T}{^{\top}}
\newcommand{\inner}[2]{\left\langle #1, #2 \right\rangle}
\newcommand{\norm}[1]{\left\lVert#1\right\rVert_2}

\newcommand{\tA}{\tilde{A}}
\newcommand{\Honey}{\mathcal{H}_{1,y}}
\newcommand{\Htwoz}{\mathcal{H}_{2,z}}

\DeclareMathOperator*{\argmin}{arg\,min}

\newenvironment{restated}[1]
  {\renewcommand{\theassumption}{\ref*{#1}}
   \addtocounter{assumption}{-1}
   \assumption}
  {\endassumption}

\title{End-to-End Learning of Safe Optimal Feedback Control \\
in High Dimensions with Control Barrier Function Layers}
\author{
    Xingjian Li\textsuperscript{\rm 1}\equalcontrib,
    Kelvin Kan\textsuperscript{\rm 2}\equalcontrib,
    Deepanshu Verma\textsuperscript{\rm 3}\equalcontrib, \\
    Krishna Kumar\textsuperscript{\rm 1},
    Stanley Osher\textsuperscript{\rm 2},
    Samy Wu Fung\textsuperscript{\rm 4}
}
\affiliations{
    \textsuperscript{\rm 1}UT Austin, \textsuperscript{\rm 2}UCLA, \textsuperscript{\rm 3}Clemson University, \textsuperscript{\rm 4}Colorado School of Mines\\

}

\begin{document}

\maketitle
\addtocontents{toc}{\protect\setcounter{tocdepth}{-1}}

\begin{abstract}
 We consider the problem of learning high-dimensional semi-global feedback controllers under hard safety constraints enforced by control barrier functions (CBFs). Incorporating CBFs into end-to-end policy training requires embedding a quadratic-program-based safety filter as an optimization layer, but computational and differentiation bottlenecks have largely restricted prior approaches to low-dimensional systems, typically with at most 16 state dimensions. We address this limitation by combining operator splitting with the recently developed Jacobian-Free Backpropagation (JFB) method to enable scalable end-to-end training while preserving hard safety guarantees through the CBF safety filter. We justify this training methodology theoretically using nonsmooth analysis techniques and demonstrate its effectiveness on high-dimensional multi-agent nonlinear control problems with state and control dimensions up to 1200 and 400, respectively.
\end{abstract}

\section{Introduction}
We consider the problem of end-to-end learning of safe feedback controllers for high-dimensional nonlinear control systems. Specifically, we consider systems of the form
\begin{subequations}
\label{eq:original_problem}
\begin{align}
\label{eq:sys_ctrl_obj}
\min_{u \in U} \quad & \int_0^T L(t, z_x, u)\, dt + \omega  G(z_x(T)),
\end{align}
\begin{align} \label{eq:sys_dyn}
\text{s.t.} \quad & \dot z_x = f(t,z_x) + g(t,z_x)u(t), \\
&h(z_x(t)) \geq 0, \quad  z_x(0) = x,
\end{align}
\end{subequations} 
where $z_x \in \mathbb{R}^n$ denotes the state variable initialized at $x$, $u(t) \in U \subset \mathbb{R}^m$ is the control, $L$ is the running cost, and $G$ is the terminal cost weighted by $\omega > 0$. The function $h(z_x(t)) \colon \mathbb{R}^n \to \mathbb{R}$ defines the safe set through its superlevel set, and the constraint $h(z_x(t)) \geq 0$ requires the trajectory to remain safe for all time. Our goal is to learn $u^\star(t, z)$, that approximately solves~\eqref{eq:original_problem} simultaneously for all $x \sim \rho$, thereby yielding a semi-global controller that works for a wide range of $(t, z)$ without online retraining~\cite{onken2022neural, ruthotto2020machine}.
\begin{figure}[t]
    \centering
    \begin{subfigure}{0.48\columnwidth}
        \centering
        \includegraphics[width=\textwidth]{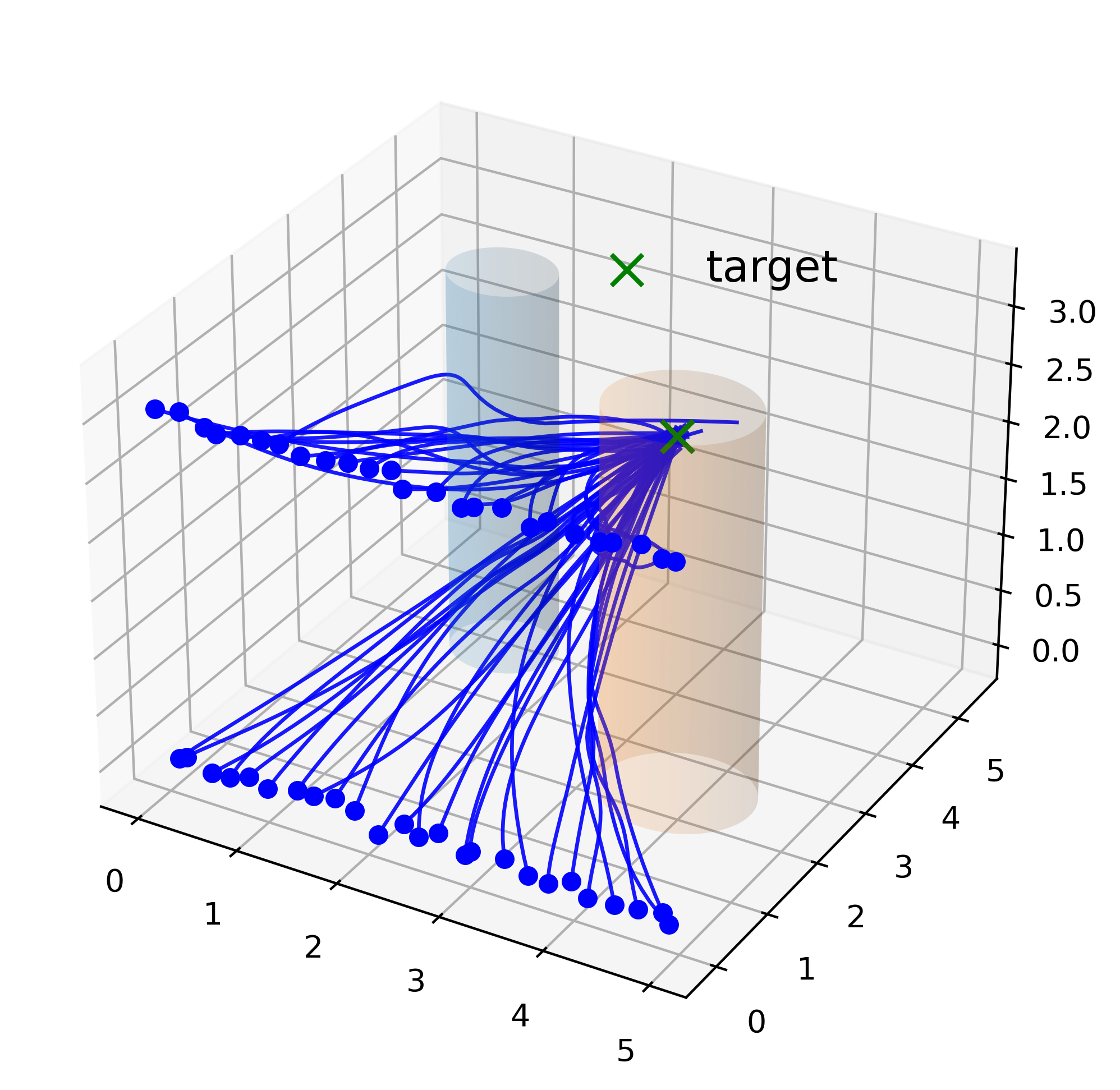}
    \end{subfigure}
    \hfill
    \begin{subfigure}{0.48\columnwidth}
        \centering
        \includegraphics[width=\textwidth]{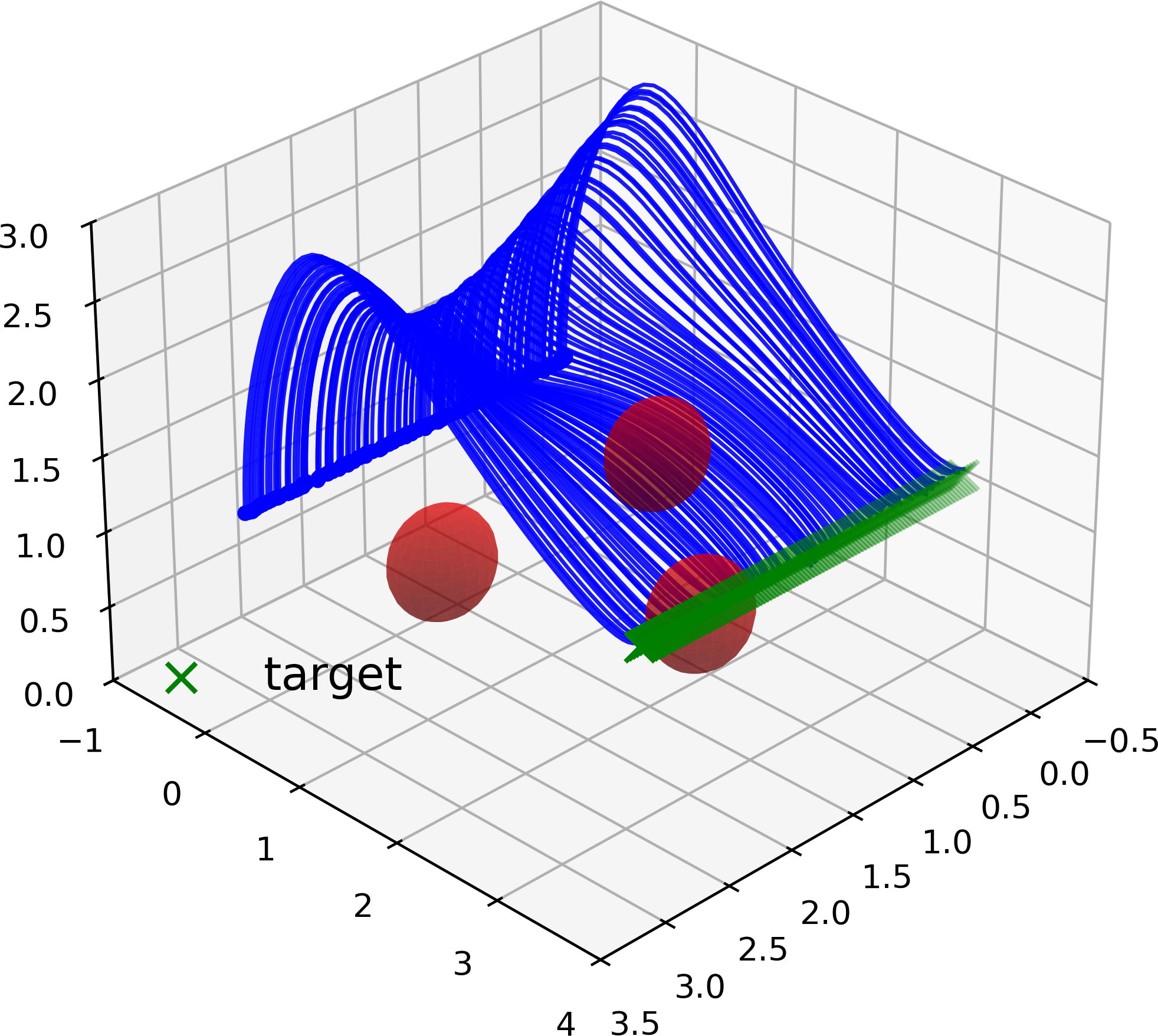}
    \end{subfigure}    
    \caption{\small{High-dimensional control examples: trajectories of 50 single integrators (left) and 100 quadcopters (right) generated by trained \emph{semi-global feedback} controllers.}}
    \label{fig:100_quadrotor_eyecandy}
\end{figure}
A widely used framework for enforcing safety constraints is given by control barrier functions (CBFs)~\cite{ames2016control,ames2019control,xiao2021high}. CBFs guarantee forward invariance of the safe set by imposing a pointwise inequality on the control input. In practice, this condition is typically enforced through a quadratic program (QP) solved at each time step, which minimally modifies a nominal control input to ensure safety.

For neural feedback policies, this naturally leads to a QP-based safety filter embedded as an optimization layer within the training loop. This end-to-end formulation is attractive because \emph{the learned controller can account for the downstream safety filter during training, rather than treating safety only as a post-processing step}. However, differentiating through QP layers introduces substantial computational overhead, and existing end-to-end approaches have largely been restricted to low-dimensional systems, typically on the order of tens of state variables. Scaling such methods to high-dimensional nonlinear control problems therefore remains a major computational challenge.

In this work, we address this challenge by developing a scalable end-to-end training framework for semi-global neural feedback controllers with CBF-QP filters. By combining operator splitting with Jacobian-Free Backpropagation (JFB), our approach avoids the differentiation bottlenecks that arise when training through high-dimensional QP layers. This enables end-to-end learning for controlling safety-critical nonlinear systems at scale far beyond those considered in prior work, while retaining the hard safety guarantees provided by the embedded CBF filter (see Figure~\ref{fig:100_quadrotor_eyecandy}).

\subsection{Our Contributions}

Several lines of work address constraints in learning-to-control problems. Differentiable programming approaches for optimal control, including differentiable MPC~\cite{amos2018differentiable}, Pontryagin Differentiable Programming~\cite{jin2020pontryagin}, and their extensions~\cite{oshin2023differentiable,jin2021safe}, enable end-to-end differentiation through trajectory optimization but are fundamentally local trajectory solvers rather than approaches for learning feedback policies.
Work such as~\cite{onken2021neural,drgovna2022differentiable,cortez2022differentiable,yoo2022dynamic,mowlavi2023optimal,drgovna2024learning} incorporates safety constraints through soft penalties, which simplify optimization but cannot guarantee constraint satisfaction.
More closely related are projection-based safety layers built on differentiable optimization layers such as OptNet~\cite{amos2017optnet} and CVXPY Layers~\cite{agrawal2019differentiable}. While these methods enable end-to-end training with hard safety guarantees~\cite{chen2021enforcing,xiao2023barriernet,min2024hardnet}, they have largely been limited to relatively small-scale problems due to the computational cost and optimization challenges associated with differentiating through optimization layers. 
Due to space limitations, we defer a comprehensive review of related work to the Appendix.

We propose a new framework for end-to-end learning of safe semi-global feedback controllers that overcomes the computational and theoretical limitations of prior approaches. Our main contributions are as follows.

\begin{enumerate}

\item \textbf{Operator-splitting and Jacobian-free backpropagation for scalable end-to-end training.}
We develop a scalable end-to-end training framework for neural feedback controllers with embedded CBF-based safety layers by combining three-operator splitting with Jacobian-Free Backpropagation.
The operator-splitting formulation decomposes the CBF induced QPs into simple subproblems, rendering high-dimensional scaling possible, while JFB enables efficient differentiation through the resulting fixed-point iterations without solving large linear systems.
Together, these ingredients make end-to-end training through high-dimensional safety filters computationally tractable by avoiding the formulation of the full KKT system, and reducing the cost of backpropagation significantly compared to implicit differentiation method.

\item \textbf{Convergence guarantees under nonsmooth safety layers.}
Leveraging the theory of Clarke generalized Jacobians, we establish convergence guarantees under gradient flow for the proposed training procedure despite nonsmoothness introduced by the CBF layers. This analysis provides a rigorous foundation for training safety-constrained implicit networks with JFB, even when active constraints induce discontinuities in the gradient.

\item \textbf{Numerical validation across dynamics, constraints, and scale.}
We demonstrate the effectiveness of the proposed framework on a range of high-dimensional nonlinear multi-agent control problems with embedded CBF safety layers. The experiments span different dynamics, safety constraints, and problem sizes, including settings with state dimensions exceeding $1000$ and control dimensions exceeding $100$. These results show that the proposed approach scales well beyond the dimensions considered in prior end-to-end safe control learning methods.

\end{enumerate}

Together, these results demonstrate that high-dimensional safety-critical feedback control can be learned end-to-end in regimes previously considered computationally prohibitive. 

\section{Background}

\subsection{Control Barrier Functions}

Consider the control-affine system~\eqref{eq:sys_dyn}, where $f$ and $g$ are locally Lipschitz. 
We drop the subscript in $z_x$ for ease of presentation. Let the safe set be defined by
\begin{equation}
\mathcal C := \{ z \in \mathbb{R}^n : h(z) \ge 0 \},
\end{equation}
where $h : \mathbb{R}^n \to \mathbb{R}$ is continuously differentiable. Control barrier functions (CBFs) enforce forward invariance of $\mathcal C$ by imposing a pointwise constraint on the control input.

For a control-affine system, the time derivative of $h$ along trajectories is
\begin{equation}
\dot h(z)
=
\nabla h(z)^\top f(t,z)
+
\nabla h(z)^\top g(t,z)u .
\end{equation}
A standard CBF condition requires the existence of an extended class-$\mathcal K$ function $\alpha$, i.e., a continuous, strictly increasing function satisfying $\alpha(0)=0$, such that
\begin{equation}
\dot h(z) + \alpha(h(z)) \ge 0,
\label{eq:cbf_condition}
\end{equation}
Importantly,~\eqref{eq:cbf_condition} is affine in the control action $u$.

\begin{thm}[Forward invariance under CBF constraints~\cite{ames2019control}]
Let function $h$ be continuously differentiable and let $\mathcal C$ be defined as above. 
Suppose $u(t)$ is locally Lipschitz, $z(0) \in \mathcal C$, and~\eqref{eq:cbf_condition} holds 
for all $t \ge 0$. Then $\mathcal C$ is forward invariant under $u(t)$. In particular,
$z(t) \in \mathcal{C}$ for all $t \ge 0$, as such the safety condition
$h(z(t)) \ge 0$ is maintained.
\end{thm}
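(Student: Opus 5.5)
The plan is to reduce forward invariance to a one-dimensional comparison argument for the scalar function $\eta(t) := h(z(t))$. First I would establish that the closed-loop trajectory exists and is unique. Since $f$ and $g$ are locally Lipschitz and $u$ is locally Lipschitz, the right-hand side $f(t,z)+g(t,z)u$ is locally Lipschitz. Picard--Lindelöf then gives a unique solution on a maximal interval $[0,\tau)$. I will argue invariance on this interval and take $\tau=\infty$ as implicit in the statement, which asks about all $t\ge 0$.

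Second, because $h$ is $C^1$ and $z(\cdot)$ is $C^1$, the chain rule shows that $\eta$ is $C^1$ with $\dot\eta(t)=\nabla h(z)^\top\big(f(t,z)+g(t,z)u(t)\big)$. This is exactly $\dot h$ as written above. Condition~\eqref{eq:cbf_condition} then becomes the scalar differential inequality
\begin{equation*}
\dot\eta(t) \ge -\alpha(\eta(t)), \qquad \eta(0)=h(z(0))\ge 0.
\end{equation*}

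Third, I would show $\eta(t)\ge 0$ for all $t$ by contradiction, avoiding any Lipschitz assumption on $\alpha$. Suppose $\eta(t_1)<0$ for some $t_1$. Let $t_0:=\sup\{t\le t_1:\eta(t)\ge 0\}$. By continuity, $\eta(t_0)=0$ and $\eta(t)<0$ on $(t_0,t_1]$. On that interval $\alpha(\eta(t))<0$, since $\alpha$ is strictly increasing with $\alpha(0)=0$. Hence $\dot\eta(t)\ge-\alpha(\eta(t))>0$ there. The mean value theorem then gives $\eta(t_1)-\eta(t_0)=\dot\eta(\xi)(t_1-t_0)>0$ for some $\xi\in(t_0,t_1)$, so $\eta(t_1)>0$. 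This contradicts $\eta(t_1)<0$. Therefore $z(t)\in\mathcal C$ for all $t$, which is forward invariance.

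The main obstacle is that standard comparison lemmas usually require $\alpha$ to be locally Lipschitz, which the definition of an extended class-$\mathcal K$ function does not provide. The sign-based contradiction in the third step sidesteps this, because it only uses monotonicity and $\alpha(0)=0$. The remaining care is to keep the argument on the existence interval of the solution.
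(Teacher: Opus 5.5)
Your argument is correct, and it takes a different route from the standard one. The paper gives no proof of its own; it imports the theorem from the CBF literature. The usual arguments there are one of two kinds.
\begin{enumerate}
\item \emph{Nagumo's theorem.} On $\partial\mathcal C$ one has $h=0$, so \eqref{eq:cbf_condition} reduces to $\dot h\ge 0$. This means the closed-loop vector field lies in the tangent cone of $\mathcal C$, which gives invariance provided $\nabla h\neq 0$ on $\partial\mathcal C$.
\item \emph{The comparison lemma.} It gives $h(z(t))\ge\beta(t)$, where $\dot\beta=-\alpha(\beta)$ and $\beta(0)=h(z(0))\ge 0$. This needs $\alpha$ to be locally Lipschitz, so that the scalar solution is unique and stays nonnegative.
\end{enumerate}
Your first-exit and mean-value argument uses only that $\alpha(r)<0$ for $r<0$. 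It therefore proves the statement exactly as the paper formulates it, which drops both the regularity condition on $\nabla h$ and any Lipschitz property of $\alpha$.

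The price is that you apply \eqref{eq:cbf_condition} at times where $h(z(t))<0$, i.e.\ outside $\mathcal C$. This is legitimate here, because the inequality is assumed along the whole trajectory for all $t\ge 0$. If it were assumed only on an open set $D\supset\mathcal C$, as in the source, you would shrink $t_1$ toward $t_0$ so that $z([t_0,t_1])\subset D$, and the same contradiction goes through.

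In exchange, the Nagumo route needs the inequality only on $\partial\mathcal C$. The comparison route gives a quantitative lower bound on $h(z(t))$, which also describes how trajectories starting with $h<0$ approach $\mathcal C$; a pure sign argument does not provide that.
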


In words, the CBF condition ensures that trajectories cannot leave the safe set by requiring the derivative of the safety function to be sufficiently nonnegative near the boundary of $\mathcal C$. Thus, safety can be enforced by restricting the control input to satisfy the constraint set consisting of the control-affine inequality induced by~\eqref{eq:cbf_condition}:
\begin{equation}
C(z) := \left\{ u \in \mathbb{R}^m : A(z)u \le b(z) \right\}.
\label{eq:Axbconstraint}
\end{equation}
During training, the safety filter maps the nominal neural feedback control $u_\theta(t,z)$ to its projection onto this feasible set:
\begin{equation}
u_\theta^\star(t,z)
=
P_{C(z)}(u_\theta(t,z))
=
\argmin_{u \in C(z)}
\frac12\norm{u-u_\theta(t,z)}^2 .
\label{eq:projection}
\end{equation}
Thus, $u^\star_\theta(t,z)$ is obtained by solving a CBF-based quadratic program (CBF-QP) computed at every time step, and for every trajectory that ensures that~\eqref{eq:cbf_condition} is satisfied. 
In this work we also employ High Order Control Barrier Functions (HOCBFs), the details of which we include in the Appendix.

\subsection{Davis--Yin Splitting and Jacobian-Free Backpropagation}
\label{subsec:DYS}

The CBF-QP safety filter in~\eqref{eq:projection} must be evaluated and differentiated through at every time step of every sampled trajectory during training. Although each projection is often inexpensive at each step, stacking them over time can add to significant increase in computation, more importantly, backward differentiating through the QP can become a major computational bottleneck in high-dimensional problems.
\subsubsection{Davis--Yin Splitting}
To make this step scalable, we follow~\cite{mckenzie2023differentiating} and solve the CBF-QP using Davis--Yin splitting (DYS)~\cite{davis2017three}, which represents the projection through the fixed point of a splitting operator. We then use Jacobian-Free Backpropagation (JFB)~\cite{fung2022jfb, yin2022learning, knutson2026logical} to differentiate through this fixed-point representation without backpropagating through all solver iterations or solving the linear systems required by classical implicit differentiation~\cite{bai2019deep,lorraine2020optimizing,el2021implicit, fung2026generalization}. This allows the CBF-QP layer to be embedded efficiently within end-to-end feedback-control training. 

In particular, we first introduce slack variables $s \in \mathbb{R}^c$ for the $c$ inequality constraints and let $y=(u,s) \in \mathbb{R}^{m+c}$. Then the inequality~\eqref{eq:Axbconstraint} can be written as
\begin{equation}
A(z)u+s=b(z), \qquad s \ge 0.
\label{eq:slack_eq}
\end{equation}
This gives the lifted feasible set
\begin{equation}
\widetilde C(z) := C_1 \cap C_2(z),
\end{equation}
where
\begin{equation}
\begin{aligned}
C_1 &:= \left\{(u,s): s \ge 0 \right\}, \\
C_2(z) &:= \left\{(u,s): A(z)u+s=b(z) \right\}.
\end{aligned}
\end{equation}
Both projections are simple: $P_{C_1}$ is obtained by thresholding the slack variables, while $P_{C_2(z)}$ is projection onto an affine subspace.
Let
\begin{equation}\label{eq:Q_def}
Q_u = [I , 0] \in \mathbb{R}^{m \times (m+c)}
\end{equation}
denote the matrix that extracts the control component from $y=(u,s)$. The projection problem~\eqref{eq:projection} is equivalently written in lifted variables as
\begin{equation}
y_\theta^\star
=
\argmin_{y \in C_1 \cap C_2(z)}
\frac12 \norm{Q_u y-u_\theta(t,z)}^2,
\ \ 
u_\theta^\star(t,z)=Q_u y_\theta^\star.
\end{equation}
For a step size $\zeta>0$, DYS defines the fixed-point operator
\begin{equation}
\begin{aligned}
&T_\theta(y;z)
=
y-P_{C_1}(y)  \\
&+
P_{C_2(z)}
\left(
2P_{C_1}(y)-y
-\zeta Q_u^\top
\big(
Q_uP_{C_1}(y)-u_\theta(t,z)
\big)
\right)
\end{aligned}
\label{eq:DYS_operator}
\end{equation}
where $u_\theta(t,z)$ is the nominal control. The safety-filtered control is recovered from a fixed point
\begin{equation}
y^\star = T_\theta(y^\star;z)
\label{eq:fixed_point}
\end{equation}
by applying a final $C_1$ projection
\begin{equation}
u_\theta^\star(t,z)=Q_uP_{C_1}(y^\star)=Q_u y^\star.
\label{eq:filtered_control_DYS}
\end{equation}
The final equality holds because $P_{C_1}$ only modifies the slack variables and leaves the control component unchanged.

The principal advantage of DYS lies in its ability to decompose a complex optimization problem into inexpensive projection and gradient evaluations~\cite{ryu2022large}. By avoiding the repeated solution of large coupled KKT systems, DYS is particularly attractive for large-scale optimization and scales naturally to multi-agent problems with hundreds or even thousands of state variables.

\subsubsection{Jacobian-Free Backpropagation}

The DYS formulation provides an efficient fixed-point method for computing the CBF-QP projection. However, differentiating through this fixed point remains costly as one must either backpropagate through all DYS iterations or use implicit differentiation, which requires solving a large linear system accurately at each time step. Indeed, differentiating~\eqref{eq:fixed_point} with respect to $\theta$ gives
\begin{equation}
\frac{d y_\theta^\star}{d\theta}
=
\frac{\partial T_\theta}{\partial y}
\frac{d y_\theta^\star}{d\theta}
+
\frac{\partial T_\theta}{\partial \theta},
\end{equation}
where the partial derivatives are evaluated at $(y_\theta^\star;z)$. Rearranging the terms yields
\begin{equation}
\frac{d y_\theta^\star}{d\theta}
=
\left(I -
\frac{\partial T_\theta}{\partial y}
\right)^{-1}
\frac{\partial T_\theta}{\partial \theta}.
\label{eq:implicit_diff_formula}
\end{equation}
In high-dimensional problems, evaluating this expression can dominate the cost of training, as this must be done \emph{at each time step and for each trajectory with high accuracy}.

To avoid this bottleneck, we use Jacobian-Free Backpropagation (JFB)~\cite{fung2022jfb, heaton2021feasibility, heaton2023explainable}, which replaces the inverse term in~\eqref{eq:implicit_diff_formula} by an identity:
\begin{equation}
\frac{d y_\theta^\star}{d\theta}
\approx
\frac{\partial T_\theta}{\partial \theta}.
\label{eq:jfb}
\end{equation} 
Thus, JFB avoids both solver unrolling and implicit linear solves, reducing the per-time-step gradient cost for each trajectory from $\mathcal{O}((m+c)^3 + (m+c)^2p)$ to $\mathcal{O}((m+c)p)$ where $p$ is the total network parameter count. The cost of factorizing a dense matrix and performing the subsequent matrix multiplication is incurred at every time step and scales linearly with both the batch size and the trajectory length. In contrast, JFB requires only gradients with respect to the trainable parameters, making the computational savings substantial.
Notably, while~\cite{mckenzie2023differentiating} establishes descent for a single QP layer, our setting embeds CBF-QPs throughout semi-global optimal-control trajectories; our analysis accounts for this repeated, trajectory-level composition.

\section{End-to-End Training with Embedded CBF-QP Safety Filters Along Trajectories}

We now describe the full semi-global optimal-control training problem with embedded CBF-QP safety filters. The training objective is
\begin{subequations}
\begin{align}
\min_\theta \; \mathbb{E}_{x\sim\rho} \; J_x(\theta)
&:=
\int_0^T L(t,z_x,u^\star_\theta)\,dt + \omega G(z_x(T)), \label{eq:training_problem1}\\
\text{s.t.}\quad
\dot z_x = f(t,z_x)& + g(t,z_x)u^\star_\theta, \quad z_x(0)=x, \label{eq:training_problem2}\\
u^\star_\theta &= Q_u y_\theta^\star, \quad
y_\theta^\star = T_\theta(y_\theta^\star; z_x), \label{eq:training_problem3}
\end{align}
\end{subequations}
where $T_\theta$ is the DYS operator defined in~\eqref{eq:DYS_operator}. The fixed-point equation in~\eqref{eq:training_problem3} computes the CBF-QP projection at each state along the trajectory, which ensures that the filtered control satisfies the CBF constraints during both training and deployment. As previously mentioned, the semi-global nature of the formulation comes from optimizing over a distribution of initial conditions $x\sim\rho$, rather than a single trajectory~\cite{onken2021neural, vidal2023taming, Verma-HJB-RL, meng2026recent}. This differs from trajectory-specific differentiable optimal-control approaches~\cite{jin2020pontryagin,jin2021safe} and is essential for learning feedback controllers that generalize across initial states. The main computational challenge is that each sampled trajectory requires many CBF-QP solves, each of which must be differentiated through during training. Alg.~\ref{alg:dys_training} summarizes the resulting DYS-JFB training procedure, Despite the computational advantage JFB can be implemented trivially with minimal code changes.
The remainder of this section analyzes the convergence of this trajectory-level training scheme, which accounts for both the embedded CBF-QP projections and the JFB gradient approximation. To the best of our knowledge this is the first convergence proof for policy training with embedded nonsmooth projection layers. 

\begin{algorithm}[t]
\caption{End-to-End Training with DYS-JFB Scheme}
\label{alg:dys_training}
\begin{algorithmic}[1]
\State \textbf{Input:} learning rate $\eta$, horizon $T$, step size $\Delta t$, total time steps $N_t$, DYS operator $T_\theta$, batch size $B$, initial state distribution $\rho$, policy network $u_\theta$
\State \textbf{Initialize} policy parameters $\theta$
\While{not converged}
    \State Sample $\{x_i\}_{i=1}^B \sim \rho$
    \State $J \gets 0$
    \For{$i = 1,\dots,B$}
        \State $z_0^{(i)} \gets x_i$
        \For{$k = 0,\dots,N_t-1$}
            \State $t_k \gets k\Delta t$
            \State $u_\theta^{\mathrm{nom}} \gets u_\theta(t_k,z_k^{(i)})$
            \State Initialize $y$
            \State \texttt{stop-gradient}
            \While{DYS not converged}
                \State $y \gets T_\theta(y;z_k^{(i)},u_\theta^{\mathrm{nom}})$
            \EndWhile
            \State $y \gets T_\theta(y;z_k^{(i)},u_\theta^{\mathrm{nom}})$ \Comment{JFB update \& enable-gradient}
            \State $u_k^\star \gets Q_u y$
            \State $z_{k+1}^{(i)} \gets z_k^{(i)} + \Delta t \big(f(t_k,z_k^{(i)}) + g(t_k,z_k^{(i)})u_k^\star\big)$
            \State $J \gets J + \Delta t \, L(t_k,z_k^{(i)},u_k^\star)$
        \EndFor
        \State $J \gets J + \omega G(z_{N_t}^{(i)})$
    \EndFor
    \State $J \gets J/B, \quad \theta \gets \theta - \eta \nabla_\theta J$
\EndWhile
\State \textbf{Output:} trained policy network $u_\theta (t, z)$
\end{algorithmic}
\end{algorithm}

\subsection{Convergence Analysis}
The convergence analysis is complicated as a result of (i) the nonsmoothness of the DYS operator and (ii) the biased stochastic gradients induced by JFB. Importantly we note that the safety layer involves a projection onto the nonnegative slack variables, so $T_\theta$ need not be differentiable everywhere. We therefore use Clarke generalized Jacobians to handle the nonsmooth dependence of the fixed-point map.

\begin{definition}[Clarke Generalized Jacobian~\cite{clarke1990optimization}]
\label{def:Clarke}
Let $F : \mathbb{R}^n \to \mathbb{R}^m$ be locally Lipschitz. The Clarke generalized Jacobian of $F$ at $x$ is
\begin{equation}
\begin{split}
\partial^C F(x)
=
\mathrm{conv}
\Big\{
\lim_{k\to\infty} JF(x_k)
\; \Big| \;
x_k \to x,\;
\\
F \text{ is differentiable at } x_k
\Big\},
\end{split}
\end{equation}
where $JF(x_k)$ denotes the classical Jacobian and $\mathrm{conv}(\cdot)$ denotes the convex hull. Any matrix $M \in \partial^C F(x)$ is called a generalized Jacobian of $F$ at $x$. When $m=1$, $\partial^C F(x)$ is the Clarke subdifferential, and any $v \in \partial^C F(x)$ is called a Clarke subgradient.
\end{definition}

\paragraph{JFB approximation of Generalized Jacobian} When $T_{\theta}$ is not differentiable with respect to $y$ at certain points, the implicit differentiation \eqref{eq:implicit_diff_formula} can be realized in the Clarke sense. Specifically, by the Clarke chain rule~\cite[Corollary~2.6.6]{clarke1990optimization},
\begin{equation}
\partial_\theta y_\theta^\star
 \in \left\{
\left(I -
M_y
\right)^{-1}
\frac{\partial T_{\theta}}{\partial \theta} \middle| M_y \in {\partial}_y^C T_{\theta} \right\} =: \partial_\theta^C y_\theta^\star .
\label{eq:clarke_restricted_implicit_diff_formula}
\end{equation}

To avoid the computational bottleneck of matrix inversion in~\eqref{eq:clarke_restricted_implicit_diff_formula}, we employ JFB and use the approximate Jacobian defined in~\eqref{eq:jfb}.
Interestingly, by using this approximation, we get rid of the generalized Jacobian. 

Consider the trajectory cost $J_x(\theta)$ defined in
\eqref{eq:training_problem1}--\eqref{eq:training_problem3}. The Clarke subdifferential of the trajectory cost at $x$ and $t$ is given by
\begin{align}\label{eq:integrand_definitions}
\begin{split}
& \left\{ \left({\partial_\theta {u_\theta^\star}(t,z_x(t))}\right)^\top
h_{\theta,x}(t) \middle| {\partial_\theta {u_\theta^\star}(t,z_x(t))} \in {\partial^C_\theta {u_\theta^\star}(t,z_x(t))} \right\} \\
&= \left\{ \left({\partial_\theta {y_\theta^\star}(t,z_x(t))}\right)^\top
Q_u^\top
h_{\theta,x}(t) \middle| {\partial_\theta {y_\theta^\star}(t,z_x(t))} \in {\partial^C_\theta {y_\theta^\star}(t,z_x(t))} \right\} \\
&= \left\{ \left(\left(I -
M_y
\right)^{-1}
\frac{\partial T_{\theta}}{\partial \theta}\right)^\top
Q_u^\top h_{\theta,x}(t) \middle| M_y \in { {\partial}_y^C} T_{\theta}  \right\} \\
&=: \partial^C_\theta J_{x, t}(\theta).
\end{split}
\end{align}
This expression follows from the Clarke chain rule~\cite[Theorem~2.6.6]{clarke1990optimization}. The first equality is a direct consequence of the linear relationship between $u_\theta^\star$ and $y_\theta^\star$ established in~\eqref{eq:training_problem3}. The second equality uses~\eqref{eq:clarke_restricted_implicit_diff_formula}. Moreover, \begin{align*}
h_{\theta,x}(t)
&=
\nabla_u L\!\big(t,z_x(t),u^\star_\theta(t,z_x(t))\big) +
g(t, z_x(t))^\top p_x(t)
\end{align*}
with $g$ defined in~\eqref{eq:original_problem}. $h_{\theta,x}(t)$
is the gradient of the Hamiltonian with respect to the control input,
where $p_x(t)$ is the adjoint variable satisfying the adjoint
equation~\cite{evans1983introduction}.

In contrast to the set-valued Clarke subdifferential, the corresponding JFB approximation is 
\begin{align}\label{eq:integrand_definitions_JFB}
\begin{split}
w_{\theta,x}(t)
&=
\left(\frac{\partial T_\theta}{\partial\theta}(y_\theta^\star(t,z_x(t));z_x(t))\right)^\top
Q_u^\top
h_{\theta,x}(t).
\end{split}
\end{align}
Note that the approximation~\eqref{eq:integrand_definitions_JFB} not only avoids the expensive matrix inversion in the true Clarke subdifferential~\eqref{eq:integrand_definitions}, which is required at each discretized time step and each sample trajectory during training, but also replaces the set-valued Clarke subdifferential with a single-valued function.
Thus, for each $x$, the JFB approximated gradient of the trajectory cost  $J_x(\theta)$ is given by
\begin{align}
d^{\mathrm{JFB}}_x(\theta)
= \int_0^T w_{\theta,x}(t)\, dt.
\label{eq:full_derivatives}
\end{align}

\subsection{Assumptions} 
We first state the main assumptions, which are largely standard in control and optimization theory.
\begin{assumption}[Well-conditioned CBF function]\label{assm:fullrank_A} 
     The matrix $A(z) \in \mathbb{R}^{c \times m}$ has full row rank for all $z$.
    \end{assumption}

\begin{assumption}[Compactness]\label{assm:compactness} 
    The domains of $x$, $z$ and $\theta$ are compact. 
\end{assumption}
Assumptions~\ref{assm:fullrank_A}--\ref{assm:compactness} are essential in establishing~\Cref{thm:contraction} and \Cref{cor:contraction}: the DYS operator is contractive with respect to $y$, hence the fixed point algorithm always converges.

\begin{assumption}[Smoothness] The following holds.
        \begin{enumerate}
            \item\label{assm:smooth} $T_{\theta}$ is $C^1$ with respect to $(\theta, z)$ and Lipschitz continuous in $(y, z)$ uniformly in $\theta$. Furthermore, $F(\theta)=\mathbb{E}_x[J_x(\theta)]$ 
            is Lipschitz continuous with respect to $\theta$ and is bounded from below.
            \item\label{assm:bounded}  ${d^{\mathrm{JFB}}_x(\theta)}$ is uniformly bounded over $x$ and $\theta$.
            \item\label{assm:continuous}  $u_\theta^\star(t, z_x)$ is continuous with respect to $t$ and $x$ for all $\theta$. 
        \item\label{assm:lipschitz} there exists an integrable function $k(t, x)$ with $\mathbb{E}_x [\int_0^T k(t, x) dt] < \infty$ such that $\forall \, x$ and $t$, $u_\theta^\star(t, z_x)$ is $k(t, x)$-Lipschitz continuous with respect to $\theta$. 
        \end{enumerate}  
        \label{assumption:T}
    \end{assumption}
    Here, Assumption~\ref{assumption:T} generally holds for the optimal control problems~\eqref{eq:training_problem1}--\eqref{eq:training_problem3} we consider.
    Assumptions~\ref{assumption:T}.\ref{assm:smooth}--\ref{assumption:T}.\ref{assm:bounded} are standard regularity conditions for proving convergence of the training algorithm; see also~\cite[Assumption~4.1]{gelphman2026convergence}. Assumptions~\ref{assumption:T}.\ref{assm:continuous}--\ref{assumption:T}.\ref{assm:lipschitz} let us differentiate the objective~\eqref{eq:training_problem1} through its integrals over time $t$ and initial condition $x$. Roughly speaking, by~\cite[Theorem~2.7.2]{clarke1990optimization}, they let us move $\partial_\theta$ inside the integrals. This is an essential property to establish convergence; see the proof of~\Cref{thm:expect_suff_descent} for more details.

\begin{assumption}[Conditioning and coupling of the JFB integrand matrices]\label{assumption:M} Let $M^u_\theta \coloneqq Q_u \frac{\partial T_{\theta}}{\partial \theta}(y_\theta^\star(t, z); z)
\in \mathbb{R}^{m \times p}$ denote the Jacobian
corresponding to the control $u$.
We assume that for all $\theta, t, z$:
\begin{enumerate}
    \item[(i)] \emph{(rank)} $M^{u}_{\theta}  \in \mathbb{R}^{m \times p}$ has full row rank; write $G_\theta \coloneqq M^{u}_{\theta} (M^{u}_{\theta})^\top \in \mathbb{R}^{m \times m}$, $G_\theta \succ 0$, and let $\sigma_{+}^2 \ge \sigma_{-}^2 > 0$ denote its largest and smallest eigenvalues;
    \item[(ii)] \emph{(conditioning)} $\exists \gamma_\mathrm{eff}>0$, s.t. $\displaystyle \kappa(G_\theta) = \frac{\sigma_{+}^2}{\sigma_{-}^2} < \frac{1}{\gamma_{\mathrm{eff}}}$;
    \item[(iii)] \emph{(weak coupling, informal)} the contribution of the slack $s$ to the alignment $\norm{\Xi_\theta}$ between the true and JFB gradients, is uniformly bounded by an explicit threshold $\rho_\theta$.
\end{enumerate}
\end{assumption}

    Assumption~\ref{assumption:M} is modified from standard assumptions in the JFB literature~\citep{fung2022jfb,gelphman2026convergence}, which typically requires the Jacobian $\frac{\partial T_{\theta}}{\partial \theta}(y^\star; z)$ in the JFB approximation~\eqref{eq:integrand_definitions_JFB} to be sufficiently well-conditioned. Part (iii) is stated informally here for brevity; the precise condition is given in the Appendix.
    
    \begin{assumption}[Variance Bound]
        \label{assumption:expectation_integrand_inner_product}
        $\forall \theta, t, x$, and measurable selection $v_{\theta, x}(t) \in \partial_\theta^C J_{x, t}(\theta)$, $\exists 0 < \delta_{var} < \frac{\rho_\theta - \norm{\Xi_\theta}}{1-\gamma}$ such that 
        \begin{equation}
        \max(\sqrt{\text{Var}_x[v_{\theta,x}(t)]}, \sqrt{\text{Var}_x[w_{\theta,x}(t)]})^2 \leq \delta_{var} \norm{ \mathbb{E}_x \left[ h_{\theta,x}\right]}^2,
        \label{eq:variance_vw}
        \end{equation}
        where $w_{\theta,x}$ is defined in~\ref{eq:integrand_definitions_JFB}, $\rho_\theta$ is defined in \eqref{eq:Xi_bound} 
        in the Appendix, $\gamma$ is the contraction constant in Corollary~\ref{cor:contraction}.
    \end{assumption}
    Assumption~\ref{assumption:expectation_integrand_inner_product}  is needed to establish the descent condition in~\Cref{lemma:expected_inner_product}: the mean subgradient $\mathbb{E}_x[v_{\theta, x}]$ and the mean JFB integrand $\mathbb{E}_x[w_{\theta, x}]$ have nonnegative inner product. It does so by controlling the variance of the sample-wise integrands $v_{\theta, x}$ and $w_{\theta, x}$. This assumption is similar to variance bounds for standard convergence guarantees of SGD; see, e.g.,~\cite{bottou2018optimization}.

    \begin{assumption}
    For all $\theta, x,$ $t$, and measurable selection $v_{\theta, x}(t) \in \partial_\theta^C J_{x, t}(\theta)$, the following hold.
    \begin{enumerate}
        \item Each element in the vectors $v_{\theta,x}, w_{\theta,x}$ is integrable on $[0,T]$ with respect to $t$. Moreover, $v_{\theta,x},w_{\theta,x}$ are integrable on $[0,T] \times \Omega$, where $\Omega$ is the sample space of the distribution of initial conditions, $\rho$.
        \item $\exists  \ \delta_v, \delta_w, a_v, a_w \geq 0 \text{ and }\epsilon_v > 0$ such that
        \begin{align*}
        &\|\mathbb{E}_x[v_{\theta,x}(t) - C_v]\|_2 \leq 
        a_v +\delta_v \inf_{\phi \in \partial^C_\theta \mathbb{E}_x [J_x(\theta)]} \norm{\phi} , \\
        &\|\mathbb{E}_x[w_{\theta,x}(t) - C_w]\|_2 \leq a_w + \delta_w \left\| \mathbb{E}_x[\dJFB]\right\|_2,
        \end{align*}
        and
        \begin{align*}
        &\max(a_v + \delta_v \inf_{\phi \in \partial^C_\theta \mathbb{E}_x [J_x(\theta)]} \norm{\phi} , a_w + \delta_w \left\| \mathbb{E}_x[\dJFB]\right\|_2)^2 \\
        & \leq \delta_{v, \theta}^{2} - \frac{\epsilon_v}{T^2} \inf_{\phi \in \partial^C_\theta \mathbb{E}_x [J_x(\theta)]} \norm{\phi}^2, 
        \end{align*}
        where $w_{\theta,x}$ is defined in~\eqref{eq:integrand_definitions_JFB}, $C_v = \frac{1}{T} \int_0^T v_{\theta,x}(t) dt$, $C_w = \frac{1}{T} \int_0^T w_{\theta,x}(t) dt$, and $\delta_{v, \theta} := \sqrt{\frac{\rho_\theta - \norm{\Xi_\theta}}{1-\gamma} - \delta_{var}} \norm{ \mathbb{E}_x \left[ h_{\theta,x}\right]}$.
    \end{enumerate}
\label{assumption:expectation_timeaverage}
\end{assumption}
Assumption~\ref{assumption:expectation_timeaverage} controls the deviation of the sample-wise integrands $v_{\theta, x}$ and $w_{\theta, x}$ from their time averages and is a standard assumption for optimal control problems; see, e.g.,~\cite{gelphman2026end} and can be numerically supported in Figure~\ref{fig:theory_validation} in the Appendix. It establishes \Cref{thm:expect_suff_descent}: the JFB update $\mathbb{E}_x[d_x^{JFB}(\theta)]$ is a descent direction.

\begin{table*}[t]
\centering
\small
\begin{tabular}{l c c c c c}
\toprule
\makecell{Problem \\ (\# agents, \# obstacles)} & 
\makecell{dims \\ $(n,m)$} &
\makecell{DYS-JFB \\ \textbf{(Ours)}} & 
DYS-AD & 
\makecell{CVXPYLayers \\ (IFT)} & 
\makecell{CVXPYLayers \\ (IFT + reg.)} \\
\midrule
\makecell{Single Integrator \\(50, 2) } & $(150,150)$ & (82.28, 7.49e-1) & (117.70, 1.50e1) & (75.87, 1.28e-1) & (76.28, 1.24e-1) \\
\makecell{Double Integrator\\(1,3)} &
(4,2) &
(0.14, 4.91e-3) &
(0.36, 1.01e0) &
(0.23, 1.25e-3) &
(0.33, 6.33e-4) \\
\makecell{Double Integrator \\ (6,3)}  & $(24,12)$     & (5.19, 1.32e-2) & (5.92, 1.06e-1) & (5.46, 6.16e-4) & (5.39, 5.82e-4) \\
\makecell{Quadcopter \\(5,3)}        & $(60,20)$     & (23.14, 1.23e-1) & (23.75, 6.57e-2) & -- & (25.40, 3.15e-2) \\
\makecell{Quadcopter \\(30,3)}       & $(360,120)$   & (172.07, 1.82e-1) & (173.60, 8.88e-1) & -- & -- \\
\makecell{Quadcopter \\(100,3)}       & $(1200,400)$   & (795.33, 1.66e0) & -- & -- & -- \\
\bottomrule
\end{tabular}
\caption{
Results across system configurations. $(n,m)$ denotes the state and control dimensions. We report both the average running cost $L$ and terminal cost $G$ over random initial states (both lower the better). 
``--'' indicates training divergence due to memory/runtime blow-up or numerical instability.
\textbf{Our method is the only one which successfully solves all the problems.}
}
\label{tab:results_obj}
\end{table*}

\subsection{Convergence Results}
For brevity, we present only the main results here and defer all proofs to the Appendix. The analysis proceeds in three
steps: (i) contraction of the DYS operator, (ii) descent direction, and (iii) convergence.

\begin{restatable}[Contraction of $T_\theta$ with respect to $y$]{cor}{corcontraction}\label{cor:contraction}
Under Assumptions~\ref{assm:fullrank_A} and~\ref{assm:compactness},
and if the DYS hyperparameter $\zeta \in (0, 1)$, then there exist $\gamma \in (0,1)$ such that for any $M_y \in \partial^C_y T_{\theta}(y; z)$,
\begin{equation}
        \norm{M_y} \leq \gamma,
    \end{equation}
for all $\theta$, $y$ and $z$.
\end{restatable}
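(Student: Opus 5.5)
The plan is to deduce the statement from a uniform Lipschitz (contraction) bound on $y\mapsto T_\theta(y;z)$, using a general fact about Clarke Jacobians. If $F$ is $\gamma$-Lipschitz on an open set, then at every point $y_k$ where $F$ is differentiable, $\norm{JF(y_k)}\le\gamma$. This holds because $JF(y_k)d=\lim_{\epsilon\to0}(F(y_k+\epsilon d)-F(y_k))/\epsilon$ and the difference quotient has norm at most $\gamma\norm{d}$. The operator-norm ball $\{M:\norm{M}\le\gamma\}$ is closed, so limits of such Jacobians stay in it. It is also convex, so the convex hull in Definition~\ref{def:Clarke} stays in it. Hence every $M_y\in\partial^C_yT_\theta(y;z)$ satisfies $\norm{M_y}\le\gamma$. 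So the whole task is to show that $T_\theta(\cdot;z)$ is a $\gamma$-contraction with $\gamma<1$ independent of $(\theta,z)$. This is the content of \Cref{thm:contraction}, which I take as the main ingredient and sketch below.

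\textbf{Step 1: reduce to a strongly convex problem.} By Assumption~\ref{assm:fullrank_A}, $C_2(z)$ is the graph $s=b(z)-A(z)u$. The objective $\frac12\norm{Q_uy-u_\theta}^2$ is therefore strongly convex on $C_2(z)$: for $y,y'\in C_2(z)$ one has $\norm{y-y'}^2\le(1+\norm{A(z)}^2)\norm{Q_u(y-y')}^2$. The modulus is thus at least $\mu(z)=1/(1+\norm{A(z)}^2)$. The gradient $Q_u^\top(Q_uy-u_\theta)$ is $1$-Lipschitz, since $\norm{Q_u}=1$.

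\textbf{Step 2: apply the Davis--Yin linear rate.} Write $T_\theta$ in~\eqref{eq:DYS_operator} in the standard form $I-P_{C_1}+P_{C_2}\circ(2P_{C_1}-I-\zeta\nabla f\circ P_{C_1})$. Here $P_{C_1}$ and $P_{C_2(z)}$ are firmly nonexpansive, and $\zeta\nabla f$ is $\zeta$-cocoercive-type with $\zeta<1<2$. This is the regime of~\cite[Thm.~3.3]{davis2017three}: if the smooth term plus the indicator of $C_2$ is strongly monotone, the DYS operator is a strict contraction. The factor has the form $\gamma(z)=\sqrt{1-c\,\zeta\mu(z)(1-\zeta/2)}<1$, or a similar expression. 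I would verify this directly by expanding $\norm{T_\theta y-T_\theta y'}^2$, using firm nonexpansiveness of both projections and the cocoercivity inequality $\inner{\nabla f(a)-\nabla f(b)}{a-b}\ge\norm{\nabla f(a)-\nabla f(b)}^2$. The strong-convexity gain then enters through the component lying in $C_2$ directions. Note that $u_\theta$ only shifts $\nabla f$, so the constant does not depend on $\theta$.

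\textbf{Step 3: make the constant uniform.} The only $z$-dependence is through $\norm{A(z)}$, together with $\sigma_{\min}(A(z))>0$ where needed. By Assumption~\ref{assm:compactness} and continuity of $A$, $\sup_z\norm{A(z)}<\infty$, so $\gamma:=\sup_z\gamma(z)<1$.

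\textbf{Main obstacle.} The hard part is Step~2. The objective is not strongly convex in the lifted variable $y$, because it is flat in $s$, and the $C_1$ step acts only on $s$. So the standard Davis--Yin contraction theorem does not apply verbatim. One must show that the strong convexity available on $C_2(z)$ is enough to contract the full iterate, including the $y-P_{C_1}(y)$ component. My first attempt would be to work in a $z$-dependent change of coordinates that decomposes $\mathbb{R}^{m+c}$ into $\mathrm{null}([A\ I])$ and its complement. On the complement, $P_{C_2}$ annihilates differences. On the null space, I would use the strong-convexity estimate. If a full-space contraction fails, I would fall back to proving the bound for $\norm{M_y}$ directly, using the explicit piecewise-linear structure of $P_{C_1}$: its Clarke Jacobians are the diagonal matrices $\mathrm{diag}(I,D)$ with $D\in[0,I]$. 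I would then bound the resulting matrices $I-\mathrm{diag}(I,D)+P_{\mathrm{null}}\bigl(2\,\mathrm{diag}(I,D)-I-\zeta Q_u^\top Q_u\,\mathrm{diag}(I,D)\bigr)$ uniformly over $D$ by compactness of the set of such $D$.
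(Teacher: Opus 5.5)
There is a genuine gap: Step~2 of your main route is false as stated, and your fallback stops before the step that actually proves the bound.

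Your opening reduction is correct: a uniform Lipschitz constant $\gamma$ bounds every classical Jacobian, and therefore every limit and convex combination of them. The problem is Step~2. Strong convexity of $\frac12\norm{Q_uy-u_\theta}^2$ on $C_2(z)$ does not make the DYS operator a contraction. No factor depending only on $\zeta$ and $\norm{A(z)}$ can work. Note that your Steps~1--2 never actually use Assumption~\ref{assm:fullrank_A}.

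Concretely, take $m=1$, $c=2$, $A=(1,1)^\top$. Then $\mathcal{H}_{2,z}=\mathrm{span}\{(1,-1,-1)\}$ and $v=(0,1,-1)\perp\mathcal{H}_{2,z}$. At any $y$ with both slacks negative, replacing $y$ by $y+\epsilon v$ has the following effects:
\begin{itemize}
\item $P_{C_1}(y)$ is unchanged;
\item $y-P_{C_1}(y)$ shifts by $\epsilon v$;
\item the argument of $P_{C_2(z)}$ shifts by $-\epsilon v$, which $P_{\mathcal{H}_{2,z}}$ annihilates.
\end{itemize}
Hence $T_\theta(y+\epsilon v;z)-T_\theta(y;z)=\epsilon v$, even though your restricted objective is strongly convex with modulus $1/3$.

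More generally, at such a point the Jacobian acts on slack-only vectors $p=(0,\beta)$ as $P_{\mathcal{H}_{2,z}^\perp}$. Since $\norm{P_{\mathcal{H}_{2,z}^\perp}p}^2=\beta^\top(A(z)A(z)^\top+I)^{-1}\beta$, the Jacobian norm is at least $(1+\sigma_{\min}(A(z))^2)^{-1/2}$, where $\sigma_{\min}$ is the smallest of the $c$ singular values. This tends to $1$ as $\sigma_{\min}(A(z))\to 0$ with $\norm{A(z)}$ bounded, even under full row rank.

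The reason is that strong convexity only pins down the primal point $P_{C_1}(y^\star)$. The component $y-P_{C_1}(y)$ encodes multipliers, and contracting it is a constraint-qualification property. That is exactly where full row rank enters, since it gives $\mathcal{H}_{1,y}^\perp\cap\mathcal{H}_{2,z}^\perp=\{0\}$. The uniform constant must therefore come from $\inf_z\sigma_{\min}(A(z))>0$ via compactness, not from $\sup_z\norm{A(z)}$.

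Your fallback is the right route and is essentially the paper's. As written, though, it omits the actual content. Compactness of the set of $D$ and of the $z$-domain only shows the supremum is attained, not that it is below $1$; you need a pointwise strict bound. The matrix is affine in $D$ and the norm is convex, so it suffices to treat vertices $D\in\{0,1\}^c$, i.e.\ Jacobians at differentiable points. There the matrix is $M=P_{\mathcal{H}_{2,z}^\perp}P_{\mathcal{H}_{1,y}^\perp}+P_{\mathcal{H}_{2,z}}(I-\zeta Q_u^\top Q_u)P_{\mathcal{H}_{1,y}}$, where $\mathcal{H}_{1,y}$ is spanned by the control coordinates and the slack coordinates with $y_{m+i}>0$.

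The two terms have orthogonal ranges, so $\norm{Mp}^2$ splits into two pieces:
\begin{itemize}
\item The first term is at most $\norm{P_{\mathcal{H}_{1,y}^\perp}p}$, strictly when that is nonzero. This holds because full row rank gives $\mathcal{H}_{1,y}^\perp\cap\mathcal{H}_{2,z}^\perp=\{0\}$, hence a positive principal angle.
\item The second term is at most $\norm{P_{\mathcal{H}_{1,y}}p}$, strictly when that is nonzero. This follows from the factor $1-\zeta$ on the control block and the fact that $P_{\mathcal{H}_{2,z}}$ strictly shortens every nonzero slack-only vector.
\end{itemize}
Hence $\norm{Mp}<\norm{p}$. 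Finitely many sign patterns, together with compactness and continuity in $z$, then give a uniform $\gamma<1$, and limits and convex hulls preserve it. Without this splitting argument, your fallback only restates the claim rather than proving it.
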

Corollary~\ref{cor:contraction} guarantees that the DYS
operator~\eqref{eq:DYS_operator} is contractive in $y$, uniformly in
$\theta$ and $z$, which establishes the descent conditions that follow. 
Crucially, \textbf{the bound holds for every generalized Jacobians} $M_y \in \partial^C_y T_\theta(y;z)$, extending the theory in the literature. In \cite{gelphman2026convergence}, this condition is only assumed. In \cite{mckenzie2023differentiating}, this condition is proved only for a single selection of generalized Jacobian. We prove it  for all via new technical tool detailed in the Proof of Contraction in the Appendix.

\begin{restatable}{lem}{lemexpected}\label{lemma:expected_inner_product}
    Under Assumptions~\ref{assm:fullrank_A}--\ref{assumption:expectation_timeaverage}, for any selection of Clarke subgradient $v_{\theta, x}(t) \in \partial^C_\theta J_{x, t}(\theta)$ that is measurable with respect to $x$ and $t$,
    \begin{equation}\label{eq:expect_descent}
    \inner{\mathbb{E}_x[v_{\theta,x}]}{\mathbb{E}_x[w_{\theta,x}]} \geq \delta_{v, \theta}^2 \geq 0, \quad \forall  t, \theta.
    \end{equation}
\end{restatable}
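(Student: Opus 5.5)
Our plan is to reduce the inner product of expectations to a \emph{per-sample} alignment statement plus a variance correction. For fixed $\theta,t$, write $v_{\theta,x}(t) = (M^u_\theta)^\top{}' h_{\theta,x}$ schematically. More precisely, $v_{\theta,x}(t) = \big((I-M_y)^{-1}\tfrac{\partial T_\theta}{\partial\theta}\big)^\top Q_u^\top h_{\theta,x}(t)$ for some $M_y\in\partial^C_y T_\theta$, and $w_{\theta,x}(t) = \big(\tfrac{\partial T_\theta}{\partial\theta}\big)^\top Q_u^\top h_{\theta,x}(t)$. We expand $(I-M_y)^{-1} = I + \sum_{k\ge1} M_y^k$, which is legitimate by Corollary~\ref{cor:contraction} since $\norm{M_y}\le\gamma<1$, so the tail has norm at most $\gamma/(1-\gamma)$.

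The leading term gives $\inner{v}{w} \approx h^\top M^u_\theta (M^u_\theta)^\top h = h^\top G_\theta h \ge \sigma_-^2\norm{h}^2$. The remaining term is the cross term $h^\top Q_u\,\tfrac{\partial T_\theta}{\partial\theta}\big(\tfrac{\partial T_\theta}{\partial\theta}\big)^\top (\sum_{k\ge1}M_y^k)^\top Q_u^\top h$. This is exactly the quantity controlled by the alignment matrix $\Xi_\theta$ and the slack-coupling threshold $\rho_\theta$ of Assumption~\ref{assumption:M}(iii), together with the conditioning bound (ii). We would normalize so that Assumption~\ref{assumption:M} yields a deterministic bound of the form
\[
\inner{\bar v}{\bar w} \ge \frac{\rho_\theta-\norm{\Xi_\theta}}{1-\gamma}\,\norm{\bar h}^2 ,
\]
applied to the \emph{mean} adjoint-gradient $\bar h=\mathbb{E}_x[h_{\theta,x}]$. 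Here $\bar v,\bar w$ denote the integrands built from $\bar h$, which is possible because the matrices $\tfrac{\partial T_\theta}{\partial\theta}$ and $M_y$ enter linearly in $h$ once frozen. The $1/(1-\gamma)$ factor arises from the geometric-series bound.

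Next we pass from this to $\inner{\mathbb{E}_x v}{\mathbb{E}_x w}$. Writing $\mathbb{E}_x v = \bar v + (\mathbb{E}_x v-\bar v)$, and similarly for $w$, the deviation terms are governed by the fluctuation of the $x$-dependent matrices and of $h$. We would bound these deviations by $\sqrt{\mathrm{Var}_x[v]}$ and $\sqrt{\mathrm{Var}_x[w]}$ via Cauchy--Schwarz and Jensen. Assumption~\ref{assumption:expectation_integrand_inner_product} then bounds the resulting correction by $\delta_{var}\norm{\bar h}^2$. This gives
\[
\inner{\mathbb{E}_x v}{\mathbb{E}_x w} \ge \Big(\frac{\rho_\theta-\norm{\Xi_\theta}}{1-\gamma}-\delta_{var}\Big)\norm{\bar h}^2 = \delta_{v,\theta}^2 ,
\]
which is nonnegative by the choice $\delta_{var}<(\rho_\theta-\norm{\Xi_\theta})/(1-\gamma)$. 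Measurability of the selection, together with Assumption~\ref{assumption:expectation_timeaverage}(1), ensures that all expectations are well defined.

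The main obstacle is the middle step. We must make the per-sample alignment precise despite the set-valuedness of $M_y$, which may vary with $x$, and we must match the exact normalization in which $\rho_\theta$ and $\Xi_\theta$ are defined in the Appendix. Our first attempt would be to bound the cross term uniformly over all $M_y\in\partial^C_yT_\theta$ using only $\norm{M_y}\le\gamma$. This uniformity is exactly what Corollary~\ref{cor:contraction} provides, and it removes any dependence on which selection is made.
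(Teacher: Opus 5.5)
You have the right overall shape: a per-sample alignment bound, then a variance correction through Assumption~\ref{assumption:expectation_integrand_inner_product}. However, the step that passes to expectations does not work as you set it up, and the per-sample step is also not right as sketched.

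\textbf{The expectation step.} You apply the alignment bound to ``mean integrands'' $\bar v,\bar w$ built from $\bar h=\mathbb{E}_x[h_{\theta,x}]$ with frozen matrices. But $\frac{\partial T_\theta}{\partial\theta}(y^\star_\theta(t,z_x(t));z_x(t))$ and the selection $M_y$ depend on $x$ through $z_x(t)$, so there is no canonical frozen matrix.
\begin{itemize}
\item The difference $\mathbb{E}_x v_{\theta,x}-\bar v$ is a bias of the form $\mathbb{E}_x[A_x h_{\theta,x}]-A_{x_0}\mathbb{E}_x[h_{\theta,x}]$, not a fluctuation of $v_{\theta,x}$ about its mean. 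It is therefore not controlled by $\sqrt{\mathrm{Var}_x[v_{\theta,x}]}$; for example, $v_{\theta,x}$ can be constant in $x$ while $A_x$ and $h_{\theta,x}$ vary in compensating ways.
\item Even granting such a bound, expanding $\inner{\bar v+d_v}{\bar w+d_w}$ leaves first-order cross terms $\inner{\bar v}{d_w}+\inner{d_v}{\bar w}$. These are of size $\sqrt{\delta_{var}}\,\norm{\bar h}\,(\norm{\bar v}+\norm{\bar w})$, so you cannot reach the constant $\frac{\rho_\theta-\norm{\Xi_\theta}}{1-\gamma}-\delta_{var}$, which is linear in $\delta_{var}$.
\end{itemize}
The paper avoids both problems as follows.
\begin{itemize}
\item It applies the per-sample bound $\inner{v_{\theta,x}}{w_{\theta,x}}\ge\frac{\rho_\theta-\norm{\Xi_\theta}}{1-\gamma}\norm{h_{\theta,x}}^2$ at each $x$, with that sample's own matrices.
\item It takes $\mathbb{E}_x$ and uses Jensen, $\mathbb{E}_x\norm{h_{\theta,x}}^2\ge\norm{\mathbb{E}_x h_{\theta,x}}^2$.
\item It uses the exact identity $\mathbb{E}_x\inner{v}{w}=\inner{\mathbb{E}_x v}{\mathbb{E}_x w}+\mathbb{E}_x\inner{v-\mathbb{E}_x v}{w-\mathbb{E}_x w}$. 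No linear cross terms survive, because centered quantities have mean zero.
\item It bounds the covariance term by Cauchy--Schwarz and Assumption~\ref{assumption:expectation_integrand_inner_product}: $\sqrt{\mathrm{Var}_x[v]}\sqrt{\mathrm{Var}_x[w]}\le\delta_{var}\norm{\mathbb{E}_x h_{\theta,x}}^2$. Rearranging gives exactly $\delta_{v,\theta}^2$.
\end{itemize}
That is where Jensen and Cauchy--Schwarz belong.

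\textbf{The per-sample step.} The Neumann tail $\sum_{k\ge1}M_y^k$ is not the quantity measured by $\Xi_\theta=K_{us}(I-K_{ss})^{-1}M^s_\theta(M^u_\theta)^\top$. That matrix comes from the Schur-complement formula $Q_u(I-M_y)^{-1}=S_\theta^{-1}[\,I\;\;K_{us}(I-K_{ss})^{-1}]$, which splits $\inner{v}{w}=h^\top S_\theta^{-1}G_\theta h+h^\top S_\theta^{-1}\Xi_\theta h$. The threshold $\rho_\theta$ is produced by lower-bounding the first term using three ingredients:
\begin{itemize}
\item the coercivity $\inner{\phi}{S_\theta\phi}\ge(1-\gamma)\norm{\phi}^2$, inherited from $\inner{\psi}{(I-M_y)^\top\psi}\ge(1-\gamma)\norm{\psi}^2$;
\item the bound $\norm{S_\theta}\le\beta_\theta$;
\item Assumption~\ref{assumption:M}(ii), via the splitting $G_\theta^{-1}=\bar\lambda I+(G_\theta^{-1}-\bar\lambda I)$.
\end{itemize}
Your ``first attempt'' of bounding the tail using only $\norm{M_y}\le\gamma$ gives at best $\inner{v}{w}\ge\big(\sigma_-^2-\frac{\gamma}{1-\gamma}\norm{M_\theta}\sigma_+\big)\norm{h}^2$. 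Since $\norm{M_\theta}\ge\norm{M^u_\theta}=\sigma_+\ge\sigma_-$, this is nonpositive whenever $\gamma\ge 1/2$, however well conditioned $G_\theta$ is. The Neumann expansion discards exactly the coercivity of $I-M_y$ that the paper exploits. The fix here is simply to invoke Lemma~\ref{lemma:integrand_inner_product}, i.e.\ \eqref{eq:single_data_descent}, pointwise in $x$.
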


\begin{restatable}[Descent Direction]{thm}{thmdescent}\label{thm:expect_suff_descent}
    Under Assumptions~\ref{assm:fullrank_A}--\ref{assumption:expectation_timeaverage}, we have, for any Clarke subgradient $\xi \in \partial_\theta^C \mathbb{E}_x [ J_x(\theta)]$
    \begin{equation}
    \langle \xi, \mathbb{E}_x[d_x^{JFB}(\theta)] \rangle \geq \epsilon_v \inf_{\phi \in \partial^C_\theta \mathbb{E}_x [J_x(\theta)]} \norm{\phi}^2,
\end{equation}
where $\epsilon_v$ is defined in Assumption~\ref{assumption:expectation_timeaverage} and $\partial_\theta^C \mathbb{E}_x [ J_x(\theta)]$ is set of Clarke subdifferential of $\mathbb{E}_x[J_x(\theta)]$.
\end{restatable}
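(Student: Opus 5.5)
The plan has three steps: move the Clarke subdifferential inside both integrals, apply Lemma~\ref{lemma:expected_inner_product} pointwise in $t$, and use Assumption~\ref{assumption:expectation_timeaverage} to handle the fact that the lemma is pointwise in time while $\xi$ and $d_x^{JFB}$ are time integrals.

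\textbf{Step 1: representing $\xi$.} By Assumptions~\ref{assumption:T}.\ref{assm:continuous}--\ref{assumption:T}.\ref{assm:lipschitz} and the integrability in Assumption~\ref{assumption:expectation_timeaverage}(1), I will invoke \cite[Theorem~2.7.2]{clarke1990optimization} twice, once for the time integral and once for the expectation over $x\sim\rho$. This gives the inclusion
\[
\partial^C_\theta \mathbb{E}_x[J_x(\theta)] \subseteq \mathbb{E}_x\Big[\int_0^T \partial^C_\theta J_{x,t}(\theta)\,dt\Big],
\]
where the right-hand side is an Aumann-type integral of the set-valued integrand~\eqref{eq:integrand_definitions}. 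Hence any $\xi$ can be written as $\xi=\mathbb{E}_x[\int_0^T v_{\theta,x}(t)\,dt]$ for some measurable selection $v_{\theta,x}(t)\in\partial^C_\theta J_{x,t}(\theta)$. I would obtain measurability via a Filippov / Kuratowski--Ryll-Nardzewski selection argument, using compactness (Assumption~\ref{assm:compactness}) and upper semicontinuity of the Clarke Jacobian. Similarly, by~\eqref{eq:full_derivatives} and Fubini, which is justified by Assumption~\ref{assumption:expectation_timeaverage}(1),
\[
\mathbb{E}_x[d_x^{JFB}]=\int_0^T \mathbb{E}_x[w_{\theta,x}(t)]\,dt = T\,\mathbb{E}_x[C_w].
\]
In the same way, $\xi=T\,\mathbb{E}_x[C_v]$.

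\textbf{Step 2: comparing the time average with pointwise values.} Write $a=\mathbb{E}_x[C_v]$ and $b=\mathbb{E}_x[C_w]$, so that $\langle \xi,\mathbb{E}_x[d^{JFB}_x]\rangle = T^2\langle a,b\rangle$. Fix $t$, and set $\bar v=\mathbb{E}_x[v_{\theta,x}(t)]$ and $\bar w=\mathbb{E}_x[w_{\theta,x}(t)]$, with deviations $e_v=\bar v-a$ and $e_w=\bar w-b$. Lemma~\ref{lemma:expected_inner_product} gives $\langle \bar v,\bar w\rangle\ge \delta_{v,\theta}^2$. The polarization identity
\[
\langle a,b\rangle=\tfrac14\big(\|a+b\|^2-\|a-b\|^2\big)
\]
is the natural tool, but the cleaner route is to note that $\langle \bar v,\bar w\rangle\ge\delta_{v,\theta}^2$ together with $\|e_v\|,\|e_w\|\le r$ forces $\langle a,b\rangle\ge \delta_{v,\theta}^2 - r^2$. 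Here $r$ denotes the common bound $\max(a_v+\delta_v\inf\|\phi\|,\,a_w+\delta_w\|\mathbb{E}_x[d^{JFB}_x]\|)$. This estimate is the crux, and I would verify it by a geometric argument: the set $\{(\bar v,\bar w):\langle\bar v,\bar w\rangle\ge\delta^2\}$ is a hyperbolic region, and moving each vector by at most $r$ reduces the inner product by at most $r\|\bar w\|+r\|\bar v\|-r^2$. That bound is not obviously $\le r^2$, however.

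\textbf{Step 3: closing the gap (the main obstacle).} Because of the difficulty just noted, I would instead average over $t$. Since $\frac1T\int_0^T e_v\,dt=0$ and $\frac1T\int_0^T e_w\,dt=0$, integrating gives
\[
\frac1T\int_0^T\langle \bar v,\bar w\rangle\,dt=\langle a,b\rangle+\frac1T\int_0^T\langle e_v,e_w\rangle\,dt,
\]
because the cross terms $\langle a,e_w\rangle$ and $\langle e_v,b\rangle$ integrate to zero. This is the key observation. It yields
\[
\langle a,b\rangle\ge \delta_{v,\theta}^2 - \sup_t\|e_v\|\,\|e_w\|\ge \delta_{v,\theta}^2-r^2.
\]
The last bound in Assumption~\ref{assumption:expectation_timeaverage}(2) then gives $\langle a,b\rangle\ge \frac{\epsilon_v}{T^2}\inf_\phi\|\phi\|^2$. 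Multiplying by $T^2$ yields the claim.

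The hardest remaining point is Step 1: rigorously justifying that $\xi$ admits this representation with a jointly measurable selection. If equality in \cite[Theorem~2.7.2]{clarke1990optimization} fails, only the inclusion above is available, but that direction suffices for the argument.
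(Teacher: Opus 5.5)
Your Step~3 is essentially the paper's proof. The paper also represents $\xi=\mathbb{E}_x[\int_0^T v_{\theta,x}(t)\,dt]$ via \cite[Theorem~2.7.2]{clarke1990optimization}, centers at the time averages $C_v,C_w$ so the cross terms vanish by Fubini, lower-bounds $\langle\mathbb{E}_x[v_{\theta,x}(t)],\mathbb{E}_x[w_{\theta,x}(t)]\rangle$ by Lemma~\ref{lemma:expected_inner_product}, and upper-bounds the centered product via Cauchy--Schwarz and Assumption~\ref{assumption:expectation_timeaverage}; your Step~2 is a detour you rightly abandon. One small fix: combine the two bounds pointwise in $t$ before integrating, as the paper does, rather than through $\sup_t$, because $\delta_{v,\theta}$ depends on $t$ through $h_{\theta,x}(t)$ and cancels only at matching times.
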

Lemma~\ref{lemma:expected_inner_product} and Theorem~\ref{thm:expect_suff_descent} show that the JFB update is a descent direction for all Clarke subgradients $\xi \in \partial_\theta^C \mathbb{E}_x [ J_x(\theta)]$. \textbf{The guarantee is uniform over the entire subdifferential} rather than a single element and \textbf{holds despite the bias of the JFB update} (also verified numerically in Figure~\ref{fig:theory_validation} in the Appendix). Our derivation achieves both by handling the nondifferentiability introduced by the DYS operator; see the Proof of Descent Direction in the Appendix. 

\begin{restatable}[Convergence]{thm}{thmconvergence}\label{thm:gradient_flow_converge}
    Under Assumptions~\ref{assm:fullrank_A}--\ref{assumption:expectation_timeaverage}, let $\theta(\tau)$ be the trajectory generated by the continuous-time gradient flow dynamics
\begin{equation}\label{eq:gradient_flow}
    \frac{d\theta(\tau)}{d\tau} = - \mathbb{E}_x \left[ d_x^{JFB}(\theta(\tau)) \right], \quad \text{subject to} \quad \theta(0)=\theta_{\rm init}.
\end{equation}
Then, the trajectory asymptotically converges to a Clarke stationary point in the sense that
$$\liminf_{\tau \to \infty} \inf_{\phi \in \partial_\theta^C \mathbb{E}_x  [ J_x(\theta(\tau))]} \left\| \phi \right\|_2 = 0.$$
Here, $\partial_\theta^C \mathbb{E}_x [ J_x(\theta)]$ is set of Clarke subdifferential of $\mathbb{E}_x[J_x(\theta)]$.
\end{restatable}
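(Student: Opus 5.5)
We will prove the statement with a Lyapunov-type argument along the gradient flow, using $F(\theta) := \mathbb{E}_x[J_x(\theta)]$ as the Lyapunov function. By Assumption~\ref{assumption:T}.\ref{assm:smooth}, $F$ is Lipschitz and bounded below. By Assumption~\ref{assumption:T}.\ref{assm:bounded}, the vector field $-\mathbb{E}_x[d_x^{JFB}(\theta)]$ is uniformly bounded. Hence $\theta(\tau)$ is Lipschitz, in particular absolutely continuous, in $\tau$. It follows that $\tau \mapsto F(\theta(\tau))$ is absolutely continuous as a composition of a Lipschitz map with an absolutely continuous curve, and so it is differentiable for a.e.\ $\tau$.

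The first key step is a nonsmooth chain rule at the points of differentiability. Since $F$ is only Lipschitz (not necessarily Clarke regular), I will use the following bound: wherever both $\theta(\cdot)$ and $F(\theta(\cdot))$ are differentiable,
\[
\frac{d}{d\tau}F(\theta(\tau)) \le \max_{\xi \in \partial^C_\theta F(\theta(\tau))} \langle \xi, \dot\theta(\tau)\rangle .
\]
This follows from the Clarke directional derivative: $F^\circ(\theta;v) = \max_{\xi\in\partial^C F(\theta)} \langle \xi, v\rangle$ dominates the one-sided difference quotient along the curve. The $o(h)$ error between $\theta(\tau+h)$ and $\theta(\tau)+h\dot\theta$ is absorbed using the Lipschitz continuity of $F$. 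Substituting $\dot\theta = -\mathbb{E}_x[d_x^{JFB}]$, this becomes $-\min_{\xi}\langle \xi, \mathbb{E}_x[d_x^{JFB}]\rangle$. Theorem~\ref{thm:expect_suff_descent} holds uniformly over all Clarke subgradients $\xi$, so
\[
\frac{d}{d\tau}F(\theta(\tau)) \le -\epsilon_v \inf_{\phi\in\partial^C_\theta F(\theta(\tau))}\norm{\phi}^2 \quad \text{for a.e. } \tau .
\]
This step is where the uniformity of the descent theorem over the whole subdifferential is essential. I expect it to be the main technical obstacle: one must justify the upper chain-rule inequality without regularity, and verify that $\partial^C_\theta \mathbb{E}_x[J_x]$ is the object controlled in Theorem~\ref{thm:expect_suff_descent}, via the interchange allowed by Assumptions~\ref{assumption:T}.\ref{assm:continuous}--\ref{assm:lipschitz}.

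Next, integrate from $0$ to $\tau$:
\[
\epsilon_v \int_0^\tau \inf_{\phi}\norm{\phi}^2\, ds \le F(\theta_{\rm init}) - F(\theta(\tau)) \le F(\theta_{\rm init}) - \inf F < \infty .
\]
Measurability of $s\mapsto \inf_\phi\norm{\phi}$ follows from outer semicontinuity of the Clarke subdifferential. That property makes the map lower semicontinuous along the continuous curve $\theta(s)$; alternatively, one can use the lower integral. Since $\epsilon_v > 0$, the nonnegative function $\inf_\phi \norm{\phi}^2$ is integrable on $[0,\infty)$.

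Finally, we conclude by contradiction. Suppose $\liminf_{\tau\to\infty}\inf_\phi\norm{\phi} > 0$. Then there exist $c>0$ and $\tau_0$ such that the integrand is at least $c^2$ for all $\tau\ge\tau_0$. The integral then diverges, contradicting the finiteness established above. Hence $\liminf_{\tau\to\infty}\inf_{\phi\in\partial^C_\theta \mathbb{E}_x[J_x(\theta(\tau))]}\norm{\phi}_2 = 0$. Existence of the flow solution itself on $[0,\infty)$ follows from boundedness of the right-hand side, together with the compactness in Assumption~\ref{assm:compactness}; if the field is merely measurable, we interpret the flow in the Carathéodory/Filippov sense.
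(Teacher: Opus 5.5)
Your proof is correct. Its architecture matches the paper's: the Lyapunov function $F=\mathbb{E}_x[J_x]$, absolute continuity of $F\circ\theta$, an a.e.\ derivative bound fed by Theorem~\ref{thm:expect_suff_descent}, integration, and boundedness below. The key step, however, is handled differently.

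\textbf{The paper's route.} It does not use the Clarke directional derivative. It applies Lebourg's mean value theorem on the segment between $\theta(\tau)$ and $\theta(\tau+\Delta\tau)$, giving $F(\theta(\tau+\Delta\tau))-F(\theta(\tau))=\inner{\hat\xi(\Delta\tau)}{\theta(\tau+\Delta\tau)-\theta(\tau)}$ with $\hat\xi$ a subgradient at an intermediate point. It then picks a sequence realizing the $\limsup$ in the upper Dini derivative and extracts a convergent subsequence of the bounded $\hat\xi$'s. Closedness of the graph of $\partial^C F$ places the limit $\xi$ in $\partial^C F(\theta(\tau))$. This yields the identity $D^+F(\theta(\tau))=\inner{\xi}{\dot\theta(\tau)}$ for one particular but uncontrolled $\xi$, after which the uniform descent theorem applies.

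\textbf{Your route.} You replace this with the standard bound $\limsup_{h\to0^+}(F(\theta+hv)-F(\theta))/h\le F^\circ(\theta;v)=\max_{\xi\in\partial^C F(\theta)}\inner{\xi}{v}$, plus Lipschitz absorption of the $o(h)$ curve error. This is shorter and needs $\dot\theta$ only almost everywhere. The paper's identity is stated at every $\tau$, but it tacitly uses that $\theta$ is classically differentiable everywhere. Both routes need the descent inequality for every element of $\partial^C F(\theta(\tau))$, as you correctly stress.

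\textbf{Details you add.} You supply two points the paper glosses over. First, measurability of $\sigma\mapsto\inf_{\phi}\norm{\phi}^2$, via lower semicontinuity from the closed graph of $\partial^C F$. Second, the explicit contradiction argument that turns integrability into $\liminf=0$.

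\textbf{One caution.} Drop the Filippov fallback in your last sentence. For a Filippov solution, $\dot\theta(\tau)$ only lies in the closed convex hull of essential limits of field values at points near $\theta(\tau)$. Theorem~\ref{thm:expect_suff_descent} pairs subgradients at $\theta(\tau)$ only with the field value at $\theta(\tau)$ itself, so your chain-rule step would not go through. The statement already takes the trajectory as given, so existence need not be argued. Your argument only uses an absolutely continuous $\theta$ satisfying the ODE almost everywhere.
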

Theorem~\ref{thm:gradient_flow_converge} shows that the JFB gradient
flow converges to a Clarke stationary point. While policy learning methods with embedded nonsmooth safety layers exist, to the best of our knowledge, this is the first convergence guarantee for such methods.
Nondifferentiability renders traditional convergence proof techniques inapplicable.
Our main technical invention is to carefully analyze the behavior of Clarke subgradients $\hat{\xi} \in \partial_\theta^C \mathbb{E}_x [J_x(\hat{\theta}(\tau))]$ at $\hat{\theta}(\tau)$ in a neighborhood of $\theta(\tau)$. Combined with the descent conditions, this yields the convergence guarantee.

\begin{table*}[t]
\centering
\small
\begin{tabular}{l c l c  c}
\toprule
\makecell{Problem \\ (\# agents)} & 
\makecell{dims \\ $(n,m)$} &
Method & 
\makecell{Peak CPU Mem.\\ (MB)} & 
\makecell{Peak GPU Mem.\\ (MB)} \\
\midrule
\multirow{3}{*}{Single Integrator (50)}
& \multirow{3}{*}{$(150,150)$}
& \textbf{DYS-JFB (Ours)}  & \textbf{1591.4} & \textbf{564.5} \\
& & DYS-AD                 & 4175.0          & 6206.5 \\
& & CVXPYLayers (IFT)      & 7127.1          & 121.9 \\
\midrule
\multirow{3}{*}{Double Integrator (1)}
& \multirow{3}{*}{$(4,2)$}
& \textbf{DYS-JFB (Ours)}  & \textbf{1547.8} & \textbf{23.0} \\
& & DYS-AD                 & 2023.1          & 41.2 \\
& & CVXPYLayers (IFT)      & 1995.6          & 28.9 \\
\midrule
\multirow{3}{*}{Double Integrator (6)}
& \multirow{3}{*}{$(24,12)$}
& \textbf{DYS-JFB (Ours)}   & \textbf{1562.2} & \textbf{36.3} \\
& & DYS-AD                  & 3991.1          & 294.0 \\
& & CVXPYLayers (IFT)       & 3021.0          & 39.6 \\
\midrule
\multirow{3}{*}{Quadcopter (5)}
& \multirow{3}{*}{$(60,20)$}
& \textbf{DYS-JFB (Ours)}   & \textbf{1597.6} & \textbf{38.3} \\
& & DYS-AD                  & 5892.4          & 556.5 \\
& & CVXPYLayers (IFT)       & 2942.7          & 112.8 \\
\midrule
\multirow{3}{*}{Quadcopter (30)}
& \multirow{3}{*}{$(360,120)$}
& \textbf{DYS-JFB (Ours)}   & \textbf{1603.2} & \textbf{235.2} \\
& & DYS-AD                  & 7051.3          & 4027.9 \\
& & CVXPYLayers (IFT)       & 6441.1          & 2239.7 \\
\midrule
\multirow{3}{*}{Quadcopter (100)}
& \multirow{3}{*}{$(1200,400)$}
& \textbf{DYS-JFB (Ours)}   & \textbf{1604.3} & \textbf{1997.6} \\
& & DYS-AD                  & 7837.4         & -- \\
& & CVXPYLayers (IFT)       & 33206.0          & -- \\
\bottomrule
\end{tabular}
\caption{
Training memory usage across system configurations. $(n,m)$ denotes the state and control dimensions. Both CPU and GPU memories correspond to peak usage during training. Lower is better for all metrics. For $100$ quadcopter example, both AD and CVXPY Layers exceed the $16$ GB VRAM allocation cap. 
}
\label{tab:results_mem}
\end{table*}

\section{Numerical Results}

Our numerical experiments consist of a total of six different control tasks. We consider three different dynamical models under varying obstacle configurations. Specifically, we consider a single-integrator model in 3D, commonly used in swarm control; a double-integrator model in 2D, standard in control literature; and, finally, a quadcopter model in 3D that exhibits nonlinear dynamics and a high-dimensional state space. For each example, we consider multi-agent path planning tasks with multiple obstacles, adding to the difficulty of the problems. This ensures that our claims are thoroughly supported and that our proposed approach is robust. In addition to our proposed approach, we conduct comparative experiments under two different settings: a baseline that uses automatic differentiation (AD) to unroll all projection layers, preserving exact gradient information but incurring high computational cost; and a CVXPY Layer–based projection solver that leverages implicit differentiation for gradient calculation, which is the current standard approach for end-to-end training under safety constraints.

Due to limited space, we defer details of the experiments to the Appendix. 
Numerical results regarding trained models' performance are presented in Table~\ref{tab:results_obj} where we report the results over different initial states after each training run. We note that \textit{our proposed approach is the only method that can successfully solve all test examples without numerical blowup or memory/time explosion.} 
Additional to that our approach generally achieves competitive performance across the benchmarks and outperforms the AD counterpart. In examples where CVXPY Layers-based solvers converge we notice that they can achieve lower terminal costs, though our approach generally yields lower running costs, indicating smoother trajectories.

We also note that CVXPY Layers, while commonly presented as the standard method for such solvers, can struggle with numerical instability in the presence of complex dynamics. To ensure a fair comparison, we additionally test with a regularized QP at each projection step by introducing relaxation variables. While this improves the numerical stability of training, the solver still struggles as we increase the dimensionality of the problem due to gradient explosion and solver breakdown.
We attribute this to the ill-conditioning of the KKT system during backpropagation, as documented in~\cite{bai2020multiscale,bai2021stabilizing}. In contrast, the JFB update we use does not require explicit Jacobian inverse, ensuring stable numerical performance across all experiments.

Aside from performance and training stability, a more prominent advantage of our approach and key to its scalability lies in its efficiency. 
Memory usage poses as a main bottleneck when scaling existing solvers to high-dimensional problems. We compare peak memory usage across all methods in Table~\ref{tab:results_mem}, \textit{our proposed approach achieves significantly lower memory footprint on both CPU and GPU across all examples}, and in some cases over $10\times$ reduction. 
We observe similar significant savings in training time as well when comparing to both AD and CVXPY Layers-based solvers,  these results are included in the Appendix and Figure~\ref{fig:training_time}.
We attribute these improvements to two main factors. First, DYS yields a closed-form projection update, rendering each fixed-point iteration fast and cheap. Second, the JFB update neither requires tracking and storing large computation graphs nor solving additional linear systems during backpropagation, making it both time and memory efficient.

Safety is ensured by design of our approach, to validate this, we present numerical results in Figure~\ref{fig:barrier_projection} in the Appendix evaluating the barrier values along trajectory rollouts. Notice here the minimum barrier values remain above zero, indicating the constraints are always satisfied, thus finalizing and substantiating all of our claims.

\section{Conclusion}

In this work, we introduce a scalable end-to-end training framework for safe semi-global feedback control with embedded CBF-QP safety filters. By combining Davis--Yin splitting with Jacobian-Free Backpropagation, our method avoids the computational bottlenecks of gradient unrolling and implicit differentiation while retaining hard safety guarantees. We provide convergence guarantees for the resulting nonsmooth optimization problem using novel techniques and demonstrate the approach across a range of nonlinear multi-agent systems. Comparisons and numerical results show that the proposed framework scales to substantially larger problems than existing methods while requiring significantly smaller training time and memory usage.

\newpage
\newpage

\bibliography{aaai2026}

\appendix
\onecolumn

\begin{figure}[t]
     \centering
     \begin{minipage}{0.9\linewidth}
     \centering
     \begin{subfigure}[b]{0.33\textwidth}
         \centering
         \includegraphics[width=\textwidth]{Figs/single_integrator_swarm_traj.png}
         \caption{Single integrator 50 agents}
         \label{fig:row1_col1}
     \end{subfigure}
     \hfill
     \begin{subfigure}[b]{0.3\textwidth}
         \centering
         \includegraphics[width=\textwidth]{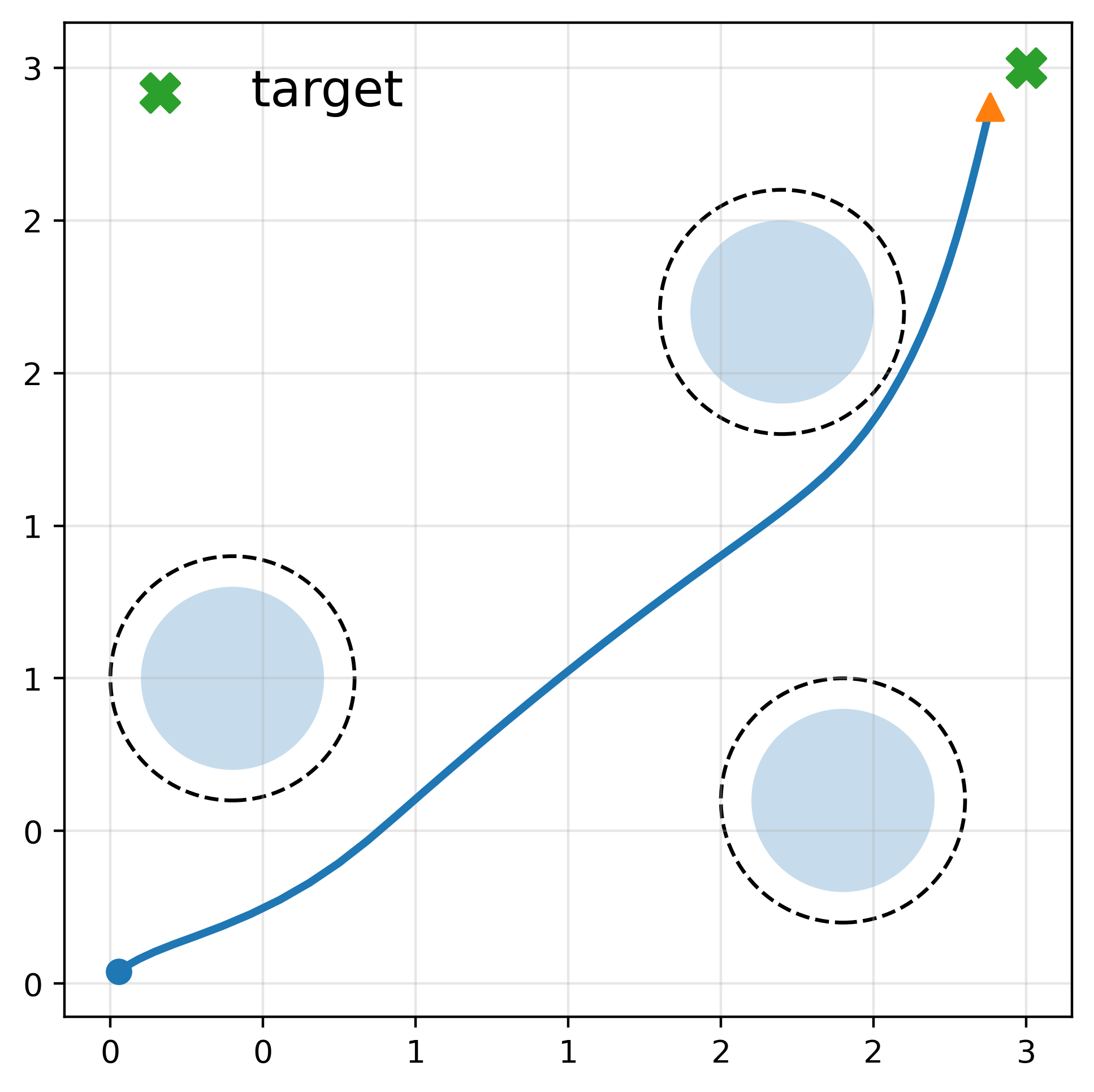}
         \caption{Double integrator 1 agent}
         \label{fig:row1_col2}
     \end{subfigure}
     \hfill
     \begin{subfigure}[b]{0.3\textwidth}
         \centering
         \includegraphics[width=\textwidth]{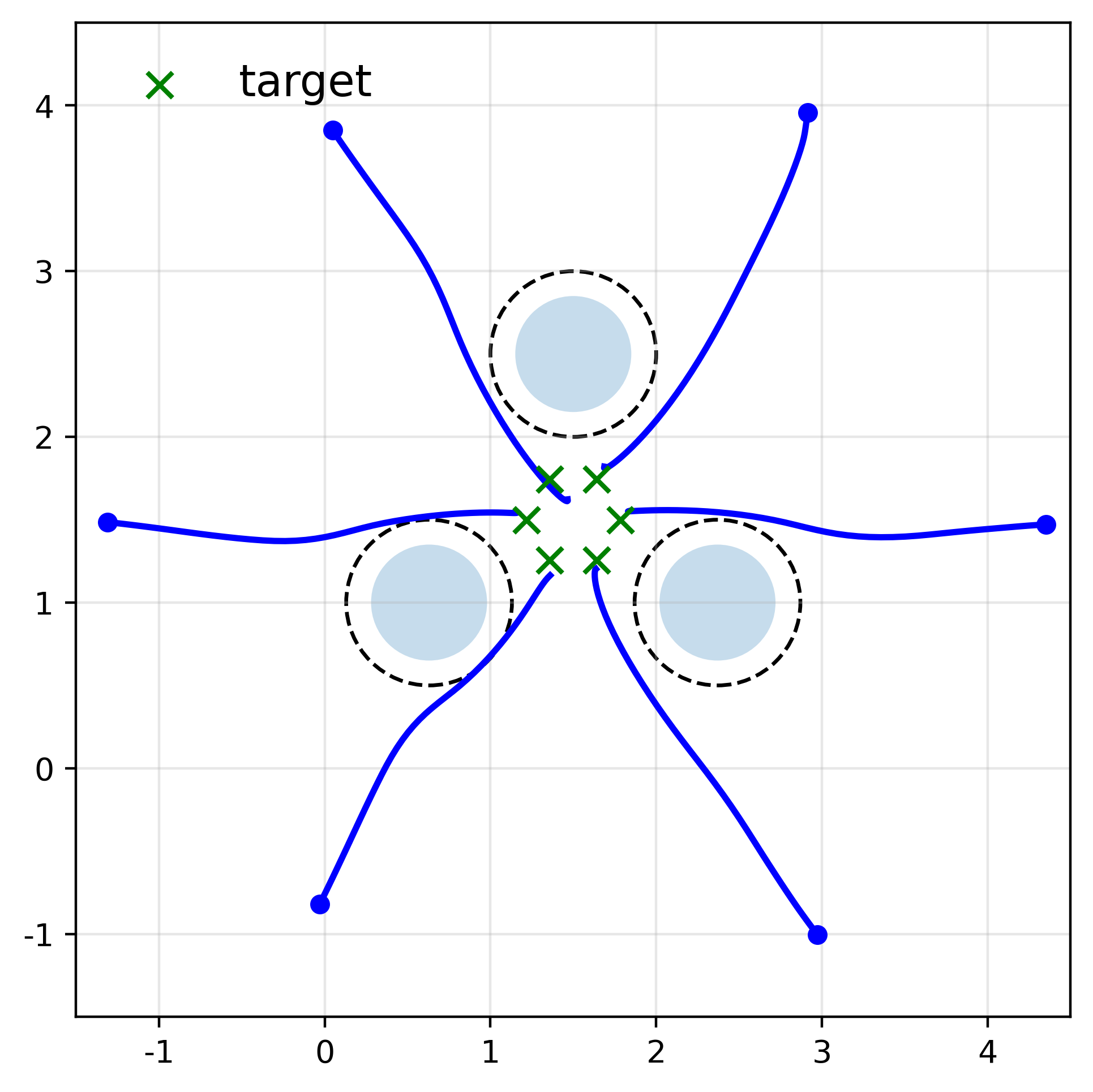}
         \caption{Double integrator 6 agents}
         \label{fig:row1_col3}
     \end{subfigure}

     \vspace{10pt} 

     \begin{subfigure}[b]{0.32\textwidth}
         \centering
         \includegraphics[width=\textwidth]{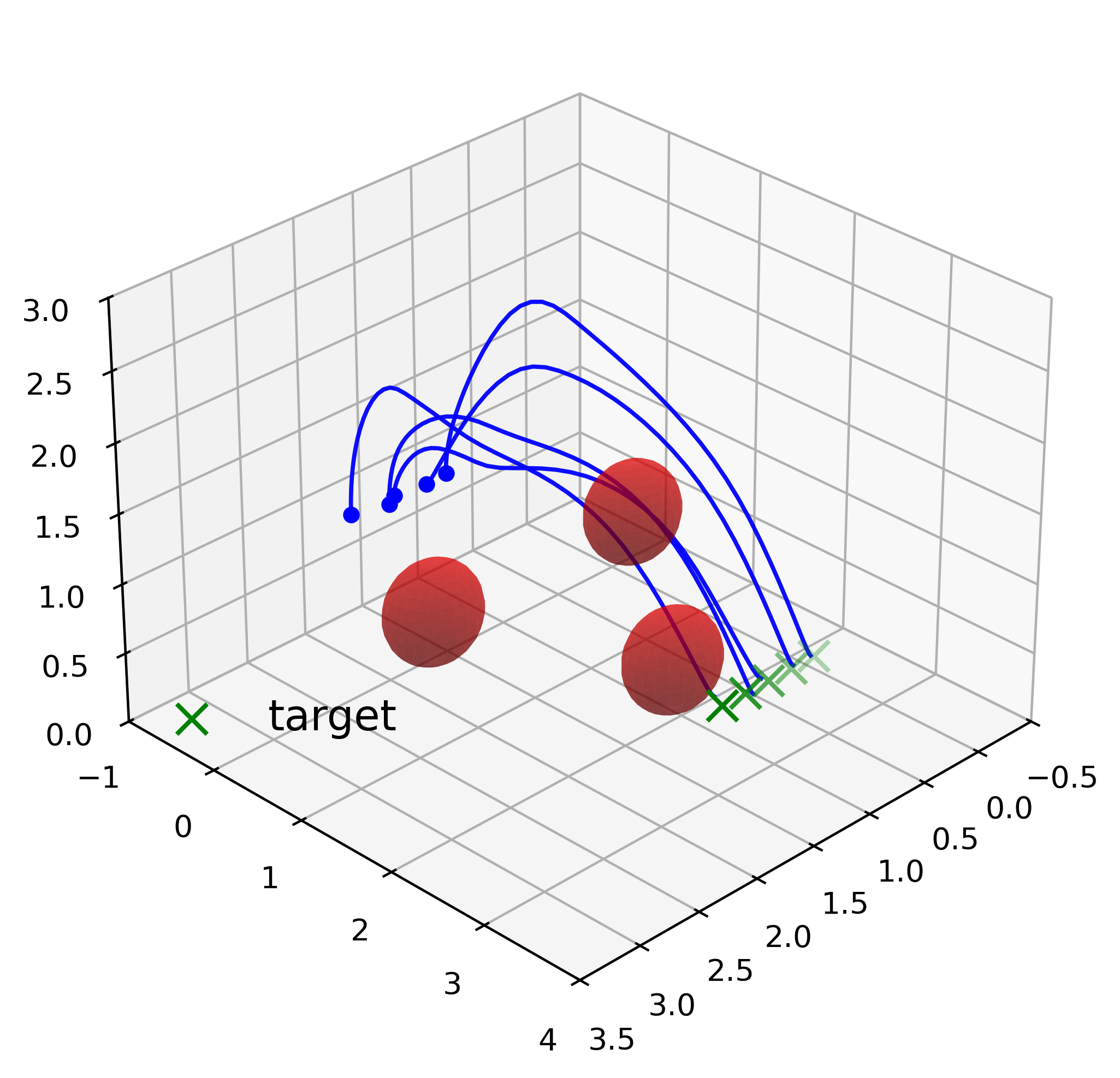}
         \caption{Quadcopter 5 agents}
         \label{fig:row2_col1}
     \end{subfigure}
     \hfill
     \begin{subfigure}[b]{0.32\textwidth}
         \centering
         \includegraphics[width=\textwidth]{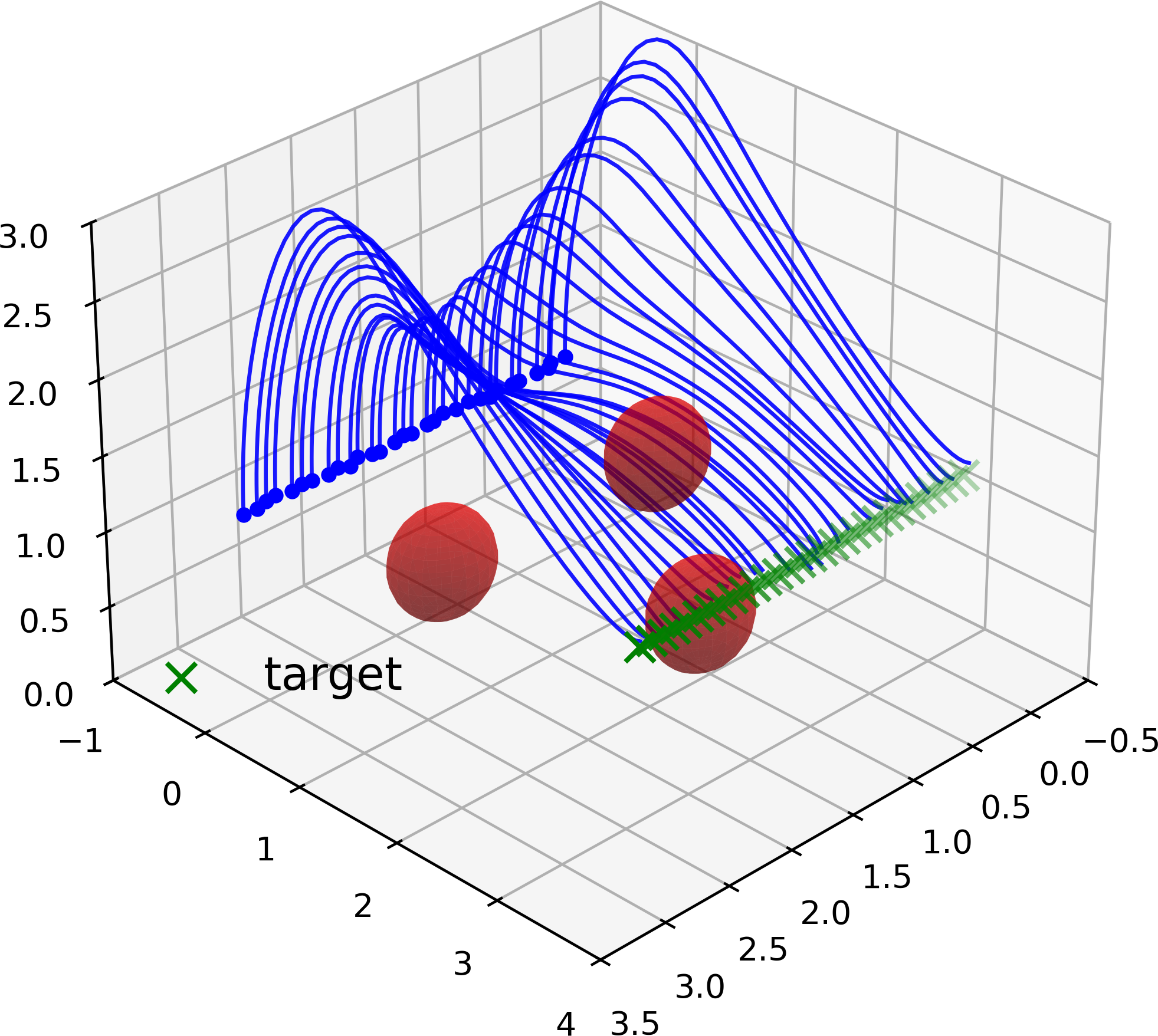}
         \caption{Quadcopter 30 agents}
         \label{fig:row2_col2}
     \end{subfigure}
     \hfill
     \begin{subfigure}[b]{0.32\textwidth}
         \centering
         \includegraphics[width=\textwidth]{Figs/quadcopter_100_finetune_traj.png}
         \caption{Quadcopter 100 agents}
         \label{fig:row2_col3}
     \end{subfigure}
    \end{minipage}
     \caption{Numerical results for six control scenarios. In each case, the feedback controller is trained in environments with multiple agents and obstacles. The learned policy successfully drives each agent to its corresponding target under minimal control effort, while ensuring the trajectories are smooth.}
     \label{fig:six_panel_grid}
\end{figure}

\begin{figure}[t]
    \centering
    \begin{subfigure}[b]{0.45\textwidth}
        \centering
        \includegraphics[width=\textwidth]{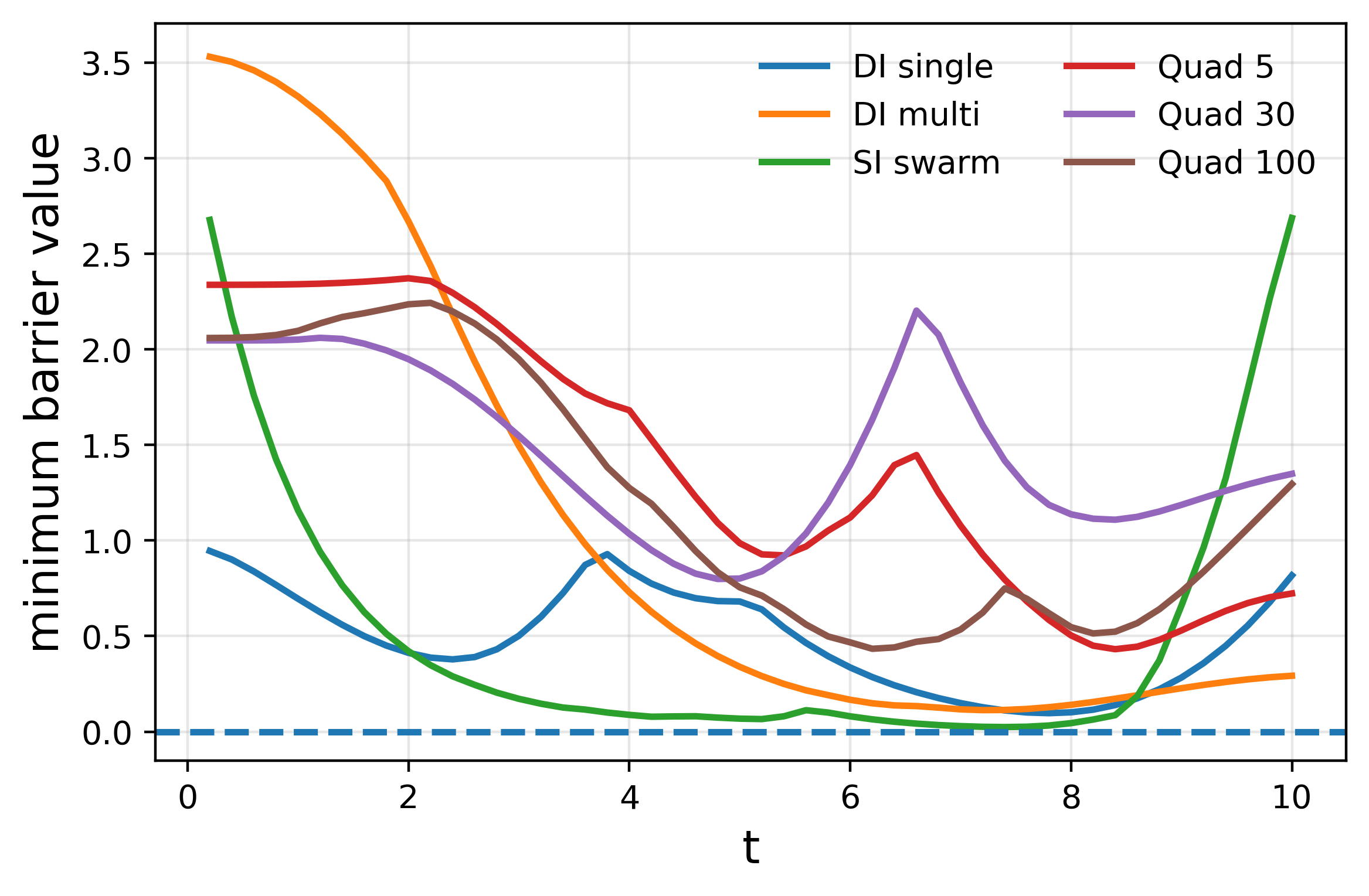}
        \caption{Minimum barrier values along trajectories.}
        \label{fig:barrier}
    \end{subfigure}
    \hfill
    \begin{subfigure}[b]{0.45\textwidth}
        \centering
        \includegraphics[width=\textwidth]{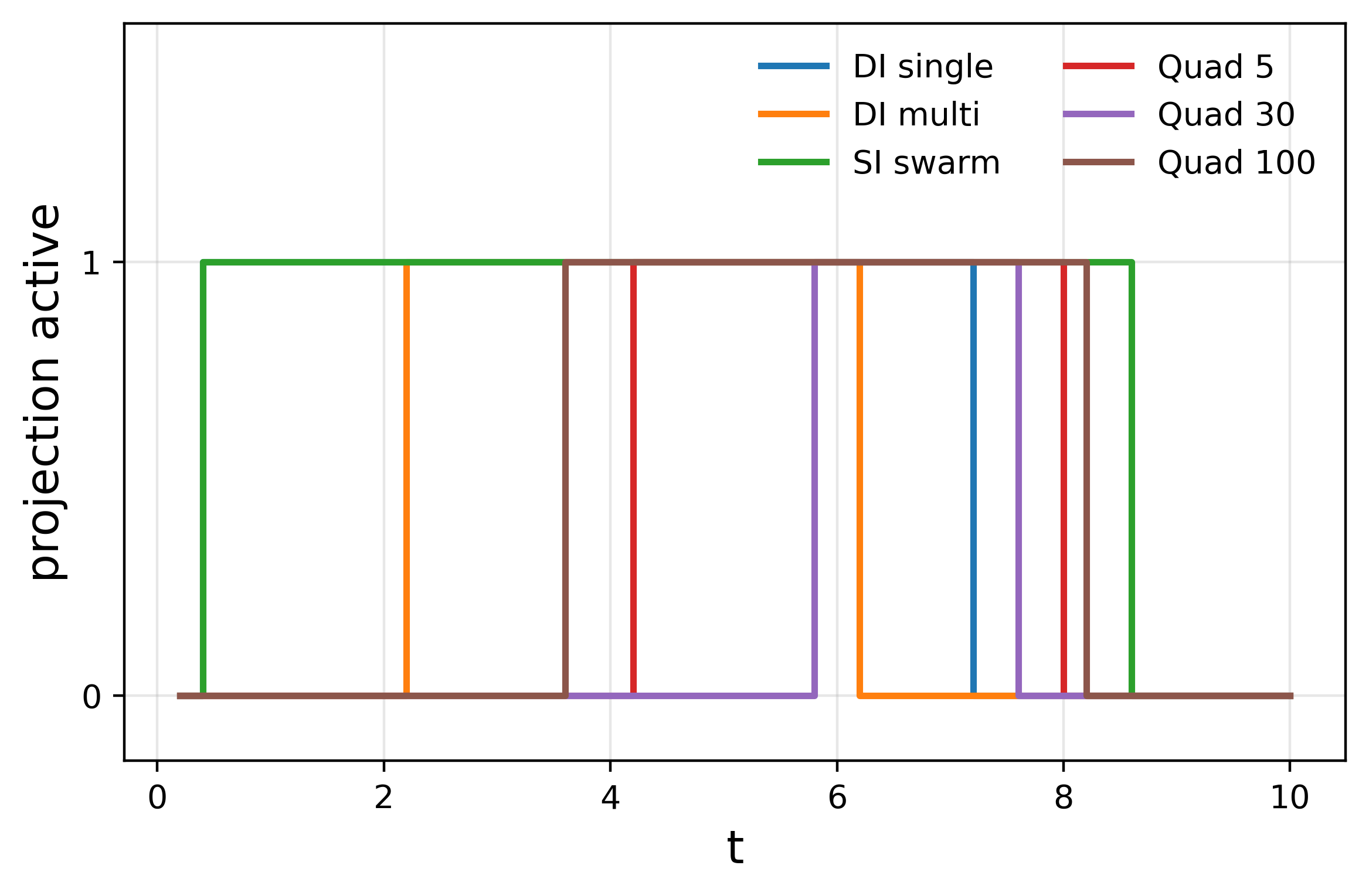}
        \caption{Projection activity along trajectories}
        \label{fig:proj_act}
    \end{subfigure}
    \caption{Evaluation of barrier functions and projection activations on trained models for each example. Here safety is ensured throughout as the minimum barrier values remain positive along trajectory rollouts. Projections only occur as the nominal controls push the trajectories into the obstacles, when far away from the obstacles, projection will not trigger. }
    \label{fig:barrier_projection}
\end{figure}

\section{Related Work}

Safety is a fundamental requirement in many control and automation systems, including robotics, autonomous driving, aerial vehicles, and industrial cyber-physical systems, where controllers must satisfy physical and operational constraints while accomplishing complex tasks~\cite{guiochet2017safety,schwarting2018planning,brunke2022safe,muller2022safeguarding,hsu2023safety}.To address this challenge, a wide range of control, optimization, and verification frameworks have been proposed, including robust and adaptive control, model predictive control (MPC), reachability analysis, formal verification, runtime assurance architectures, and control barrier functions (CBFs)~\cite{zhou1996robust,tomlin2000game,ames2019control,schwenzer2021review,hobbs2023runtime, fung2025mean, tomaselli2026mean}.
More recently, differentiable safety filters have emerged as a promising bridge between classical safety-critical control and learning-based methods. However, efficiently and reliably integrating such safety filters into end-to-end learning pipelines remains a central challenge in the field.

Seminal work in differentiable optimal control enables end-to-end learning by differentiating through trajectory optimization problems. Representative approaches include differentiable MPC~\cite{amos2018differentiable}, which differentiates through the KKT conditions of the underlying optimization problem, and Pontryagin Differentiable Programming (PDP)~\cite{jin2020pontryagin}, which computes gradients via Pontryagin Maximum Principle. Subsequent work extends these frameworks to robust~\cite{oshin2023differentiable} and safety-constrained~\cite{jin2021safe} optimal control problems. While elegant and effective, these methods are local solvers in nature and solve for individual trajectories at a time. They can struggle to scale, and generally do not apply to learning algorithms.

An important line of work~\cite{drgovna2022differentiable,cortez2022differentiable,drgovna2024learning} penalizes constraint violations as part of the loss, adjusting their strength by controlling penalty weights. While these methods are easy to implement, they can be difficult to optimize in practice due to the additional loss terms. Constraints violations can also happen and for safety-critical systems, additional safety corrections are often needed and applied as a post-processing step, which can further degrade model performance and lead to more complex pipelines and slower online inference. Similar penalty methods are also used in~\cite{tessler2018reward,onken2021neural,tearle2021predictive,yoo2022dynamic,mowlavi2023optimal,kan2025stability,kan2026optimal, vidal2023taming, lin2021alternating, vidal2025kernel, agrawal2022random, li2024neural}. Extension to safety in reinforcement learning also exist, see~\cite{choi2020reinforcement,wabersich2021predictive}

Pioneered by differentiable optimization layers such as OptNet~\cite{amos2017optnet} and CVXPY Layers~\cite{agrawal2019differentiable}, projection-based safety layers have emerged as a promising direction for end-to-end learning of feedback policies with safety guarantees. Existing work~\cite{chen2021enforcing,xiao2023barriernet,min2024hardnet,white2024projected,utkarsh2025end,li2025zero} demonstrates strong performance in balancing feasibility and optimality across dynamical systems, control policies, and other applications. However, projection layers can incur nontrivial computational costs during training, leading to significantly longer training times and higher memory usage. Additionally, optimization difficulty poses a further challenge. As a result, prior work has primarily focused on low-dimensional control problems with short time horizons. Developing scalable and computationally efficient pipelines therefore remains an open problem, which we aim to address in this work.

More recently, \cite{so2024train} proposes policy neural control barrier functions (PNCBFs), which learn a barrier certificate through the value function of a nominal policy. While this avoids directly enforcing CBF constraints during training and shows improved scalability to higher-dimensional, input-constrained systems, it does not address the integration of projection-based safety layers into end-to-end learning pipelines. Similar works also include~\cite{taylor2020learning,robey2020learning,dawson2023safe,liu2023safe,zhang2023neural}.

Importantly, our work focuses on scalable end-to-end learning of feedback controllers that preserve hard safety guarantees without external verification steps. As part of our main contribution, we address both the computation and theoretical openings in the field, to the best of our knowledge, our work is the first to enable large scale training with provable convergence guarantees.

\section{Control Barrier Functions for Dynamics with High Relative Degree}
\label{sec:HOCBF}

A standard CBF requires the safety function to have relative degree one. When the control input appears only after differentiating $h$ multiple times along the system dynamics, higher-order control barrier functions (HOCBFs) extend the CBF framework to guarantee forward invariance.

Suppose $h$ has relative degree $l$. Define the sequence of functions recursively by
\begin{align}
h_0(z) &:= h(z),\\
h_1(z) &:= \dot h_{0}(z) + \alpha_1(h_{0}(z)), \\
\vdots \\
h_l(z) &:= \dot h_{l-1}(z) + \alpha_l(h_{l-1}(z)),
\end{align}
where each $\alpha_i$ is an extended class-$\mathcal K$ function. The associated sets are
\begin{equation}
\mathcal C_i := \{z\in\mathbb R^n : h_{i-1}(z)\ge0\},
\qquad i=1,\ldots,l.
\end{equation}
A function $h$ is a HOCBF if
\begin{equation}
h_l(z)\ge0,
\label{eq:hocbf_condition}
\end{equation}
for all $z\in\bigcap_{i=1}^l\mathcal C_i$. Since the control input first appears in the $l$-th derivative of $h$,~\eqref{eq:hocbf_condition} is once again affine in the control input.

\begin{thm}[Forward invariance under HOCBF constraints~\cite{xiao2019control}]
Let $h$ be a HOCBF of relative degree $l$. Suppose $u(t)$ is locally Lipschitz,
$z(0)\in\bigcap_{i=1}^l\mathcal C_i$, and~\eqref{eq:hocbf_condition} holds along the resulting trajectory for all
$t\ge0$. Then $\bigcap_{i=1}^l\mathcal C_i$ is forward invariant under control $u(t)$. In particular,
$h(z(t))\ge0$ for all $t\ge0$.
\end{thm}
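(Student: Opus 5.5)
The plan is a backward induction on the index $i$, peeling off one layer of the recursion $h_i = \dot h_{i-1} + \alpha_i(h_{i-1})$ at a time. The key tool is a one-dimensional invariance argument for scalar functions of time. I will not use the comparison lemma, because each $\alpha_i$ is only assumed continuous and the comparison ODE $\dot y = -\alpha_i(y)$ need not have unique solutions.

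First, the setup. Since $u(t)$ is locally Lipschitz and $f,g$ are locally Lipschitz, the closed-loop system $\dot z = f(t,z)+g(t,z)u(t)$ has a unique $C^1$ solution $z(t)$, which I assume exists on $[0,\infty)$, as implicit in the statement. I also take the standing HOCBF regularity: $h$ is $C^l$ and $\alpha_1,\dots,\alpha_{l-1}$ are $C^1$ (or at least $C^{l-i}$ as needed). Under this assumption each $\phi_i(t) := h_i(z(t))$ for $i=0,\dots,l-1$ is differentiable in $t$, with $\dot\phi_{i-1}(t) = \phi_i(t) - \alpha_i(\phi_{i-1}(t))$. The hypothesis says $\phi_l(t)\ge 0$ for all $t\ge 0$, and $z(0)\in\bigcap_i \mathcal C_i$ says $\phi_{i-1}(0)\ge 0$ for $i=1,\dots,l$.

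Second, the scalar lemma. Let $\phi$ be differentiable on $[0,\infty)$ with $\phi(0)\ge0$ and $\dot\phi(t)+\alpha(\phi(t))\ge 0$ for all $t$, where $\alpha$ is extended class-$\mathcal K$. Then $\phi(t)\ge 0$ for all $t$. To prove it, suppose $\phi(t_1)<0$ for some $t_1>0$. Set $t_0 := \sup\{t\le t_1: \phi(t)\ge 0\}$. By continuity $\phi(t_0)=0$ and $\phi<0$ on $(t_0,t_1]$. On this interval $\alpha(\phi(t))<0$ by strict monotonicity and $\alpha(0)=0$, so $\dot\phi(t)\ge -\alpha(\phi(t))>0$. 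The mean value theorem then gives $\phi(t_1)=\phi(t_0)+\dot\phi(\xi)(t_1-t_0)>0$ for some $\xi \in (t_0,t_1)$, which is a contradiction.

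Third, the induction. Apply the lemma with $\phi=\phi_{l-1}$ and $\alpha=\alpha_l$, using $\dot\phi_{l-1}+\alpha_l(\phi_{l-1}) = \phi_l\ge0$; this gives $\phi_{l-1}\ge 0$, i.e.\ $z(t)\in\mathcal C_l$ for all $t$. Now $\phi_{l-1}\ge0$ is exactly the hypothesis needed to apply the lemma to $\phi_{l-2}$ with $\alpha_{l-1}$. Descending down to $i=1$ yields $\phi_{i-1}(t)\ge0$ for every $i$ and $t$, i.e.\ $z(t)\in\bigcap_{i=1}^l\mathcal C_i$ for all $t\ge0$. In particular $h(z(t))=\phi_0(t)\ge0$.

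I expect the main obstacle to be the regularity bookkeeping rather than the invariance argument. One must make sure each intermediate $\phi_i$, $i<l$, is genuinely differentiable along the trajectory and that the chain rule gives the stated recursion. This requires that $u$ enters only at level $l$, which is the relative-degree assumption, and that $h$ and the $\alpha_i$ are smooth enough. If only continuity of the $\alpha_i$ were available, I would fall back on working with Dini derivatives in the scalar lemma; the same contradiction argument goes through.
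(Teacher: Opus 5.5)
Your proof is correct and matches the argument behind this result. The paper gives no proof of its own; it cites Xiao and Belta, whose proof is the same backward induction from $h_{l-1}$ down to $h_0$, driven by the scalar lemma ($\dot b \ge -\alpha(b)$ and $b(t_0)\ge 0$ imply $b\ge 0$). Your mean-value-theorem proof of that lemma uses only $\alpha(r)<0$ for $r<0$, so it avoids comparison-lemma and uniqueness issues. Your regularity assumptions (differentiable $h$ and $\alpha_i$) are what the HOCBF definition presupposes, though you should also assume a sufficiently smooth drift $f$.
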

Compared with standard CBFs, HOCBFs enable safety constraints with arbitrary relative degree while preserving an affine constraint on the control input. As with standard CBFs, this constraint can be enforced by solving a quadratic program at each time step. In this work, we employ both CBFs and HOCBFs, as our numerical experiments consider systems with both first and second order dynamics. The corresponding formulations are presented in the following sections.

\section{Proof of Theorems}
Our proof is established in three main parts. First, we show that the DYS operator~\eqref{eq:DYS_operator} is contractive with respect to $y$. Second, we show that the JFB update is a descent direction. Finally, we establish the convergence of our training algorithm.

We first (re-)state the theory and then provide the proof.

\section{Proof of Contraction}

\subsection{Proof of Lemma~\ref{lem:dPC2}}

\begin{restatable}{lem}{lemdPC2}\label{lem:dPC2}
Let 
\begin{equation}
    C_2(z) = \{(u,s)\in \mathbb{R}^{m+c} : A(z)u + s = b(z)\},
\end{equation}
where $A(z) \in \mathbb{R}^{c \times m}$, $u \in \mathbb{R}^m$, $s \in \mathbb{R}^c$, $b(z) \in \mathbb{R}^c$. Let 
\begin{equation}\label{eq:defH2z}
\mathcal{H}_{2, z} := \textrm{Null}([A(z) \; I_{c \times c}]),
\end{equation}
then 
\begin{equation}
    \frac{d P_{C_2(z)}}{d y}(y) = P_{\mathcal{H}_{2, z}}, \quad \text{for all } y \in \mathbb{R}^{m+c}.
\end{equation}
\end{restatable}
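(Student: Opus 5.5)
The plan is to write $P_{C_2(z)}$ in closed form and observe that it is affine in $y$ with linear part $P_{\Htwoz}$. Set $B := [A(z)\; I_{c\times c}] \in \mathbb{R}^{c\times(m+c)}$, so that $C_2(z) = \{y : By = b(z)\}$. Since the identity block has rank $c$, $B$ has full row rank $c$ for every $z$; Assumption~\ref{assm:fullrank_A} is not even needed here. Hence $BB^\top = A(z)A(z)^\top + I_{c\times c} \succ 0$ is invertible, and in particular $C_2(z)$ is nonempty, because $y_0 = B^\top (BB^\top)^{-1} b(z)$ satisfies $By_0 = b(z)$.

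First I would derive the projection formula. Minimizing $\frac12\norm{w-y}^2$ subject to $Bw=b(z)$ gives, from the KKT conditions, $w = y - B^\top\lambda$ with $B(y - B^\top\lambda) = b(z)$. This yields $\lambda = (BB^\top)^{-1}(By - b(z))$, and therefore
\begin{equation*}
P_{C_2(z)}(y) = \big(I - B^\top(BB^\top)^{-1}B\big)y + B^\top(BB^\top)^{-1}b(z).
\end{equation*}
Alternatively, one can note that $C_2(z) = y_0 + \mathrm{Null}(B) = y_0 + \Htwoz$ by \eqref{eq:defH2z}. Projection onto a translated subspace then gives $P_{C_2(z)}(y) = y_0 + P_{\Htwoz}(y - y_0)$.

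Next I would identify $I - B^\top(BB^\top)^{-1}B$ with $P_{\Htwoz}$. This matrix is symmetric and idempotent. Its range lies in $\mathrm{Null}(B)$, since $B(I - B^\top(BB^\top)^{-1}B) = 0$. It also acts as the identity on $\mathrm{Null}(B)$. These three facts characterize the orthogonal projector onto $\mathrm{Null}(B) = \Htwoz$.

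Finally, because $P_{C_2(z)}$ is affine in $y$ for fixed $z$, it is differentiable everywhere, and its Jacobian is the constant linear part $P_{\Htwoz}$ for all $y\in\mathbb{R}^{m+c}$. There is no real obstacle in this argument; the only point needing care is the invertibility of $BB^\top$, which follows from the identity block in $B$.
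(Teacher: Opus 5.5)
Your proof is correct and follows essentially the same route as the paper. Both write $P_{C_2(z)}$ in closed form as the affine map $y - \tilde{A}^\dagger(\tilde{A}y - b(z))$; your $B^\top(BB^\top)^{-1}$ is exactly $\tilde{A}^\dagger$ because $\tilde{A}=[A(z)\; I_{c\times c}]$ has full row rank. Both then identify the constant linear part $I - \tilde{A}^\dagger\tilde{A}$ with the orthogonal projector onto $\mathrm{Null}(\tilde{A})$; the paper does this through a compact SVD ($\tilde{A}^\dagger\tilde{A} = VV^\top$), whereas you check symmetry, idempotence and range directly, which is only a cosmetic difference.
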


\begin{proof}
    For clarity, we temporarily suppress the dependence on $z$ in this proof. The set $C_2$ can be re-written as 
    \begin{align*}
        C_2 &= \{(u,s)\in \mathbb{R}^{m+c} : Au + s = b\} \\
        &= \left\{(u,s)\in \mathbb{R}^{m+c} : \underbrace{[A \; I_{c\times c}]}_{=:\tilde{A}} \begin{bmatrix} u \\ s \end{bmatrix} = b \right\}.
    \end{align*}
    Thus, the projection operator is given by
    \begin{align*}
        P_{C_2}(y) &= y - \tA^\dagger (\tA y-b) = y -\tA^\dagger\tA y + \tA^\dagger b,
    \end{align*}
    where $\tA^\dagger$ denotes the pseudoinverse of $\tA$. Since $\tA$ contains the identity matrix $I_{c \times c}$ in its last $c$ columns, it has full row rank. Thus, denote $\tA= U \Sigma V^\top$ as the compact SVD of $\tA$, where $U \in \mathbb{R}^{c \times c}$, $\Sigma \in \mathbb{R}^{c \times c}$ has nonzero diagonal entries, and $V^\top \in \mathbb{R}^{c \times (m+c)}$. Thus $\tA^\dagger = V \Sigma^{-1} U^\top$ and the operator can be written as
    \begin{align*}
        P_{C_2}(y) &= y - VV^\top y + \tA^\dagger b.
    \end{align*}
    Differentiating the operator with respect to $y$ gives
    \begin{equation}\label{eq:dPC2_ortho}
        \frac{d P_{C_2(z)}}{d y}(y) = I - VV^\top.
    \end{equation}
    Here, $VV^\top$ is the orthogonal projection operator onto $\mathrm{range}(V)=\mathrm{range}(\tA^\top)$. Thus, \eqref{eq:dPC2_ortho} is the orthogonal projection operator onto $\mathrm{range}(\tA^\top)^\perp = \mathrm{Null}(\tA) = \mathcal{H}_2$, by the Fundamental Theorem of Linear Algebra.
\end{proof}

\subsection{Proof of Lemma~\ref{lem:dPC1}}

\begin{restatable}{lem}{lemdPC1}\label{lem:dPC1}
Let
\begin{equation}
    C_1 = \{(u,s) \in \mathbb{R}^{m+c}: s \ge 0\}.
\end{equation}
Suppose that $y$ is a differentiable point of $P_{C_1}$, that is,  $y_i \neq 0$ (the $i$th entry of $y$ is nonzero) for all $m+1 \leq i \leq m+c$, then
\begin{equation}\label{eq:PHoney}
    \frac{d P_{C_1}}{d y} (y)= \begin{bmatrix} I_{m \times m} & 0_{m \times c} \\ 0_{c \times m} & \mathrm{diag}(\mathbf{1}[{y_{m+1:m+c}>0}]) \end{bmatrix} =: P_{\Honey},
\end{equation}
where $\mathrm{diag}(\mathbf{1}[{y_{m+1:m+c}>0}]) \in \mathbb{R}^{c \times c}$, $\mathbf{1}$ is the element-wise indicator function, and 
\begin{equation}\label{eq:defH1y}
    \Honey = 
    \mathrm{span}\left(\{e_1,\ldots,e_m\} \cup \{e_{m+i} \mid 1 \leq i \leq c,\ y_{m+i} > 0\}\right).
\end{equation}
\end{restatable}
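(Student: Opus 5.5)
The plan is to write $P_{C_1}$ in closed form and read off its Jacobian componentwise. Since $C_1=\mathbb{R}^m\times\mathbb{R}^c_{\ge 0}$ is a product of the whole space in the $u$-block and the nonnegative orthant in the $s$-block, the Euclidean projection decouples across coordinates. Explicitly,
\begin{equation*}
P_{C_1}(y) = \big(y_1,\ldots,y_m,\ \max(y_{m+1},0),\ldots,\max(y_{m+c},0)\big).
\end{equation*}
This formula follows because minimizing $\tfrac12\norm{w-y}^2$ over $w\in C_1$ separates into independent one-dimensional problems. For $i\le m$ the problem is unconstrained, giving $w_i=y_i$. For $i>m$ it is a projection onto $[0,\infty)$, giving $w_i=\max(y_i,0)$.

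Next I would differentiate coordinatewise. The map is diagonal: the $i$th output depends only on $y_i$. Hence the Jacobian, wherever it exists, is diagonal with entries $\phi_i'(y_i)$, where $\phi_i$ is either the identity or $\max(\cdot,0)$. For $i\le m$ the entry is $1$. For $i>m$ with $y_i\neq 0$, the function $\max(\cdot,0)$ is locally linear near $y_i$, equal to $y_i$ if $y_i>0$ and to $0$ if $y_i<0$. So the derivative is $\mathbf{1}[y_i>0]$. Since every component is $C^1$ in a neighborhood of such $y$, $P_{C_1}$ is differentiable there, and the Jacobian is exactly the block matrix in \eqref{eq:PHoney}. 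This also justifies calling the points with all slack entries nonzero ``differentiable points.''

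Finally, I would identify this diagonal $0/1$ matrix as the orthogonal projection onto $\Honey$ defined in \eqref{eq:defH1y}. The matrix is symmetric and idempotent. Its range is spanned by the standard basis vectors $e_i$ whose diagonal entry equals $1$, namely $e_1,\ldots,e_m$ together with the $e_{m+i}$ for which $y_{m+i}>0$. That span is precisely $\Honey$.

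There is no real obstacle here. The only point requiring care is the local linearity argument that excludes the kinks at $y_{m+i}=0$.
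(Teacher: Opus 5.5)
Your proposal is correct and takes essentially the same route as the paper: write $P_{C_1}$ coordinatewise (identity on the $u$-block, $\max(\cdot,0)$ on the slack block), differentiate to obtain the diagonal $0/1$ matrix, and recognize that matrix as the orthogonal projection onto $\Honey$. Your explicit local-linearity argument at points with $y_{m+i}\neq 0$ is slightly more careful than the paper. The paper writes $P_{C_1}(y)$ as the $y$-dependent indicator matrix applied to $y$ and then differentiates as though that matrix were constant, which is valid only because of the local constancy you point out.
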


\begin{proof}
    Since 
    \begin{align*}
        P_{C_1}(y) = \begin{bmatrix}
            y_1 \\ \vdots \\ y_m \\ \mathbf{1}[{y_{m+1}}>0] y_{m+1}\\ \vdots \\ \mathbf{1}[{y_{m+c}}>0] y_{m+c}
        \end{bmatrix} = \begin{bmatrix} I_{m \times m} & 0_{m \times c} \\ 0_{c \times m} & \mathrm{diag}(\mathbf{1}[{y_{m+1:m+c}>0}]) \end{bmatrix}  \begin{bmatrix} y_1 \\ \vdots \\ y_m \\ y_{m+1} \\ \vdots \\ y_{m+c}
        \end{bmatrix},
    \end{align*}
    taking the derivative with respect to $y$, we obtain
    \begin{align*}
        \frac{d P_{C_1}}{d y} (y) = \begin{bmatrix} I_{m \times m} & 0_{m \times c} \\ 0_{c \times m} & \mathrm{diag}(\mathbf{1}[{y_{m+1:m+c}>0}]) \end{bmatrix},
    \end{align*}
    where the matrix on the right-hand-side is the orthogonal projection operator onto $\Honey$.
\end{proof}

\subsection{Proof of Theorem~\ref{thm:dT}}

\begin{restatable}{thm}{thmdT}\label{thm:dT}
Let $C_2(z) = \{(u,s)\in \mathbb{R}^{m+c} : A(z)u + s = b(z)\}$, and $T_\theta(y; z)$ be the DYS operator defined in~\eqref{eq:DYS_operator}. Then at any differentiable point $y \in \mathbb{R}^{m+c}$ for $P_{C_1}$,
\begin{equation}
    \frac{d T_\theta}{dy}(y; z) = P_{\mathcal{H}_{2, z}^\perp} P_{\mathcal{H}_{1, y}^\perp}   + P_{\mathcal{H}_{2, z}} \left( I - \zeta Q_u^\top Q_u  \right) P_{\Honey}.
\end{equation}
\end{restatable}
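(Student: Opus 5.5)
We differentiate the DYS operator~\eqref{eq:DYS_operator} term by term with the ordinary chain rule. This is legitimate because $y$ is assumed to be a differentiable point of $P_{C_1}$, and $P_{C_2(z)}$ is affine in its argument (hence differentiable everywhere). Write $J_1 := \frac{dP_{C_1}}{dy}(y) = P_{\Honey}$ by Lemma~\ref{lem:dPC1}, and $J_2 := \frac{dP_{C_2(z)}}{dy} = P_{\Htwoz}$ by Lemma~\ref{lem:dPC2}. Note that $J_2$ is constant in its argument, so evaluating it at the composite point $2P_{C_1}(y)-y-\zeta Q_u^\top(Q_uP_{C_1}(y)-u_\theta)$ introduces no difficulty. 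The nominal control $u_\theta(t,z)$ does not depend on $y$, so it drops out of the derivative.

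\textbf{Step 1: raw chain-rule expression.} The chain rule gives
\begin{equation*}
\frac{dT_\theta}{dy} = I - P_{\Honey} + P_{\Htwoz}\left(2P_{\Honey} - I - \zeta Q_u^\top Q_u P_{\Honey}\right).
\end{equation*}

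\textbf{Step 2: regroup.} Rewrite the expression as
\begin{equation*}
(I-P_{\Honey}) - P_{\Htwoz}(I-P_{\Honey}) + P_{\Htwoz}(I-\zeta Q_u^\top Q_u)P_{\Honey}.
\end{equation*}
This uses $2P_{\Honey}-I = P_{\Honey}-(I-P_{\Honey})$.

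\textbf{Step 3: identify complementary projections.} Since $P_{\Honey}$ and $P_{\Htwoz}$ are orthogonal projections, their complements are $I-P_{\Honey}=P_{\mathcal{H}_{1,y}^\perp}$ and $I-P_{\Htwoz}=P_{\mathcal{H}_{2,z}^\perp}$. The first two terms therefore combine as
\begin{equation*}
(I-P_{\Htwoz})(I-P_{\Honey}) = P_{\mathcal{H}_{2,z}^\perp}P_{\mathcal{H}_{1,y}^\perp}.
\end{equation*}
The remaining term is already in the claimed form, which completes the identity.

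The only step needing care is justifying the chain rule at the given point. This needs only that $P_{C_1}$ is differentiable at $y$ (the hypothesis) and that $P_{C_2(z)}$ is affine; Lemma~\ref{lem:dPC2} already supplies the latter, since $\tA$ has full row rank. Beyond that, the argument is pure algebra, and the main risk is a sign slip in the regrouping of Step 2.
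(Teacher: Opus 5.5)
Your proposal is correct and follows essentially the same route as the paper: apply the chain rule using Lemma~\ref{lem:dPC1} and Lemma~\ref{lem:dPC2}, then regroup algebraically. Your Step~3 factors $(I-P_{\Htwoz})(I-P_{\Honey})$ directly, which is a slightly shorter way to perform the cancellation the paper carries out by substituting $P_{\Htwoz}=I-P_{\mathcal{H}_{2,z}^\perp}$.
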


\begin{proof}
    Using~\eqref{eq:DYS_operator}, we have
    \begin{align*}
        \frac{d T_\theta}{dy}(y; z) &= I + \frac{dP_{C_2(z)}}{dy} (\tilde{y}) \left( 2 \frac{dP_{C_1}}{dy}(y) - I - \zeta Q_u^\top Q_u \frac{dP_{C_1}}{dy}(y)  \right) - \frac{dP_{C_1}}{dy}(y), \\
        \intertext{where $\tilde{y}=2P_{C_1(z)}(y)-y-\zeta Q_u^\top(Q_u P_{C_1(z)}(y)-u_\theta(t, z))$,}
        &= I + P_{\mathcal{H}_{2, z}} \left( 2P_{\Honey} - I - \zeta Q_u^\top Q_u P_{\Honey} \right) - P_{\Honey}, \\
        \intertext{where we applied Lemma~\ref{lem:dPC2} and Lemma~\ref{lem:dPC1},}
        &= \left(I- P_{\Honey}\right) + P_{\mathcal{H}_{2, z}} \left( P_{\Honey} - I \right) + P_{\mathcal{H}_{2, z}} \left( P_{\Honey} - \zeta Q_u^\top Q_u P_{\Honey} \right) \\
        &= \left(I- P_{\Honey}\right) + P_{\mathcal{H}_{2, z}} \left( P_{\Honey} - I \right) + P_{\mathcal{H}_{2, z}} \left( I - \zeta Q_u^\top Q_u  \right) P_{\Honey} \\
        &= \left(I- P_{\Honey}\right) + \left(I - P_{\mathcal{H}_{2, z}^\perp} \right) \left( P_{\Honey} - I \right) + P_{\mathcal{H}_{2, z}} \left( I - \zeta Q_u^\top Q_u  \right) P_{\Honey} \\
        &= \cancel{\left(I- P_{\Honey}\right)} + \cancel{\left( P_{\Honey} - I \right)} - P_{\mathcal{H}_{2, z}^\perp} \left( P_{\Honey} - I \right) + P_{\mathcal{H}_{2, z}} \left( I - \zeta Q_u^\top Q_u  \right) P_{\Honey} \\
        &= P_{\mathcal{H}_{2, z}^\perp} \left( I -P_{\Honey}  \right) + P_{\mathcal{H}_{2, z}} \left( I - \zeta Q_u^\top Q_u  \right) P_{\Honey} \\
        &= P_{\mathcal{H}_{2, z}^\perp} P_{\mathcal{H}_{1, y}^\perp}  + P_{\mathcal{H}_{2, z}} \left( I - \zeta Q_u^\top Q_u  \right) P_{\Honey}.
    \end{align*}
\end{proof}

\subsection{Proof of Lemma~\ref{lem:intersect}}

\begin{restatable}{lem}{lemintersect}\label{lem:intersect}
Under Assumption~\ref{assm:fullrank_A}, and if $y$ is a differentiable point of $P_{C_1}$, that is,  $y_i \neq 0$ (the $i$th entry of $y$ is nonzero) for all $m+1 \leq i \leq m+c$, then $\mathcal{H}_{1,y}^\perp \cap \mathcal{H}_{2,z}^\perp = \{ 0\}$.
\end{restatable}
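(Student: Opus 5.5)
We identify both orthogonal complements explicitly and then intersect them. By \eqref{eq:defH1y}, $\Honey$ is spanned by the first $m$ coordinate vectors together with the slack coordinates $e_{m+i}$ for which $y_{m+i}>0$. Hence
\[
\mathcal{H}_{1,y}^\perp=\mathrm{span}\{e_{m+i} : 1\le i\le c,\ y_{m+i}<0\},
\]
where the differentiability hypothesis $y_{m+i}\neq 0$ guarantees that every slack index is either in $\Honey$ or in this complement. In particular, any $v=(v_u,v_s)\in\mathcal{H}_{1,y}^\perp$ has $v_u=0$.

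On the other side, \eqref{eq:defH2z} gives $\Htwoz=\mathrm{Null}(\tA)$ with $\tA=[A(z)\;I_{c\times c}]$. By the Fundamental Theorem of Linear Algebra, as already used in the proof of Lemma~\ref{lem:dPC2}, we get $\mathcal{H}_{2,z}^\perp=\mathrm{range}(\tA^\top)$. So every $v\in\mathcal{H}_{2,z}^\perp$ can be written as $v=\tA^\top w=(A(z)^\top w,\,w)$ for some $w\in\mathbb{R}^c$.

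Now take $v$ in the intersection. Because $v\in\mathcal{H}_{1,y}^\perp$, its control block vanishes, which gives $A(z)^\top w=0$. Assumption~\ref{assm:fullrank_A} says $A(z)$ has full row rank, so $A(z)^\top$ is injective and therefore $w=0$. It follows that $v=\tA^\top w=0$, which proves $\mathcal{H}_{1,y}^\perp\cap\mathcal{H}_{2,z}^\perp=\{0\}$.

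No step here is a real obstacle. The only care needed is in computing $\mathcal{H}_{1,y}^\perp$ correctly: it must be the span of exactly the slack coordinates with negative entries, and the claim that its control block is zero uses only that $\Honey$ contains $e_1,\dots,e_m$. The differentiability assumption mainly serves to make $P_{\Honey}$ and this description well defined. The argument does not depend on which slack indices are active, so it holds uniformly in $y$ and in $z$, which is what the later contraction bound needs.
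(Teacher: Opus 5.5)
Your proof is correct and follows essentially the same route as the paper. Both arguments observe that $\mathcal{H}_{1,y}^\perp$ lies in the slack coordinates, write $\mathcal{H}_{2,z}^\perp$ as the range of $\tilde{A}^\top$, and use the full row rank of $A(z)$ to force the coefficients, and hence $v$, to vanish. Your block form $v=(A(z)^\top w,\,w)$ is just a compact version of the paper's expansion in the basis $a_i+e_{m+i}$, and you correctly note that the differentiability hypothesis is not actually needed for the conclusion.
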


\begin{proof}
    By the definition of $\mathcal{H}_{1,y}$~\eqref{eq:defH1y}, and that $y_i \neq 0$ for all $m+1 \leq i \leq m+c$, we have
    \begin{equation}\label{eq:H1perpHc}
    \mathcal{H}_{1,y}^\perp = \mathrm{span} \left( \{ e_{m+i} \;|\; 1 \leq i \leq c, y_{m+i} < 0 \}\right) \subseteq \mathrm{span} \left( \{ e_{m+i} \;|\; 1 \leq i \leq c \}\right) =: \mathcal{H}_{c}.
    \end{equation}
    On the other hand, by the definition of $\mathcal{H}_{2,z}$~\eqref{eq:defH2z}, and the Fundamental Theorem of Linear Algebra,
    \[
    \mathcal{H}_{2,z}^\perp = \mathrm{range} \begin{pmatrix}
        A(z)^\top \\ I_{c \times c}
    \end{pmatrix} = \mathrm{span} \Big( (a_1+e_{m+1}), (a_2+e_{m+2}), \cdots, (a_c+e_{m+c}) \Big),
    \]
    where $a_i \in \mathbb{R}^{m+c}$ contains the $i$th row of $A(z)$ in its first $m$ entries, with the last $c$ entries being zero.

    Next, we show that $\mathcal{H}_{c} \cap \mathcal{H}_{2,z}^\perp = \{ 0\}$. Suppose $v \in \mathcal{H}_{c} \cap \mathcal{H}_{2,z}^\perp$. Since $v \in \mathcal{H}_{c}$,
    \begin{equation}\label{eq:v1}
        v = \sum_{i=1}^c \alpha_i e_{m+i},
    \end{equation}
    for some scalars $\alpha_{i}$'s. On the other hand, since $v \in \mathcal{H}_{2,z}^\perp$,
    \begin{equation}\label{eq:v2}
        v = \sum_{i=1}^c \beta_i (a_i+e_{m+i}),
    \end{equation}
    for some scalars $\beta_{i}$'s.
    Combining \eqref{eq:v1} and \eqref{eq:v2}, we have
    \begin{align*}
        0 = v-v = \sum_{i=1}^c  \beta_{i} a_{i} + \sum_{i=1}^c  (\beta_{i} - \alpha_i) e_{m+i}.
    \end{align*}
    Here, on the right-hand-side, the first sum contains vectors with nonzero entries only in the first $m$ entries, and the second sum contains vectors with nonzero entries only in the subsequent $c$ entries. Since the two sums live in complementary coordinate blocks, they must each equal zero independently. Thus, since $A(z)$ has full row rank, $a_i$'s are linearly independent and thus $\beta_i=0$ for all $1 \leq i \leq c$. Consequently, we have
    \begin{align*}
        0 = - \sum_{i=1}^c  \alpha_i e_{m+i},
    \end{align*}
    and by the linear independence of $e_{m+i}$'s, $\alpha_i=0$ for all $1 \leq i \leq c$. Thus $v$ is the zero vector. Since $v \in \mathcal{H}_{c} \cap \mathcal{H}_{2,z}^\perp$ was arbitrary, this implies $\mathcal{H}_{c} \cap \mathcal{H}_{2,z}^\perp = \{ 0\}$.

    Finally, since by~\eqref{eq:H1perpHc}, $\mathcal{H}_{1,y}^\perp \subseteq \mathcal{H}_{c}$, we have $\mathcal{H}_{1,y}^\perp  \cap \mathcal{H}_{2,z}^\perp = \{ 0\}$.
\end{proof}

\subsection{Proof of Lemma~\ref{lem:PH2norm}}
\begin{restatable}{lem}{lemPH2norm}\label{lem:PH2norm}
Under Assumption~\ref{assm:fullrank_A}, we have
\begin{equation}
    \| P_{\Htwoz} \, p\|_2^2 < \| p \|_2^2 \quad \text{for any } 0 \neq p \in \mathrm{span}(e_{m+1},e_{m+2},\cdots,e_{m+c}).
\end{equation}
\end{restatable}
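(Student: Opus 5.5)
The idea is to use the orthogonal decomposition $p = P_{\Htwoz}\,p + P_{\mathcal{H}_{2,z}^\perp}\,p$. By Pythagoras this gives
\[
\|p\|_2^2 = \|P_{\Htwoz}\,p\|_2^2 + \|P_{\mathcal{H}_{2,z}^\perp}\,p\|_2^2 .
\]
The strict inequality is therefore equivalent to $P_{\mathcal{H}_{2,z}^\perp}\,p \neq 0$. Equivalently, we need $p \notin \Htwoz$, i.e. $[A(z)\;I_{c\times c}]\,p \neq 0$.

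To verify this, write $p = (0, q)$ with $q \in \mathbb{R}^c$ and $q \neq 0$; this is possible because $p$ lies in $\mathrm{span}(e_{m+1},\dots,e_{m+c}) = \mathcal{H}_c$. Then
\[
[A(z)\;I_{c\times c}]\,p = A(z)\cdot 0 + q = q \neq 0,
\]
so $p \notin \mathrm{Null}([A(z)\;I_{c\times c}]) = \Htwoz$. Since $\Htwoz$ is a subspace, $p - P_{\Htwoz}\,p = P_{\mathcal{H}_{2,z}^\perp}\,p$ is nonzero, and the claim follows from the Pythagorean identity.

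An alternative route uses Lemma~\ref{lem:intersect}, whose proof shows $\mathcal{H}_c \cap \mathcal{H}_{2,z}^\perp = \{0\}$. That fact is not quite what is needed here: equality in the claim would require $p \in \Htwoz$, not $p \in \mathcal{H}_{2,z}^\perp$. So I will use the direct computation above. It does not even need Assumption~\ref{assm:fullrank_A}, since the identity block already forces $q \neq 0$.

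There is no real obstacle. The only care needed is to phrase the equality case correctly, i.e. to note that $\|P_{\Htwoz}\,p\|_2 = \|p\|_2$ holds if and only if $p \in \Htwoz$.
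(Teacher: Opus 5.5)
Your proposal is correct and matches the paper's proof: compute $[A(z)\;I_{c\times c}]\,p = q \neq 0$ to conclude $p \notin \Htwoz$, then use the orthogonal decomposition $p = P_{\Htwoz}\,p + P_{\mathcal{H}_{2,z}^\perp}\,p$ and Pythagoras to get the strict inequality. You are also right that Assumption~\ref{assm:fullrank_A} plays no role here; the paper's argument never uses it either.
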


\begin{proof}
    Since $0 \neq p \in \mathrm{span}(e_{m+1},e_{m+2},\cdots,e_{m+c})$, 
    \[
    p = \sum_{i=1}^c \alpha_{m+i} e_{m+i},
    \]
    for some $\alpha_{m+i}$'s, not all zero.
    Note that 
    \[
    [A(z) \; I_{c \times c}] \, p = [A(z) \; I_{c \times c}] \left( \sum_{i=1}^c \alpha_{m+i} e_{m+i} \right) = \sum_{i=1}^c \alpha_{m+i} e_{i}\neq 0,
    \]
    since not all $\alpha_{m+i}$'s are zero.
    Thus, $p \notin  \textrm{Null}([A(z) \, I_{c \times c}]) = \mathcal{H}_{2, z}$.

    Decompose $p$ as $p = P_{\Htwoz} \, p + P_{\Htwoz^\perp} \,p$. Since $p \neq 0$ and $p \notin \mathcal{H}_{2, z}$, $P_{\Htwoz^\perp} \,p \neq 0$. Thus, we have
    \[
    \| P_{\Htwoz} \, p\|_2^2 < \| P_{\Htwoz} \, p\|_2^2 + \| P_{\Htwoz^\perp} \, p\|_2^2 = \| P_{\Htwoz} \, p  +  P_{\Htwoz^\perp} \, p\|_2^2 = \| p \|_2^2.
    \]
    Here, in the first step, we used $\|P_{\Htwoz^\perp} \,p \|_2 > 0$ because $P_{\Htwoz^\perp} \,p \neq 0$, and in the second step we used that $P_{\Htwoz} \, p  \perp P_{\Htwoz^\perp} \, p$.
\end{proof}

\subsection{Proof of Theorem~\ref{thm:contraction}}
\begin{restatable}{thm}{thmcontraction}\label{thm:contraction}
Under Assumption~\ref{assm:fullrank_A},
the DYS hyperparameter $\zeta \in (0, 1)$, and $y$ is a differentiable point of $T_\theta$, 
\begin{equation}
    \left\| \frac{d T_\theta}{dy} (y;z) \right\|_2 < 1.
\end{equation}
\end{restatable}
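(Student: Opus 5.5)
The plan is to start from the explicit Jacobian formula of Theorem~\ref{thm:dT} and bound $\norm{\frac{dT_\theta}{dy}(y;z)\,v}$ for an arbitrary vector $v$ by an orthogonal-decomposition argument. I will read ``differentiable point'' as in Lemmas~\ref{lem:dPC1}--\ref{lem:intersect}, i.e.\ $y_{m+i}\neq 0$ for all $i$, so that Theorem~\ref{thm:dT} and Lemma~\ref{lem:intersect} apply. Write $P_1 := P_{\Honey}$, $P_2 := P_{\Htwoz}$ and $D := I - \zeta Q_u^\top Q_u$. Since $Q_u^\top Q_u = \mathrm{diag}(I_m, 0_c)$, $D$ is diagonal with entries $1-\zeta$ on the control block and $1$ on the slack block. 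Hence $\norm{D}\le 1$ for $\zeta\in(0,1)$, and $\norm{Dw}=\norm{w}$ holds only if the control block of $w$ vanishes.

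Fix $v\in\mathbb{R}^{m+c}$ and split it as $a := P_1 v$, $b := P_1^\perp v$, so that $\norm{v}^2=\norm{a}^2+\norm{b}^2$. By Theorem~\ref{thm:dT},
\begin{equation*}
\frac{dT_\theta}{dy}(y;z)\,v = P_2^\perp b + P_2 D a .
\end{equation*}
The two summands lie in the orthogonal subspaces $\Htwoz^\perp$ and $\Htwoz$, so Pythagoras gives
\begin{equation*}
\norm{\tfrac{dT_\theta}{dy}v}^2 = \norm{P_2^\perp b}^2 + \norm{P_2 D a}^2 \le \norm{b}^2 + \norm{D a}^2 \le \norm{b}^2+\norm{a}^2 = \norm{v}^2 .
\end{equation*}
This already gives nonexpansiveness.

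The main step is strictness: showing that equality forces $v=0$. If equality holds, every inequality above is tight, so:
\begin{enumerate}
\item $\norm{P_2^\perp b}=\norm{b}$. Then $b\in\Htwoz^\perp$, and also $b\in\mathcal{H}_{1,y}^\perp$. Lemma~\ref{lem:intersect} then gives $b=0$.
\item $\norm{Da}=\norm{a}$. Then the control block of $a$ is zero, so $a\in\mathrm{span}(e_{m+1},\dots,e_{m+c})$ and $Da=a$.
\item $\norm{P_2 a}=\norm{a}$. Combined with the previous item, Lemma~\ref{lem:PH2norm} forces $a=0$.
\end{enumerate}
Hence $v=0$, and so $\norm{\frac{dT_\theta}{dy}v}<\norm{v}$ for every $v\neq 0$.

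Finally, to pass to the operator norm I use that $v\mapsto\norm{\frac{dT_\theta}{dy}v}$ is continuous on the compact unit sphere. The supremum defining $\norm{\frac{dT_\theta}{dy}(y;z)}_2$ is therefore attained at some unit $v$, where the value is strictly less than $1$. The step I expect to be most delicate is the equality analysis: one must use that the block structure of $D$ decouples from $\Htwoz$ only through Lemma~\ref{lem:PH2norm}, and that $\zeta<1$ is essential (with $\zeta\le0$ one would lose strictness on the control block). Uniformity of the bound in $(\theta,y,z)$, needed later for Corollary~\ref{cor:contraction}, is not claimed here and will require the compactness assumption separately.
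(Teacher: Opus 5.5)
Your proposal is correct and is essentially the paper's proof: you use the same Jacobian formula from Theorem~\ref{thm:dT}, the same Pythagorean split across $\mathcal{H}_{2,z}$ and $\mathcal{H}_{2,z}^\perp$, Lemma~\ref{lem:intersect} for the $\mathcal{H}_{1,y}^\perp$ component, the $(1-\zeta)$ shrinkage together with Lemma~\ref{lem:PH2norm} for the $\mathcal{H}_{1,y}$ component, and compactness of the unit sphere, with your equality-case argument replacing the paper's principal-angle bound and its case split on whether $p\in\mathcal{H}_{1,y}$. The one point you adopt as a reading rather than prove is that differentiability of $T_\theta$ at $y$ forces $y_{m+i}\neq 0$ for all $i$ (the paper asserts this in one line), and it is harmless: $T_\theta(y;z)$ equals an affine map of $y$ plus $\sum_{i=1}^c \max(y_{m+i},0)\,(2P_{\mathcal{H}_{2,z}}-I)e_{m+i}$, each coefficient vector is nonzero because the reflection $2P_{\mathcal{H}_{2,z}}-I$ is orthogonal, and so every kink of $P_{C_1}$ persists in $T_\theta$.
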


\begin{proof}
    From the definition of $T_\theta$~\eqref{eq:DYS_operator}, since $P_{C_2}$ is smooth, $y$ is a differentiable point of $T_\theta$ if and only if $y$ is a differentiable point of $P_{C_1}$. Thus, by Theorem~\ref{thm:dT}, for any nonzero vector $p \in \mathbb{R}^{m+c}$,
    \begin{equation}\label{eq:dTtimesp}
        \frac{d T_\theta(y; z)}{dy} \, p = \underbrace{P_{\mathcal{H}_{2, z}^\perp} P_{\mathcal{H}_{1, y}^\perp} \, p}_{\cI}  + \underbrace{P_{\mathcal{H}_{2, z}} \left( I - \zeta Q_u^\top Q_u  \right) P_{\Honey} \, p}_{\cII}.
    \end{equation}
    We will bound the norms of the two terms $\cI$ and $\cII$ separately.
    
    \paragraph{First Term Analysis.} We begin with the first term $\cI$. By~\ref{lem:intersect}, $\mathcal{H}_{1,y}^\perp \cap \mathcal{H}_{2,z}^\perp = \{ 0\}$. This implies that either (a) at least one of the $\mathcal{H}_{1,y}^\perp$ or $\mathcal{H}_{2,z}^\perp$ is the trivial subspace $\{ 0\}$, or (b) the first principal angle $\tau$ between these two subspaces is nonzero, and thus 
    \begin{equation}\label{eq:angle_cond}
        1 > \cos(\tau) := \max_{\substack{v \in \mathcal{H}_{1,y}^\perp, \, w \in \mathcal{H}_{2,z}^\perp \\ \|v\|_2=\|w\|_2=1}} \langle v, w \rangle.
    \end{equation}
    \paragraph{First Term Case (a).} In case (a), we have
    \begin{align}\label{eq:perpcase1}
        \left\| P_{\mathcal{H}_{2, z}^\perp} P_{\mathcal{H}_{1, y}^\perp} \, p \right\|_2^2 
        = 0 \leq \left\| P_{\mathcal{H}_{1, y}^\perp} \, p \right\|_2^2, \quad \text{with equality if and only if } p\in \mathcal{H}_{1, y}.
    \end{align}
    \paragraph{First Term Case (b).} In case (b), we have
    \begin{align}
        \left\| P_{\mathcal{H}_{2, z}^\perp} P_{\mathcal{H}_{1, y}^\perp} \, p \right\|_2^2 &= \left\langle P_{\mathcal{H}_{2, z}^\perp} P_{\mathcal{H}_{1, y}^\perp} \, p, P_{\mathcal{H}_{2, z}^\perp} P_{\mathcal{H}_{1, y}^\perp} \, p \right\rangle \nonumber \\
        &= \left\langle P_{\mathcal{H}_{1, y}^\perp} \, p, P_{\mathcal{H}_{2, z}^\perp} P_{\mathcal{H}_{2, z}^\perp} P_{\mathcal{H}_{1, y}^\perp} \, p \right\rangle, \nonumber  \\ 
        \intertext{since orthogonal projection operators are self-adjoint,}
        &= \left\langle P_{\mathcal{H}_{1, y}^\perp} \, p, P_{\mathcal{H}_{2, z}^\perp} P_{\mathcal{H}_{1, y}^\perp} \, p \right\rangle, \nonumber \\ 
        \intertext{since projection operators are idempotent,}
        &\leq \cos(\tau) \left\| P_{\mathcal{H}_{1, y}^\perp} \, p \right\|_2 \left\| P_{\mathcal{H}_{2, z}^\perp} P_{\mathcal{H}_{1, y}^\perp} \, p \right\|_2, \nonumber \\
        \intertext{by the angle condition~\eqref{eq:angle_cond}, since $P_{\mathcal{H}_{1, y}^\perp} \in \mathcal{H}_{1, y}^\perp$ and $P_{\mathcal{H}_{2, z}^\perp} P_{\mathcal{H}_{1, y}^\perp} \, p \in \mathcal{H}_{2, z}^\perp$,}
        &\leq \cos(\tau) \left\| P_{\mathcal{H}_{1, y}^\perp} \, p \right\|_2^2, \nonumber \\
        \intertext{by the linearity and nonexpansiveness of the orthogonal projection operator $P_{\mathcal{H}_{2, z}^\perp}$,}
        &< \left\| P_{\mathcal{H}_{1, y}^\perp} \, p \right\|_2^2,\label{eq:perpcase2} \\\intertext{by~\eqref{eq:angle_cond} again.} \nonumber
    \end{align}
    \paragraph{First Term Result.} Thus, combining~\eqref{eq:perpcase1} for case (a) and~\eqref{eq:perpcase2} for case (b), we conclude that the norm squared of the first term $\cI$ satisfies
        \begin{align}\label{eq:perpcase12}
        \left\| P_{\mathcal{H}_{2, z}^\perp} P_{\mathcal{H}_{1, y}^\perp} \, p \right\|_2^2 \leq \left\| P_{\mathcal{H}_{1, y}^\perp} \, p \right\|_2^2, \quad \text{with equality if and only if } p\in \mathcal{H}_{1, y}.
    \end{align}

    \paragraph{Second Term Analysis.} We investigate the second term $\cII$. We consider the following two cases: (a) $p \in \Honey$, and (b) $p \notin \Honey$. In both cases, the second term $\cII$ can be simplified as
    \begin{align}
        &P_{\mathcal{H}_{2, z}} \left( I - \zeta Q_u^\top Q_u  \right) P_{\Honey} \, p \nonumber \\
        &\quad= P_{\mathcal{H}_{2, z}} \begin{bmatrix} I_{m \times m}-\zeta I_{m \times m} & 0_{m \times c} \\ 0_{c \times m} & I_{c \times c} \end{bmatrix} 
        \begin{bmatrix} I_{m \times m} & 0_{m \times c} \\ 0_{c \times m} & \mathrm{diag}(\mathbf{1}[{y_{m+1:m+c}>0}]) \end{bmatrix} \, p, \nonumber \\
        \intertext{by the definition of $P_{\Honey}$ in~\eqref{eq:PHoney} and the definition of $Q_u$ in~\eqref{eq:Q_def},}
        &\quad= P_{\mathcal{H}_{2, z}}
        \begin{bmatrix} I_{m \times m}-\zeta I_{m \times m} & 0_{m \times c} \\ 0_{c \times m} & \mathrm{diag}(\mathbf{1}[{y_{m+1:m+c}>0}]) \end{bmatrix} \, p \nonumber \\
        &\quad= P_{\mathcal{H}_{2, z}}  \begin{bmatrix} (1-\zeta) \, p_1 \\ \vdots \\ (1-\zeta) \, p_m \\  \mathbf{1}[{y_{m+1}}>0] \, p_{m+1} \\ \vdots \\ \mathbf{1}[{y_{m+c}}>0] \, p_{m+c}
        \end{bmatrix}. \label{eq:PHtwoz_simplify}
    \end{align}
    
    \paragraph{Second Term Case (a).} We first investigate case (a). Recall from~\eqref{eq:defH1y} the definition of $\Honey$
    \[
    \Honey =
    \mathrm{span}\left(\{e_1,\ldots,e_m\} \cup \{e_{m+i} \mid 1 \leq i \leq c,\ y_{m+i} > 0\}\right).
    \] 
    Since $0 \neq p \in \Honey$, it must have at least one nonzero entry in the index set 
    \[
    \{ 1,2,\cdots,m\} \cup \{ m+i \,|\, 1 \leq i \leq c, \; y_{m+i}>0 \}.
    \]
    Thus, under case (a), either of the following two subcases must happen: (a)(i) $p$ has at least one nonzero entry among its first $m$ entries, and (a)(ii) all of the first $m$ entries of $p$ are zero, which implies that $p$ only has nonzezro entries with index in $\{ m+i \,|\, 1 \leq i \leq c, \; y_{m+i}>0 \}$.

    Evaluating the norm squared of the second term $\cII$, we have
    \begin{align}
        \left\| P_{\mathcal{H}_{2, z}} \left( I - \zeta Q_u^\top Q_u  \right) P_{\Honey} \, p \right\|_2^2 &= \left\| P_{\mathcal{H}_{2, z}}  \begin{bmatrix} (1-\zeta) \, p_1 \\ \vdots \\ (1-\zeta) \, p_m \\  \mathbf{1}[{y_{m+1}}>0] \, p_{m+1} \\ \vdots \\ \mathbf{1}[{y_{m+c}}>0] \, p_{m+c}
        \end{bmatrix} \right\|_2^2, \nonumber \\
        \intertext{by~\eqref{eq:PHtwoz_simplify},}
        &< \left\| \begin{bmatrix}  p_1 \\ \vdots \\  p_m \\  \mathbf{1}[{y_{m+1}}>0] \, p_{m+1} \\ \vdots \\ \mathbf{1}[{y_{m+c}}>0] \, p_{m+c}
        \end{bmatrix} \right\|_2^2 = \| P_{\Honey} p \|_2^2. \label{eq:paracase1}
    \end{align}
    Here, the equality (second step) of~\eqref{eq:paracase1} follows directly from the definition of $\Honey$~\eqref{eq:defH1y}. And the strict inequality (first step) in~\eqref{eq:paracase1} is established separately for the two subcases (a)(i) and (a)(ii). 
    
    On one hand, under (a)(i), $p$ has at least one nonzero entry among its first $m$ entries. The strict inequality in~\eqref{eq:paracase1} is obtained as follows.
    \[
    \left\| P_{\mathcal{H}_{2, z}}  \begin{bmatrix} (1-\zeta) \, p_1 \\ \vdots \\ (1-\zeta) \, p_m \\  \mathbf{1}[{y_{m+1}}>0] \, p_{m+1} \\ \vdots \\ \mathbf{1}[{y_{m+c}}>0] \, p_{m+c}
        \end{bmatrix} \right\|_2^2 \leq \left\| \begin{bmatrix} (1-\zeta) \, p_1 \\ \vdots \\ (1-\zeta) \, p_m \\  \mathbf{1}[{y_{m+1}}>0] \, p_{m+1} \\ \vdots \\ \mathbf{1}[{y_{m+c}}>0] \, p_{m+c}
        \end{bmatrix} \right\|_2^2 < \left\| \begin{bmatrix}  p_1 \\ \vdots \\  p_m \\  \mathbf{1}[{y_{m+1}}>0] \, p_{m+1} \\ \vdots \\ \mathbf{1}[{y_{m+c}}>0] \, p_{m+c}
        \end{bmatrix} \right\|_2^2.
    \]
    Here, in the first step, we used the linearity and nonexpansiveness of the orthogonal projection operator $P_{\mathcal{H}_{2, z}}$. In the second step, we used $(1-\zeta)p_i<p_i$ for all $p_i \neq 0$ with $0\leq i \leq m$, since $\zeta \in (0,1)$.

    On the other hand, under (a)(ii), $p$ only has nonzezro entries with index in $\{ m+i \,|\, 1 \leq i \leq c, \; y_{m+i}>0 \}$, in such case, the vector inside the norm is
    \begin{equation}\label{eq:vec_norm_cond}
        \begin{bmatrix}  (1-\zeta)\, p_1 \\ \vdots \\  (1-\zeta)\, p_m \\  \mathbf{1}[{y_{m+1}}>0] \, p_{m+1} \\ \vdots \\ \mathbf{1}[{y_{m+c}}>0] \, p_{m+c}
        \end{bmatrix} = \begin{bmatrix}  0 \\ \vdots \\  0 \\  \mathbf{1}[{y_{m+1}}>0] \, p_{m+1} \\ \vdots \\ \mathbf{1}[{y_{m+c}}>0] \, p_{m+c}
        \end{bmatrix} \in \mathrm{span}(e_{m+1},\cdots,e_{m+c}).
    \end{equation}
    In addition, this vector is nonzero, because under the condition of subcase (a)(ii), the nonzero entries of $p$ match where the entries of $y$ are positive. Thus, the strict inequality in~\eqref{eq:paracase1} is obtained as
    \begin{equation}
        \left\| P_{\mathcal{H}_{2, z}}  \begin{bmatrix} (1-\zeta) \, p_1 \\ \vdots \\ (1-\zeta) \, p_m \\  \mathbf{1}[{y_{m+1}}>0] \, p_{m+1} \\ \vdots \\ \mathbf{1}[{y_{m+c}}>0] \, p_{m+c}
        \end{bmatrix} \right\|_2^2 < \left\|   \begin{bmatrix} (1-\zeta) \, p_1 \\ \vdots \\ (1-\zeta) \, p_m \\  \mathbf{1}[{y_{m+1}}>0] \, p_{m+1} \\ \vdots \\ \mathbf{1}[{y_{m+c}}>0] \, p_{m+c}
        \end{bmatrix} \right\|_2^2 = \left\|   \begin{bmatrix}  p_1 \\ \vdots \\  p_m \\  \mathbf{1}[{y_{m+1}}>0] \, p_{m+1} \\ \vdots \\ \mathbf{1}[{y_{m+c}}>0] \, p_{m+c}
        \end{bmatrix} \right\|_2^2,
    \end{equation}
    where in the first step, we used~\eqref{eq:vec_norm_cond} and Lemma~\ref{lem:PH2norm}, and in the second step, we used $(1-\zeta)p_i=p_i$, because $p_i=0$, for $1 \leq i \leq m$.

    \paragraph{Second Term Case (b).} Next, we consider the second case (b): $p \notin \Honey$. In such case, we have
    \begin{align}
        \left\| P_{\mathcal{H}_{2, z}} \left( I - \zeta Q_u^\top Q_u  \right) P_{\Honey} \, p \right\|_2^2 &= \left\| P_{\mathcal{H}_{2, z}}  \begin{bmatrix} (1-\zeta) \, p_1 \\ \vdots \\ (1-\zeta) \, p_m \\  \mathbf{1}[{y_{m+1}}>0] \, p_{m+1} \\ \vdots \\ \mathbf{1}[{y_{m+c}}>0] \, p_{m+c}
        \end{bmatrix} \right\|_2^2, \nonumber \\
        \intertext{by~\eqref{eq:PHtwoz_simplify},}
        &\leq  \left\| \begin{bmatrix} (1-\zeta) \, p_1 \\ \vdots \\ (1-\zeta) \, p_m \\  \mathbf{1}[{y_{m+1}}>0] \, p_{m+1} \\ \vdots \\ \mathbf{1}[{y_{m+c}}>0] \, p_{m+c}
        \end{bmatrix} \right\|_2^2, \nonumber  \\
        \intertext{by the linearity and nonexpansiveness of the orthogonal projection operator $P_{\mathcal{H}_{2, z}}$,}
        &\leq  \left\| \begin{bmatrix}  p_1 \\ \vdots \\ p_m \\  \mathbf{1}[{y_{m+1}}>0] \, p_{m+1} \\ \vdots \\ \mathbf{1}[{y_{m+c}}>0] \, p_{m+c}
        \end{bmatrix} \right\|_2^2, \nonumber \\
        \intertext{since $\zeta \in (0, 1)$,}
        &= \| P_{\Honey} p \|_2^2, \label{eq:paracase2}\\
        \intertext{which follows directly from the definition of $\Honey$~\eqref{eq:defH1y}.} \nonumber 
    \end{align}
    
    \paragraph{Second Term Result.} Thus, combining \eqref{eq:paracase1} for case (a) and \eqref{eq:paracase2} for case (b), we conclude that 
    \begin{equation}\label{eq:paracase12} 
        \|P_{\mathcal{H}_{2, z}} \left( I - \zeta Q_u^\top Q_u  \right) P_{\Honey} \, p\|_2^2 \leq \| P_{\Honey} p \|_2^2 \quad \text{with equality if and only if } p \notin \Honey.
    \end{equation}

    \paragraph{Final Bound.} Combing the bounds~\eqref{eq:perpcase12} for the first term $\cI$ and~\eqref{eq:paracase12} for the second term $\cII$, we have
    \begin{equation}\label{eq:paraallcase} 
        \left\| P_{\mathcal{H}_{2, z}^\perp} P_{\mathcal{H}_{1, y}^\perp} \, p \right\|_2^2 + \|P_{\mathcal{H}_{2, z}} \left( I - \zeta Q_u^\top Q_u  \right) P_{\Honey} \, p\|_2^2 < \left\| P_{\mathcal{H}_{1, y}^\perp} \, p \right\|_2^2 + \| P_{\Honey} p \|_2^2,
    \end{equation}
    for all $0 \neq p \in \mathbb{R}^{m+c}$.
    
    Next, taking $l_2$ norm squared on both sides of~\eqref{eq:dTtimesp}, we have
    \begin{align}
        \left\| \frac{d T_\theta(y; z)}{dy} \, p\right\|_2^2 &= \left\| P_{\mathcal{H}_{2, z}^\perp} P_{\mathcal{H}_{1, y}^\perp} \, p  + P_{\mathcal{H}_{2, z}} \left( I - \zeta Q_u^\top Q_u  \right) P_{\Honey} \, p \right\|_2^2 \nonumber \\
        &= \left\| P_{\mathcal{H}_{2, z}^\perp} P_{\mathcal{H}_{1, y}^\perp} \, p \right\|_2^2 + \left\| P_{\mathcal{H}_{2, z}} \left( I - \zeta Q_u^\top Q_u  \right) P_{\Honey} \, p \right\|_2^2, \nonumber \\
        \intertext{by orthogonality,}
        &< \left\| P_{\mathcal{H}_{1, y}^\perp} \, p \right\|_2^2 + \left\|  P_{\Honey} \, p \right\|_2^2, \nonumber \\
        \intertext{by~\eqref{eq:paraallcase} ,}
        &= \left\| P_{\mathcal{H}_{1, y}^\perp} \, p  +   P_{\Honey} \, p \right\|_2^2, \nonumber \\
        \intertext{by orthogonality,}
        &= \left\| p \right\|_2^2 . \label{eq:dTdyp_bound}
    \end{align}

    Finally, taking square root on both sides of~\eqref{eq:dTdyp_bound}, we obtain
    \[
    \left\| \frac{d T_\theta(y; z)}{dy} \, p\right\|_2 < \left\| p \right\|_2.
    \] 
    Thus, the operator 2-norm satisfies
    \[
    \left\| \frac{d T_\theta(y; z)}{dy} \right\|_2 := \sup_{ \|p\|_2=1 } \left\| \frac{d T_\theta(y; z)}{dy} \, p\right\|_2  < 1,
    \]
    because the set $\{ p\in \mathbb{R}^{m+c} \; | \;\|p\|_2=1 \}$ is compact, and the mapping $p \mapsto \left\| \frac{d T_\theta(y; z)}{dy} \, p\right\|_2$ is continuous, so the supremum is attained and the strict inequality above holds at the maximizer.
\end{proof}

\subsection{Proof of Corollary~\ref{cor:contraction}}
\corcontraction*

\begin{proof}
By Theorem~\ref{thm:contraction}, we have
    \begin{equation}\label{eq:contraction_recite}
    \left\| \frac{d T_\theta}{dy} (y;z) \right\|_2 < 1,
\end{equation}
for all differentiable points $y$. Moreover, by Lemma~\ref{lem:dPC1} and Theorem~\ref{thm:dT}, we have
\begin{align}
    \frac{d T_\theta}{dy}(y; z) &= P_{\mathcal{H}_{2, z}^\perp} P_{\mathcal{H}_{1, y}^\perp}   + P_{\mathcal{H}_{2, z}} \left( I - \zeta Q_u^\top Q_u  \right) P_{\Honey} \nonumber \\
    &= P_{\mathcal{H}_{2, z}^\perp} \begin{bmatrix} 0_{m \times m} & 0_{m \times c} \\ 0_{c \times m} & \mathrm{diag}(\mathbf{1}[{y_{m+1:m+c}\leq 0}]) \end{bmatrix}  + P_{\mathcal{H}_{2, z}} \left( I - \zeta Q_u^\top Q_u  \right) \begin{bmatrix} I_{m \times m} & 0_{m \times c} \\ 0_{c \times m} & \mathrm{diag}(\mathbf{1}[{y_{m+1:m+c}>0}]) \end{bmatrix}. \label{eq:dTdy_finite}
\end{align}

Thus, for all $\theta$, $z$, and differentiate point $y$ of $T_\theta(\cdot; z)$,
\begin{align*}
    \norm{\frac{dT_{\theta}(y; z)}{dy}} & \leq \sup_{\text{differentiable pts } \tilde{y}} \;\sup_{\tilde{\theta}, \tilde{z}} \norm{\frac{dT_{\tilde{\theta}}(\tilde{y}; \tilde{z})}{dy}} \\
    &= \sup_{\text{differentiable pts } \tilde{y}} \;\sup_{\tilde{\theta}, \tilde{z}} \Bigg\| P_{\mathcal{H}_{2, \tilde{z}}^\perp} \begin{bmatrix} 0_{m \times m} & 0_{m \times c} \\ 0_{c \times m} & \mathrm{diag}(\mathbf{1}[{\tilde{y}_{m+1:m+c}\leq 0}]) \end{bmatrix}  \\ 
    &\quad+ P_{\mathcal{H}_{2, \tilde{z}}} \left( I - \zeta Q_u^\top Q_u  \right) \begin{bmatrix} I_{m \times m} & 0_{m \times c} \\ 0_{c \times m} & \mathrm{diag}(\mathbf{1}[{\tilde{y}_{m+1:m+c}>0}]) \end{bmatrix} \Bigg\|_2, \\
    \intertext{by~\eqref{eq:dTdy_finite},}
    &= \underbrace{\max_{d \in \{ 0, 1\}^c }}_{\text{finite}} \; \underbrace{\sup_{\tilde{\theta}, \tilde{z}}}_{\text{compact}} \norm{P_{\mathcal{H}_{2, \tilde{z}}^\perp} \begin{bmatrix} 0_{m \times m} & 0_{m \times c} \\ 0_{c \times m} & \mathrm{diag}(d) \end{bmatrix}  + P_{\mathcal{H}_{2, \tilde{z}}} \left( I - \zeta Q_u^\top Q_u  \right) \begin{bmatrix} I_{m \times m} & 0_{m \times c} \\ 0_{c \times m} & \mathrm{diag}(1- d) \end{bmatrix}} \\
    & \leq \gamma,
\end{align*}
where $\gamma \in (0,1)$. In the last step, we used~\eqref{eq:contraction_recite} and the compactness of the domains of $\theta$ and $z$ (Assumption~\ref{assm:compactness}), which imply that the supremum and maximum are achieved at a maximizer.

Hence, for any sequence $\{ \tilde{y}_k\}_k$ with limiting point $\tilde{y}$, where $T_\theta(\cdot, z)$ is differentiable at $\tilde{y}_k$ for all $k \in \mathbb{N}$,
\begin{equation}
    \norm{ \lim_{k \to \infty} \frac{dT_{\theta}(\tilde{y}_k; z)}{dy} } = \lim_{k \to \infty} \norm{  \frac{dT_{\theta}(\tilde{y}_k; z)}{dy} } \leq \gamma,
\end{equation}
where we used the continuity of matrix 2-norm. Finally, since the Clarke generalized Jacobian $\partial^C_y T_{\theta}(y; z)$ is the convex hull of all such limit points of Jacobians (see Definition~\ref{def:Clarke}), any $M_y \in \partial^C_y T_{\theta}(y; z)$ also satisfies
    \begin{equation}
        \norm{M_y} \leq \gamma,
    \end{equation}
    for all $\theta$, $y$ and $z$.
\end{proof}

\section{Proof of Descent Direction}

\subsection{Proof of Lemma~\ref{lemma:coercive}}
\begin{restatable}{lem}{lemcoercive}\label{lemma:coercive}
    Under Assumptions~\ref{assm:fullrank_A} and~\ref{assm:compactness}, for any $$M_y\in \partial^C_y T_{\theta}(y_\theta^\star(t,z_x(t));z_x(t)),$$ $\Jcal_\theta :=I-M_y$ satisfies
    \begin{align}
    1-\gamma \leq \| \Jcal_\theta\T \|_2 \leq 1+\gamma \label{eq:J_bound} \\
    \inner{\psi}{\Jcal_\theta\T \psi} \geq (1-\gamma)\norm{\psi}^2 \label{eq:J_inner_bound}
    \end{align}
    for all $\psi \in \mathbb{R}^{m+c}$, $\theta$, $t$, and $x$. Here, $\gamma \in (0,1)$ is the contraction constant from Corollary~\ref{cor:contraction}.
\end{restatable}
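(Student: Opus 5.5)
The proof rests on Corollary~\ref{cor:contraction}: every $M_y \in \partial^C_y T_\theta(y_\theta^\star(t,z_x(t)); z_x(t))$ satisfies $\norm{M_y} \le \gamma$ with $\gamma\in(0,1)$, uniformly in $\theta$, $y$, $z$. Because $t$ and $x$ enter only through the evaluation point $(y_\theta^\star(t,z_x(t)), z_x(t))$, this bound holds uniformly in $\theta$, $t$, $x$. We also use $\norm{M_y\T}=\norm{M_y}$, since the spectral norm is invariant under transposition. Both claims are then elementary perturbation-of-identity estimates for $\Jcal_\theta\T = I - M_y\T$.

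For \eqref{eq:J_bound}, the upper bound follows from the triangle inequality:
\[
\norm{\Jcal_\theta\T} \le \norm{I} + \norm{M_y\T} \le 1+\gamma .
\]
For the lower bound, take any unit vector $\psi$. The reverse triangle inequality gives
\[
\norm{\Jcal_\theta\T\psi} \ge \norm{\psi} - \norm{M_y\T\psi} \ge 1-\gamma .
\]
Hence the operator norm, which is a supremum over unit vectors, is at least $1-\gamma$. This argument actually shows more: the smallest singular value of $\Jcal_\theta\T$ is at least $1-\gamma$, so $\Jcal_\theta$ is invertible with $\norm{\Jcal_\theta^{-1}}\le 1/(1-\gamma)$. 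This justifies the inverse appearing in \eqref{eq:clarke_restricted_implicit_diff_formula}.

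For \eqref{eq:J_inner_bound}, write $\inner{\psi}{\Jcal_\theta\T\psi} = \norm{\psi}^2 - \inner{\psi}{M_y\T\psi}$. Cauchy--Schwarz gives $\inner{\psi}{M_y\T\psi} \le \norm{\psi}\,\norm{M_y\T}\,\norm{\psi} \le \gamma\norm{\psi}^2$, which yields $(1-\gamma)\norm{\psi}^2$.

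The only real issue is ensuring that the $\gamma$ from Corollary~\ref{cor:contraction} applies to \emph{every} element of the Clarke Jacobian at the fixed point, not just to classical Jacobians. The corollary already provides exactly this, using Assumptions~\ref{assm:fullrank_A} and~\ref{assm:compactness}, so no further obstacle remains; the rest is routine.
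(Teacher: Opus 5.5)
Your proof is correct and follows the paper's argument essentially step for step. You invoke Corollary~\ref{cor:contraction} together with $\norm{M_y\T}=\norm{M_y}$, then use the triangle inequality for the upper bound and Cauchy--Schwarz for \eqref{eq:J_inner_bound}. The one difference is that you apply the reverse triangle inequality to $\Jcal_\theta\T\psi$ for unit $\psi$, whereas the paper applies it to the operator norms, $\left|\norm{I}-\norm{M_y\T}\right|$. Your version yields the stronger bound $\sigma_{\min}(\Jcal_\theta)\ge 1-\gamma$, which makes explicit the invertibility of $\Jcal_\theta$ that the paper later attributes to this lemma in Lemma~\ref{lemma:schur_coercive}; the paper's version also implies it, via the coercivity bound \eqref{eq:J_inner_bound}.
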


\begin{proof}
    By Corollary~\ref{cor:contraction}, any $M_y \in \partial^C_y T_{\theta}(y^\star; z)$ satisfies
    \begin{equation}
        \norm{M_y} \leq \gamma,
    \end{equation}
    for all $\theta$, $z$, and $y^\star$ being a fixed point of the DYS iteration. Thus, we also have
    \begin{equation}
        \norm{M_y\T} \leq \gamma,
    \end{equation}
    for all $\theta$, $z$, and $y^\star$ being a fixed point of the DYS iteration.
    Hence, we have
    \begin{align*}
        \| \Jcal_\theta\T \|_2 &= \norm{I - M_y\T}\\
        &\leq \norm{I} + \norm{M_y\T}\\
        &\leq \norm{I} + \gamma \\
        &= 1 + \gamma.
    \end{align*}
    On the other hand,
    \begin{align*}
    \| \mathcal{J}_\theta\T \|_2 &= \norm{I - M_y\T} \\
    &\geq \left| \norm{I} - \norm{M_y\T} \right| \\
    &\geq \norm{I} - \gamma \\
    &= 1 - \gamma.
\end{align*}
Thus, we have
$$1-\gamma \leq \| \Jcal_\theta\T \|_2 \leq 1+\gamma. $$
    Moreover, for any $\psi \in \mathbb{R}^{m+c}$, 
    \begin{align*}
        \inner{\psi}{\Jcal_\theta\T\psi} &= \inner{\psi}{\left( I-M_y\T \right) \psi} \\
        &= \inner{\psi}{\psi} - \inner{\psi}{M_y\T \psi} \\
        &\geq \norm{\psi}^2 - \norm{M_y\T} \norm{\psi}^2 \\
        & \geq \norm{\psi}^2 - \gamma \norm{\psi}^2 \\
        &= (1-\gamma) \norm{\psi}^2 .
    \end{align*}
\end{proof}

\subsection{Proof of Lemma~\ref{lemma:integrand_inner_product}}
The proofs in this subsection use the structure of $Q_u = [I_{m \times m}, 0_{m \times c}]$, which extracts the first $m$ entries.

For any $\theta, t, z$ and any selection $M_y \in \partial_y^C T_\theta(y_\theta^\star)$, write the block decompositions
\begin{equation}\label{eq:blocks}
    M_y = \begin{bmatrix} K_{uu} & K_{us} \\ K_{su} & K_{ss} \end{bmatrix},
    \qquad
    M_\theta \coloneqq \frac{\partial T_{\theta}}{\partial \theta}(y_\theta^\star(t, z); z)
    = \begin{bmatrix} M^{u}_{\theta} \\ M^{s}_{\theta} \end{bmatrix},
\end{equation}
where $K_{uu} \in \mathbb{R}^{m \times m}$, $K_{us} \in \mathbb{R}^{m \times c}$, $K_{su} \in \mathbb{R}^{c \times m}$, $K_{ss} \in \mathbb{R}^{c \times c}$, and $M^{u}_{\theta} = Q_u M_\theta \in \mathbb{R}^{m \times p}$, $M^{s}_{\theta} \in \mathbb{R}^{c \times p}$. Thus, accordingly we have
\begin{equation}\label{eq:J_blocks}
    \Jcal_\theta \coloneqq I - M_y
    = \begin{bmatrix} I_{m \times m} - K_{uu} & -K_{us} \\ -K_{su} & I_{c \times c} - K_{ss} \end{bmatrix}.
\end{equation}

Let $\gamma \in (0,1)$ be the contraction constant from Corollary~\ref{cor:contraction}, and define
\begin{equation}\label{eq:delta_theta_def}
    \delta_\theta \coloneqq \max_{M_y \in \partial_y^C T_\theta(y_\theta^\star)} \norm{K_{su}} \in [0, \gamma],
\end{equation}
by Corollary~\ref{cor:contraction},
\begin{equation}\label{eq:def_beta_theta}
    \beta_\theta \coloneqq \frac{1 - \gamma^2 + \gamma\, \delta_\theta}{1 - \gamma} \geq 0,
\end{equation}
\begin{equation}\label{eq:def_r_theta}
    r_\theta \coloneqq \frac{1-\gamma}{\beta_\theta} \in (0,1),
\end{equation}
as $\beta_\theta \geq \frac{1-\gamma^2}{1-\gamma}=1+\gamma$, thus $r_\theta = \frac{1-\gamma}{\beta_\theta} \leq \frac{1-\gamma}{1+\gamma}<1$, and
\begin{equation}\label{eq:gamma_eff}
    \gamma_{\mathrm{eff}} \coloneqq \frac{1 - r_\theta}{1 + r_\theta}
    = 1- \frac{2(1-\gamma)^2}{2(1-\gamma)+\gamma \delta_\theta}
    \in [\gamma, 1),
\end{equation}
as $\gamma_{\mathrm{eff}}(\delta_\theta)$ is strictly less than 1, strictly increasing on $\delta_\theta \in [0, \gamma]$, and $\gamma_{\mathrm{eff}}(0)=\gamma$.

With the defined notations, we first state the full Assumption~\ref{assumption:M}.
\begin{restated}{assumption:M} 
\begin{enumerate}
    \item[(i)] \emph{(rank)} $M^{u}_{\theta}  \in \mathbb{R}^{m \times p}$ has full row rank; write $G_\theta \coloneqq M^{u}_{\theta} (M^{u}_{\theta})^\top \in \mathbb{R}^{m \times m}$, $G_\theta \succ 0$, and let $\sigma_{+}^2 \ge \sigma_{-}^2 > 0$ denote its largest and smallest eigenvalues;
    \item[(ii)] \emph{(conditioning)} $\exists \gamma_\mathrm{eff}>0$, s.t. $\displaystyle \kappa(G_\theta) = \frac{\sigma_{+}^2}{\sigma_{-}^2} < \frac{1}{\gamma_{\mathrm{eff}}}$;
    \item[(iii)] \emph{(weak coupling)} the gradient-mismatch matrix
    \begin{equation}\label{eq:Xi_def}
        \Xi_\theta \coloneqq K_{us}\, (I_{c \times c} - K_{ss})^{-1}\, M^{s}_{\theta}\, (M^{u}_{\theta})^\top \in \mathbb{R}^{m \times m}
    \end{equation}
    satisfies, for every selection $M_y \in \partial_y^C T_\theta(y_\theta^\star)$,
    \begin{equation}\label{eq:Xi_bound}
        \norm{\Xi_\theta} \le \rho_\theta \coloneqq \frac{(1-\gamma)\, (1 + r_\theta)\,  \big( \sigma_{-}^2 - \gamma_{\mathrm{eff}}\, \sigma_{+}^2 \big)}{2\, \beta_\theta } \frac{\sigma_{-}^2}{\sigma_{+}^2}.
    \end{equation}
\end{enumerate}
\end{restated}

First, we demonstrate that the Schur complement of the $s$-block of $\Jcal_\theta$ has similar bounds as the ones in Lemma~\ref{lemma:coercive}.

\begin{restatable}{lem}{lemschur}\label{lemma:schur_coercive}[Schur complement of $\Jcal_\theta$ with respect to the $(u,s)$ splitting]
For all $\theta, t, z$ and $M_y \in \partial_y^C T_\theta(y_\theta^\star)$, we have
\begin{enumerate}
    \item[(1)] $I_{c \times c} - K_{ss}$ is invertible, with $\norm{(I_{c \times c} - K_{ss})^{-1}} \le \frac{1}{1-\gamma}$;
    \item[(2)] the Schur complement
    \begin{equation}\label{eq:S_def}
        S_\theta \coloneqq (I_{m \times m} - K_{uu}) - K_{us}\, (I_{c \times c} - K_{ss})^{-1}\, K_{su} \;\in\; \mathbb{R}^{m \times m}
    \end{equation}
    satisfies
    \begin{align}
        \inner{\phi_u}{S_\theta\, \phi_u} &\ge (1-\gamma)\, \norm{\phi_u}^2, \qquad \forall\, \phi_u \in \mathbb{R}^m, \label{eq:S_inner_bound} \\
        \norm{S_\theta} &\le \beta_\theta; \label{eq:S_bound} 
    \end{align}
    \item[(3)] $S_\theta$ is invertible with $\norm{S_\theta^{-1}} \le \frac{1}{1-\gamma}$, and the first block row of $\Jcal_\theta^{-1}$ is given by
    \begin{equation}\label{eq:block_row}
        Q_u\, \Jcal_\theta^{-1} = S_\theta^{-1}\, \big[\; I_{m \times m} \quad K_{us}\, (I_{c \times c} - K_{ss})^{-1} \;\big] \;\in\; \mathbb{R}^{m \times (m+c)}.
    \end{equation}
\end{enumerate}
\end{restatable}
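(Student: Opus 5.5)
The plan is to get every part of Lemma~\ref{lemma:schur_coercive} from two earlier facts. The first is the uniform bound $\norm{M_y}\le\gamma$ of Corollary~\ref{cor:contraction}. The second is the coercivity \eqref{eq:J_inner_bound} of Lemma~\ref{lemma:coercive}. Since $\inner{\psi}{\Jcal_\theta\psi}=\inner{\psi}{\Jcal_\theta\T\psi}$, the coercivity also gives $\inner{\psi}{\Jcal_\theta\psi}\ge(1-\gamma)\norm{\psi}^2$ for every $\psi\in\mathbb{R}^{m+c}$.

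\textbf{Part (1).} Every block of $M_y$ in \eqref{eq:blocks} is obtained as $E_1\T M_y E_2$, where $E_1,E_2$ are coordinate injections with orthonormal columns. Hence $\norm{K_{uu}},\norm{K_{us}},\norm{K_{su}},\norm{K_{ss}}\le\norm{M_y}\le\gamma<1$. The Neumann series then shows that $I_{c\times c}-K_{ss}$ is invertible, with $\norm{(I_{c\times c}-K_{ss})^{-1}}\le\sum_k\gamma^k=\frac{1}{1-\gamma}$.

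\textbf{Part (2), coercivity.} This is the only non-mechanical step, and I plan to use a lifting trick. Given $\phi_u\in\mathbb{R}^m$, set
\[
\phi_s:=(I_{c\times c}-K_{ss})^{-1}K_{su}\phi_u \quad\text{and}\quad \psi:=(\phi_u,\phi_s).
\]
By \eqref{eq:J_blocks}, the second block row of $\Jcal_\theta\psi$ is $-K_{su}\phi_u+(I_{c\times c}-K_{ss})\phi_s=0$. The first block row is $(I_{m\times m}-K_{uu})\phi_u-K_{us}\phi_s=S_\theta\phi_u$. Therefore
\[
\inner{\phi_u}{S_\theta\phi_u}=\inner{\psi}{\Jcal_\theta\psi}\ge(1-\gamma)\big(\norm{\phi_u}^2+\norm{\phi_s}^2\big)\ge(1-\gamma)\norm{\phi_u}^2,
\]
which is \eqref{eq:S_inner_bound}.

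\textbf{Part (2), norm bound.} The triangle inequality together with part (1) gives
\[
\norm{S_\theta}\le\norm{I_{m\times m}-K_{uu}}+\norm{K_{us}}\,\norm{(I_{c\times c}-K_{ss})^{-1}}\,\norm{K_{su}}\le(1+\gamma)+\frac{\gamma\,\delta_\theta}{1-\gamma}.
\]
Here I use $\norm{K_{su}}\le\delta_\theta$ from the definition \eqref{eq:delta_theta_def}. The maximum there is attained because the Clarke Jacobian is compact. The right-hand side equals $\frac{1-\gamma^2+\gamma\delta_\theta}{1-\gamma}=\beta_\theta$, which is \eqref{eq:S_bound}.

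\textbf{Part (3).} By Cauchy--Schwarz and \eqref{eq:S_inner_bound}, $\norm{S_\theta\phi_u}\,\norm{\phi_u}\ge(1-\gamma)\norm{\phi_u}^2$. So $S_\theta$ is injective, hence invertible, and $\norm{S_\theta^{-1}}\le\frac{1}{1-\gamma}$.

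$\Jcal_\theta$ is also invertible, by the coercivity of Lemma~\ref{lemma:coercive}. Its first block row $Q_u\Jcal_\theta^{-1}$ is unique, so it suffices to check that the candidate in \eqref{eq:block_row} satisfies $R\,\Jcal_\theta=Q_u$. With $R:=S_\theta^{-1}[\,I_{m\times m}\;\;K_{us}(I_{c\times c}-K_{ss})^{-1}\,]$, the two column blocks of $R\,\Jcal_\theta$ are as follows.
\begin{itemize}
\item The first block is $S_\theta^{-1}\big[(I_{m\times m}-K_{uu})-K_{us}(I_{c\times c}-K_{ss})^{-1}K_{su}\big]=I_{m\times m}$.
\item The second block is $S_\theta^{-1}\big[-K_{us}+K_{us}(I_{c\times c}-K_{ss})^{-1}(I_{c\times c}-K_{ss})\big]=0$.
\end{itemize}
This is exactly the standard block-LDU inverse formula. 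The only care needed is with the minus signs on the off-diagonal blocks of $\Jcal_\theta$.

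I expect the main obstacle to be the coercivity of $S_\theta$ in part (2). A direct estimate gives only $\norm{S_\theta}\le\beta_\theta$, not a lower bound on $\inner{\phi_u}{S_\theta\phi_u}$. The lifting $\psi=(\phi_u,\phi_s)$, chosen so that the $s$-component of $\Jcal_\theta\psi$ vanishes, is what transfers Lemma~\ref{lemma:coercive} to the Schur complement.
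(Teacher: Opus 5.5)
Your proposal is correct and follows the paper's proof essentially step for step. The shared steps are the Neumann-series bound for $(I_{c\times c}-K_{ss})^{-1}$, the lifting $\phi_s=(I_{c\times c}-K_{ss})^{-1}K_{su}\phi_u$ that kills the $s$-block of $\Jcal_\theta\phi$ and transfers the coercivity of Lemma~\ref{lemma:coercive} to $S_\theta$, the triangle-inequality bound yielding $\beta_\theta$, and the check $R\,\Jcal_\theta=Q_u$ by block multiplication. Your extra remarks (blocks written as $E_1\T M_y E_2$, the identity $\inner{\psi}{\Jcal_\theta\psi}=\inner{\psi}{\Jcal_\theta\T\psi}$, and attainment of the maximum defining $\delta_\theta$ via compactness of the Clarke Jacobian) simply make explicit small steps the paper leaves implicit.
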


\begin{proof}
    To reduce technical clutter, in this proof we write $S = S_\theta$ and $\Jcal = \Jcal_\theta$. By Corollary~\ref{cor:contraction} every Clarke selection satisfies $\norm{M_y} \le \gamma < 1$. Thus, every block of $M_y$ satisfies the same bound, that is, 
    \begin{equation}\label{eq:block_norms}
    \norm{K_{uu}},\ \norm{K_{us}},\ \norm{K_{su}},\ \norm{K_{ss}} \;\le\; \gamma.
\end{equation}

\textbf{(1)} Since $\norm{K_{ss}} \le \gamma < 1$, the Neumann series gives invertibility of $I_c - K_{ss}$ with $\norm{(I_c - K_{ss})^{-1}} \le \frac{1}{1 - \norm{K_{ss}}} \le \frac{1}{1-\gamma}$.

\textbf{(2) Coercivity \eqref{eq:S_inner_bound}.} Let $\phi_u \in \mathbb{R}^m$, and set
\begin{equation}\label{eq:phi_s_def}
    \phi_s \coloneqq (I_{c \times c} - K_{ss})^{-1} K_{su}\, \phi_u \in \mathbb{R}^c,
    \qquad
    \phi \coloneqq (\phi_u, \phi_s) \in \mathbb{R}^{m+c}.
\end{equation}
Then, by \eqref{eq:J_blocks} and \eqref{eq:S_def},
\begin{equation*}
    \Jcal\, \phi
    = \begin{bmatrix} (I_{m \times m} - K_{uu})\, \phi_u - K_{us}\, \phi_s \\[2pt] -K_{su}\, \phi_u + (I_{c \times c} - K_{ss})\, \phi_s \end{bmatrix} = \begin{bmatrix} (I_{m \times m} - K_{uu})\, \phi_u - K_{us}\, (I_{c \times c} - K_{ss})^{-1} K_{su}\, \phi_u \\[2pt] \big(-(I_{c \times c} - K_{ss}) + (I_{c \times c} - K_{ss}) \big)\, \phi_s \end{bmatrix}
    = \begin{bmatrix} S\, \phi_u \\[2pt] 0 \end{bmatrix}.
\end{equation*}
Here, in the first step, we used the definition of $\Jcal$ \eqref{eq:J_blocks}. In the second step, we used the definition of $\phi_s$ \eqref{eq:phi_s_def}. In the third step, we used the definition of the Schur complement $S$ \eqref{eq:S_def}. Next, since the $s$-component of $\Jcal \phi$ vanishes,
\begin{equation*}
    \inner{\phi_u}{S \phi_u}
    = \inner{(\phi_u, \phi_s)}{(S\phi_u,\, 0)}
    = \inner{\phi}{\Jcal\, \phi}
    \overset{\eqref{eq:J_inner_bound}}{\ge} (1-\gamma)\, \norm{\phi}^2
    = (1-\gamma)\, \big( \norm{\phi_u}^2 + \norm{\phi_s}^2 \big)
    \ge (1-\gamma)\, \norm{\phi_u}^2.
\end{equation*}

\textbf{Norm bound \eqref{eq:S_bound}.} We have
\begin{align*}
    \norm{S} &\le \left(\norm{I_{m\times m}} + \norm{K_{uu}} \right) + \norm{K_{us}}\, \norm{(I_c - K_{ss})^{-1}}\, \norm{K_{su}}, \\
    \intertext{by the definition of $S$ \eqref{eq:S_def},}
    &\le (1+\gamma) + \frac{\gamma\, \norm{K_{su}}}{1-\gamma}, \\
    \intertext{by \eqref{eq:block_norms} and part \textbf{(1)}}
    &\le (1+\gamma) + \frac{\gamma\, \delta_\theta}{1-\gamma}, \\
    \intertext{by the definition of $\delta_\theta$ \eqref{eq:delta_theta_def},}
    &= \frac{1-\gamma^2 + \gamma \delta_\theta}{1 - \gamma} = \beta_\theta,
    \intertext{by the definition of $\beta_\theta $\eqref{eq:def_beta_theta}.}
\end{align*}

\textbf{(3) Invertibility} By \eqref{eq:S_inner_bound} and Cauchy-Schwarz inequality, $\norm{S\phi_u}\, \norm{\phi_u} \ge \inner{\phi_u}{S\phi_u} \ge (1-\gamma) \norm{\phi_u}^2$, so $\norm{S\phi_u} \ge (1-\gamma)\norm{\phi_u}$ for all $\phi_u \in \mathbb{R}^m$.
This implies that $S$ is injective, as $S \phi_u = 0$ if and only if $\phi_u = 0$. Thus, $S$ is an invertible matrix, with $\norm{S^{-1}} \le \frac{1}{1-\gamma}$.

For \eqref{eq:block_row}, we set $R \coloneqq S^{-1} \big[\, I_{m \times m} \;\; K_{us}(I_{c \times c} - K_{ss})^{-1} \,\big]$. We will verify $R\, \Jcal = Q_u$ by direct multiplication using \eqref{eq:J_blocks}. The first block column of $R\, \Jcal$ is
\begin{align*}
    S^{-1} \begin{bmatrix}
        I_{m \times m} \;\; K_{us}(I_{c \times c} - K_{ss})^{-1} \,
    \end{bmatrix} \begin{bmatrix} I_{m \times m} - K_{uu} \\ -K_{su}  \end{bmatrix} = S^{-1} \underbrace{\Big[ (I_{m \times m} - K_{uu}) - K_{us}\, (I_{c \times c} - K_{ss})^{-1} K_{su} \Big]}_{= S, \text{ by its definition \eqref{eq:S_def}}} = S^{-1} S = I_{m \times m},
\end{align*}
and the second is
\begin{equation*}
    S^{-1} \begin{bmatrix}
        I_{m \times m} \;\; K_{us}(I_{c \times c} - K_{ss})^{-1} \,
    \end{bmatrix} \begin{bmatrix} -K_{us} \\ I_{c \times c} - K_{ss} \end{bmatrix} = S^{-1} \Big[ -K_{us} + K_{us}\, (I_{c \times c} - K_{ss})^{-1} (I_{c \times c} - K_{ss}) \Big] = 0.
\end{equation*}
Since $\Jcal$ is invertible by Lemma~\ref{lemma:coercive}, multiplying $R\, \Jcal = Q_u$ on the right by $\Jcal^{-1}$ gives $R = Q_u \Jcal^{-1}$, as desired.
\end{proof}

\begin{restatable}{lem}{leminnerproduct}\label{lemma:integrand_inner_product}
Under Assumptions~\ref{assm:fullrank_A}--\ref{assumption:M}, for any selection of Clarke subgradient $v_{\theta, x}(t) \in \partial^C_\theta J_{x, t}(\theta)$ that is measurable with respect to $x$ and $t$,
    $$\inner{v_{\theta, x}(t)}{w_{\theta,x}(t)} \geq 0,\quad \forall x, t, \theta.$$
\end{restatable}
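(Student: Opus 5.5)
The plan is to reduce the inner product to a quadratic form in a single vector of $\mathbb{R}^m$, and then use the Schur-complement structure from Lemma~\ref{lemma:schur_coercive} together with Assumption~\ref{assumption:M}.

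\textbf{Step 1: reduce to a quadratic form.} Fix $\theta,x,t$ and write $h:=h_{\theta,x}(t)\in\mathbb{R}^m$. By \eqref{eq:integrand_definitions}, any selection has the form $v_{\theta,x}(t)=M_\theta^\top \Jcal_\theta^{-\top}Q_u^\top h$ for some $M_y\in\partial^C_yT_\theta$. By \eqref{eq:integrand_definitions_JFB}, $w_{\theta,x}(t)=M_\theta^\top Q_u^\top h$. Hence
\[
\inner{v_{\theta,x}(t)}{w_{\theta,x}(t)} = h^\top \big(Q_u\Jcal_\theta^{-1}M_\theta\big)\big(Q_uM_\theta\big)^\top h .
\]
Apply the block-row formula \eqref{eq:block_row} and the block decomposition \eqref{eq:blocks}. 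This gives $Q_u\Jcal_\theta^{-1}M_\theta = S_\theta^{-1}\big(M^u_\theta + K_{us}(I-K_{ss})^{-1}M^s_\theta\big)$. Multiplying by $(M^u_\theta)^\top$ then yields
\[
\inner{v}{w} = h^\top S_\theta^{-1}\,(G_\theta+\Xi_\theta)\,h .
\]
Substitute $q:=S_\theta^{-\top}h$, which is legitimate because $S_\theta$ is invertible by Lemma~\ref{lemma:schur_coercive}(3). The quantity becomes
\[
\inner{q}{G_\theta S_\theta^\top q}+\inner{q}{\Xi_\theta S_\theta^\top q}.
\]

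\textbf{Step 2: the main term, which is the key obstacle.} $G_\theta S_\theta^\top$ is a product of an SPD matrix and a merely coercive, nonsymmetric matrix, so it need not be positive. The conditioning assumption is exactly what rescues this. Write
\[
G_\theta = c\,(I+E),\qquad c=\tfrac{\sigma_+^2+\sigma_-^2}{2},
\]
so that $\norm{E}\le \frac{\sigma_+^2-\sigma_-^2}{\sigma_+^2+\sigma_-^2}=\frac{\kappa-1}{\kappa+1}$. Using the coercivity \eqref{eq:S_inner_bound} and the norm bound \eqref{eq:S_bound} gives
\[
\inner{q}{G_\theta S_\theta^\top q}\ge c\big((1-\gamma)-\beta_\theta\tfrac{\kappa-1}{\kappa+1}\big)\norm{q}^2 .
\]
Recalling $r_\theta=(1-\gamma)/\beta_\theta$, a short algebraic check shows this coefficient equals
\[
\tfrac{\beta_\theta(1+r_\theta)}{2}\big(\sigma_-^2-\gamma_{\mathrm{eff}}\sigma_+^2\big).
\]
This is strictly positive precisely because $\kappa<1/\gamma_{\mathrm{eff}}=(1+r_\theta)/(1-r_\theta)$, which is Assumption~\ref{assumption:M}(ii).

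\textbf{Step 3: the coupling term.} The mismatch term is bounded by $|\inner{q}{\Xi_\theta S_\theta^\top q}|\le \beta_\theta\norm{\Xi_\theta}\norm{q}^2$. Combining with Step 2,
\[
\inner{v}{w}\ge \beta_\theta\Big(\tfrac{1+r_\theta}{2}(\sigma_-^2-\gamma_{\mathrm{eff}}\sigma_+^2)-\norm{\Xi_\theta}\Big)\norm{q}^2 .
\]
The weak-coupling bound \eqref{eq:Xi_bound} supplies $\rho_\theta$. It equals the threshold $\frac{1+r_\theta}{2}(\sigma_-^2-\gamma_{\mathrm{eff}}\sigma_+^2)$ multiplied by the factors $\frac{1-\gamma}{\beta_\theta}=r_\theta<1$ and $\sigma_-^2/\sigma_+^2\le1$. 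Hence $\norm{\Xi_\theta}\le\rho_\theta$ makes the bracket nonnegative, indeed bounded below by a positive multiple of $\norm{q}^2$.

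Since the argument holds for every $M_y\in\partial^C_yT_\theta$, it holds for every selection $v_{\theta,x}(t)$ and every $x,t,\theta$. Measurability plays no role in this pointwise statement.
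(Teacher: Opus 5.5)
Your argument is correct, and its skeleton matches the paper's. Both proofs use the block-row formula \eqref{eq:block_row} to reduce $\inner{v_{\theta,x}(t)}{w_{\theta,x}(t)}$ to $h^\top S_\theta^{-1}(G_\theta+\Xi_\theta)h$. Both control the main term by centering an SPD matrix at its mean eigenvalue and invoking \eqref{eq:S_inner_bound}--\eqref{eq:S_bound}, and both bound the mismatch term by norms.

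The difference is the normalization. The paper substitutes $\psi=G_\theta S_\theta^{-\top}h$, centers $B_\theta=G_\theta^{-1}$, and then returns to $\norm{h}^2$ term by term. The first term goes through $\norm{\psi}\ge\sigma_-^2\norm{S_\theta^{-\top}h}\ge\sigma_-^2\norm{h}/\beta_\theta$, and the second through $\norm{S_\theta^{-1}}\le 1/(1-\gamma)$. You instead keep both terms in the single variable $q=S_\theta^{-\top}h$ and center $G_\theta$ itself. Relative to that, working in $\psi$ and then using $\norm{\psi}\ge\sigma_-^2\norm{S_\theta^{-\top}h}$ costs the factor $\sigma_-^2/\sigma_+^2$. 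Bounding the two terms with the different constants $1/\beta_\theta$ and $1/(1-\gamma)$ costs the factor $r_\theta$. This is exactly why $\rho_\theta$ carries those two factors, and why your proof needs only the weaker condition $\norm{\Xi_\theta}\le\tfrac{1+r_\theta}{2}(\sigma_-^2-\gamma_{\mathrm{eff}}\sigma_+^2)$.

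What the paper's route buys is the explicit estimate \eqref{eq:single_data_descent}, $\inner{v_{\theta,x}}{w_{\theta,x}}\ge\frac{\rho_\theta-\norm{\Xi_\theta}}{1-\gamma}\norm{h_{\theta,x}}^2$. This is what Lemma~\ref{lemma:expected_inner_product} actually uses, through Jensen's inequality and the variance bound stated in terms of $\norm{\mathbb{E}_x[h_{\theta,x}]}^2$. Your proposal stops at a bound in $\norm{q}^2$. Adding the one-line conversion $\norm{q}\ge\norm{h}/\beta_\theta$ gives $\inner{v}{w}\ge\beta_\theta^{-1}\bigl(\tfrac{1+r_\theta}{2}(\sigma_-^2-\gamma_{\mathrm{eff}}\sigma_+^2)-\norm{\Xi_\theta}\bigr)\norm{h}^2$. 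This dominates the paper's bound because $\sigma_-^2\le\sigma_+^2$ and $\beta_\theta\ge 1+\gamma>1-\gamma$, so your version can be substituted downstream without loss.
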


\begin{proof}
Fix $x, t, \theta$. By~\eqref{eq:integrand_definitions}, for any measurable selection $v_{\theta, x}(t) \in \partial^C_\theta J_{x, t}(\theta)$, there exist a measurable selection 
$M_y \in { {\partial}_y^C} T_{\theta}$ such that 
\begin{equation}\label{eq:def_v}
    v_{\theta, x}(t) = \left(\left(I -
M_y
\right)^{-1}
\frac{\partial T_{\theta}}{\partial \theta}(y_\theta^\star)\right)^\top Q_u^\top
h_{\theta,x}(t).
\end{equation}
Write $h \coloneqq h_{\theta,x}(t) \in \mathbb{R}^m$ and let $S \coloneqq S_\theta$ be the Schur complement \eqref{eq:S_def} associated with this selection.
By \Cref{lemma:schur_coercive}(2), $\norm{S} \le \beta_\theta$.

Let $M_\theta \coloneqq \frac{\partial T_{\theta}}{\partial \theta}(y_\theta^\star) \in \mathbb{R}^{(m+c) \times p}$. Substituting it and \eqref{eq:J_blocks}: $\Jcal_\theta = I - M_y$ into \eqref{eq:def_v} and \eqref{eq:integrand_definitions_JFB}, we obtain
\begin{equation*}
    v_{\theta,x} = M_\theta^\top\, \Jcal_\theta^{-\top} Q_u^\top h,
    \qquad
    w_{\theta,x} = M_\theta^\top\, Q_u^\top h = (M^{u}_{\theta})^\top h.
\end{equation*}
Therefore, expanding $Q_u\, \Jcal_\theta^{-1} M_\theta$ via the block-row formula \eqref{eq:block_row} and the splitting \eqref{eq:blocks} of $M_\theta$,
\begin{align}
    \inner{v_{\theta,x}}{w_{\theta,x}}
    &= h^\top\, Q_u\, \Jcal_\theta^{-1} M_\theta\, (M^{u}_{\theta})^\top h
    = h^\top S^{-1} \Big[ M^{u}_{\theta} + K_{us}\, (I_c - K_{ss})^{-1} M^{s}_{\theta} \Big] (M^{u}_{\theta})^\top h \nonumber \\
    &= \underbrace{h^\top S^{-1}  M^{u}_{\theta}(M^{u}_{\theta})^\top\, h}_{\eqqcolon\, \text{\textcircled{I}}} \;+\; \underbrace{h^\top S^{-1}\, \Xi_\theta\, h}_{\eqqcolon\, \text{\textcircled{II}}}, \label{eq:vw_split}
\end{align}
with $\Xi_\theta$ as in \eqref{eq:Xi_def}.
 
\emph{Term \textcircled{I}.} By Assumption~\ref{assumption:M}(i), $B_\theta \coloneqq  \left( M^{u}_{\theta}(M^{u}_{\theta})^\top \right)^{-1}$ is symmetric positive definite; let $\lambda_{+} = \sigma_{-}^{-2} \ge \lambda_{-} = \sigma_{+}^{-2} > 0$ be its largest and smallest eigenvalues and $\bar\lambda \coloneqq \frac{1}{2}(\lambda_{+} + \lambda_{-})$. Let $\psi \coloneqq M^{u}_{\theta}(M^{u}_{\theta})^\top \, S^{-\top} h  = B_\theta^{-1} S^{-\top} h\in \mathbb{R}^m$, so that $h = S^\top \left( M^{u}_{\theta}(M^{u}_{\theta})^\top \right)^{-1} \, \psi = S^\top B_\theta\, \psi$ and, using $B_\theta G_\theta = I_{m \times m}$,
\begin{align*}
    \text{\textcircled{I}} &= h^\top S^{-1} M^{u}_{\theta}(M^{u}_{\theta})^\top \, h = h^\top S^{-1} B_\theta^{-1} \, h  \\
    &= \big( S^\top B_\theta \psi \big)^\top S^{-1}\, B_\theta^{-1} \, \big( S^\top B_\theta \psi \big) = \inner{\psi}{S^\top B_\theta\, \psi} \\
    &= \inner{\psi}{S^\top (\bar\lambda I_{m \times m} + B_\theta - \bar\lambda I_{m \times m})\, \psi} \\
    &= \bar\lambda \inner{\psi}{S^\top  \psi} + \inner{\psi}{S^\top ( B_\theta - \bar\lambda I_{m \times m})\, \psi} \\
    &\geq \bar\lambda (1-\gamma) \norm{\psi}^2 + \inner{\psi}{S^\top ( B_\theta - \bar\lambda I_{m \times m})\, \psi},  \quad \text{using $\inner{\psi}{S^\top \psi} = \inner{S \psi}{\psi} \ge (1-\gamma)\norm{\psi}^2$ by \eqref{eq:S_inner_bound} of Lemma~\ref{lemma:schur_coercive},} \\
    &\geq \bar\lambda (1-\gamma) \norm{\psi}^2 - \norm{S^\top} \norm{B_\theta - \bar\lambda I_{m \times m}} \norm{\psi}^2 \\
    &\geq \frac{\lambda_{+} + \lambda_{-}}{2} (1-\gamma) \norm{\psi}^2 - \beta_\theta \left( \frac{\lambda_{+} - \lambda_{-}}{2} \norm{\psi}^2 \right)\\
    \intertext{by \eqref{eq:S_bound} of Lemma~\ref{lemma:schur_coercive}: $\norm{S^\top} = \norm{S} \le \beta_\theta$, and $\norm{B_\theta - \bar\lambda I_m} = \frac{\lambda_{+} - \lambda_{-}}{2}$}
    &= \frac{1}{2} \Big[ \beta_\theta r_\theta (\lambda_{+} + \lambda_{-} )-\beta_\theta (\lambda_{+} - \lambda_{-})\Big] \norm{\psi}^2, \quad \\
    \intertext{using the definition of $r_\theta \coloneqq \frac{1-\gamma}{\beta_\theta}$ in \eqref{eq:def_r_theta},}
    &= \frac{1}{2} \Big[ \beta_\theta r_\theta (\lambda_{+} + \lambda_{-} )-\beta_\theta (\lambda_{+} - \lambda_{-})\Big] \norm{\psi}^2 \\
    &= \frac{1}{2} \Big[ \beta_\theta (1+r_\theta) \lambda_{-} - \beta_\theta (1 - r_\theta ) \lambda_{+} \Big] \norm{\psi}^2 \\
    &= \frac{\beta_\theta(1+r_\theta)}{2} \Big[ \lambda_{-} - \frac{1 - r_\theta}{1 + r_\theta} \lambda_{+} \Big] \norm{\psi}^2 \\
    &= \frac{\beta_\theta(1+r_\theta)}{2} \Big[ \lambda_{-} - \gamma_\mathrm{eff} \lambda_{+} \Big] \norm{\psi}^2, \quad \text{by the definition of $\gamma_\mathrm{eff}$ in \eqref{eq:gamma_eff}.}
\end{align*}
Moreover, since $\norm{\psi} = \norm{B_\theta^{-1} S^{-\top} h} \ge \lambda_{\min}(B_\theta^{-1})\, \norm{S^{-\top} h} \ge \frac{\sigma_{-}^2}{\beta_\theta}\, \norm{h}$ by Lemma~\ref{lemma:schur_coercive}, and $\lambda_{-} - \gamma_{\mathrm{eff}}\, \lambda_{+} = \frac{1}{\sigma_+^2} - \gamma_\mathrm{eff} \frac{1}{\sigma_-^2} = \frac{\sigma_{-}^2 - \gamma_{\mathrm{eff}}\, \sigma_{+}^2}{\sigma_{-}^2\, \sigma_{+}^2}$, we have
\begin{align*}
    \text{\textcircled{I}} &\geq \frac{\beta_\theta(1+r_\theta)}{2} \Big[ \lambda_{-} - \gamma_\mathrm{eff} \lambda_{+} \Big] \norm{\psi}^2, \\
    &\geq \frac{ \beta_\theta (1+r_\theta)}{2} \left[ \frac{\sigma_{-}^2 - \gamma_{\mathrm{eff}}\, \sigma_{+}^2}{\sigma_{-}^2\, \sigma_{+}^2} \right] \frac{\sigma_{-}^4}{\beta_\theta^2}\, \norm{h}^2 \\
    &= \frac{ (1+r_\theta)}{2 \beta_\theta} \left[ \sigma_{-}^2 - \gamma_{\mathrm{eff}}\, \sigma_{+}^2 \right] \frac{\sigma_{-}^2}{\sigma_{+}^2}\, \norm{h}^2 \\
    &= \frac{\rho_\theta}{1-\gamma}\, \norm{h}^2, \quad \text{by the definition of $\rho_\theta$ in \eqref{eq:Xi_bound}, which is positive by Assumption~\ref{assumption:M}(ii).}
\end{align*}

\emph{Term \textcircled{II}.} By part (3) of Lemma~\ref{lemma:schur_coercive}, $\norm{S^{-1}} \le \frac{1}{1-\gamma}$, so $\text{\textcircled{II}} \ge -\frac{\norm{\Xi_\theta}}{1-\gamma}\, \norm{h}^2$.
 
\emph{Conclusion.} Combining the two bounds in \eqref{eq:vw_split},
\begin{equation}\label{eq:single_data_descent}
    \inner{v_{\theta,x}}{w_{\theta,x}} \;\ge\; \frac{\rho_\theta - \norm{\Xi_\theta}}{1-\gamma}\, \norm{h_{\theta,x}}^2 \;\ge\; 0,
\end{equation}
where the last inequality holds by Assumption~\ref{assumption:M}(iii).
\end{proof}

\textbf{Remark.}
While Assumption~\ref{assumption:M}(i) and (ii) are standard conditions that have appeared in prior work
such as~\cite{gelphman2026convergence} and~\cite{gelphman2026end}, condition (iii) is specific to the presence of the safety layer. Intuitively, it requires that the auxiliary slack variable does not dominate the gradient alignment used to update the control policy. In other words, the slack-induced component of the gradient must remain sufficiently small so that the alignment between the true Clarke subgradient and the JFB approximation is governed primarily by the control variables. This is precisely what enables the pointwise alignment result of Lemma~\ref{lemma:integrand_inner_product}. Although this condition is generally intractable to verify analytically for a given problem, the numerical results in Figure~\ref{fig:descent} provide empirical evidence that it is satisfied in practice. Moreover, the assumption can be relaxed to hold only in expectation over the initial condition $x$, leading directly to Lemma~\ref{lemma:expected_inner_product}, although we do not pursue this generalization here in this work.

\subsection{Proof of \textbf{Lemma~\ref{lemma:expected_inner_product}}}
\lemexpected*

\begin{proof}
Taking expectation with respect to $x$ on both sides of~\eqref{eq:single_data_descent}, we have
\begin{align}
        \mathbb{E}_x\left[ \inner{v_{\theta,x}}{w_{\theta,x}} \right] &\geq \mathbb{E}_x \left[ \frac{\rho_\theta - \norm{\Xi_\theta}}{1-\gamma}\, \norm{h_{\theta,x}}^2 \right] \nonumber \\
        &= \frac{\rho_\theta - \norm{\Xi_\theta}}{1-\gamma} \; \mathbb{E}_x \left[ \norm{h_{\theta,x}}^2 \right] \nonumber \\
        &\geq  \frac{\rho_\theta - \norm{\Xi_\theta}}{1-\gamma} \; \norm{ \mathbb{E}_x \left[ h_{\theta,x}\right]}^2  , \quad \text{by Jensen's inequality.} \label{eq:expect_lower}
    \end{align}

    On the other hand, we derive an upper bound for $\mathbb{E}_x\left[ \inner{v_{\theta,x}}{w_{\theta,x}} \right] $.
    Let $E_v = \mathbb{E}_{x}[v_{\theta,x}]$ and $E_w = \mathbb{E}_x[w_{\theta,x}]$, we have 
    \begin{align}
        \mathbb{E}_x\left[ \inner{v_{\theta,x}}{w_{\theta,x}} \right] 
        &= \mathbb{E}_x[\inner{v_{\theta,x}-E_v + E_v}{w_{\theta,x}-E_w+E_w}] \nonumber \\
        &= \mathbb{E}_x[\inner{v_{\theta,x}-E_v}{w_{\theta,x}-E_w}] + \inner{\mathbb{E}_x[v_{\theta,x}-E_v]}{E_w} +\inner{E_v}{\mathbb{E}_x[w_{\theta,x}-E_w]} + \inner{E_v}{E_w}  \nonumber \\ 
        &= \mathbb{E}_x[\inner{v_{\theta,x}-E_v}{w_{\theta,x}-E_w}] + \inner{0}{E_w} + \inner{E_w}{0} + \inner{E_v}{E_w} \nonumber \\
&= \mathbb{E}_x[\inner{v_{\theta,x}-E_v}{w_{\theta,x}-E_w}] + \inner{E_v}{E_w} \nonumber \\
&\leq \sqrt{\mathbb{E}_x[\|v_{\theta,x}-E_v\|^2_2]}\sqrt{\mathbb{E}_x[\|w_{\theta,x}-E_w\|^2_2]} + \inner{E_v}{E_w}, \quad \text{by Cauchy-Schwarz inequality,} \nonumber \\
&= \sqrt{\text{Var}_x[v_{\theta,x}]}\sqrt{\text{Var}_x[w_{\theta,x}]} + \inner{E_v}{E_w} \nonumber \\ 
&\leq \text{max}\left(\sqrt{\text{Var}_x[v_{\theta,x}]},\sqrt{\text{Var}_x[w_{\theta,x}]}\right)^2 + \inner{E_v}{E_w} \nonumber \\ 
&\leq \delta_{var} \norm{ \mathbb{E}_x \left[ h_{\theta,x}\right]}^2 + \inner{E_v}{E_w}, \quad \text{{ by Assumption~\ref{assumption:expectation_integrand_inner_product}}.}\label{eq:expect_lower_interm}
    \end{align}

Rearranging \eqref{eq:expect_lower_interm}, we have
\begin{align*}
     \inner{\mathbb{E}_{x}[v_{\theta,x}]}{\mathbb{E}_x[w_{\theta,x}]} &= \inner{E_v}{E_w} \nonumber \\
     &\geq \mathbb{E}_x\left[ \inner{v_{\theta,x}}{w_{\theta,x}} \right]  - \delta_{var} \norm{ \mathbb{E}_x \left[ h_{\theta,x}\right]}^2 \nonumber \\
     &\geq  \frac{\rho_\theta - \norm{\Xi_\theta}}{1-\gamma} \;  \norm{ \mathbb{E}_x \left[ h_{\theta,x}\right]}^2 - \delta_{var} \norm{ \mathbb{E}_x \left[ h_{\theta,x}\right]}^2, \quad \text{by \eqref{eq:expect_lower},} \nonumber \\
     &= \left( \frac{\rho_\theta - \norm{\Xi_\theta}}{1-\gamma} - \delta_{var} \right) \norm{ \mathbb{E}_x \left[ h_{\theta,x}\right]}^2 \\
     &= \delta_{v, \theta}^2 \geq 0,
\end{align*}
where in the second last step, we substituted that $\delta_{v, \theta} := \sqrt{\frac{\rho_\theta - \norm{\Xi_\theta}}{1-\gamma} - \delta_{var}} \norm{ \mathbb{E}_x \left[ h_{\theta,x}\right]}$ from Assumption~\ref{assumption:expectation_timeaverage}.
\end{proof}

\subsection{Proof of Theorem~\ref{thm:expect_suff_descent}}
\thmdescent*

\begin{proof}
By Assumptions~\ref{assumption:T}.\ref{assm:continuous} and~\ref{assumption:T}.\ref{assm:lipschitz} and~\cite[Theorem~2.7.2]{clarke1990optimization}, fix any $\xi \in \partial_\theta^C \mathbb{E}_x [ J_x(\theta)]$, there exists a corresponding $\xi_{x, t}$ and measurable selection $v_{\theta, x}(t) \in \partial^C_\theta J_{x, t}(\theta)$ satisfying $\xi_{x, t} =v_{\theta, x}(t)$ for $x$ and $t$ almost everywhere, such that $\xi = \mathbb{E}_x \left[ \int_0^T \xi_{x, t} dt \right]$.
Let $C_v := \frac{1}{T} \int_0^T v_{\theta,x}(t) dt = \frac{1}{T} \int_0^T \xi_{x, t} dt$ and $ C_w := \frac{1}{T} \int_0^T w_{\theta,x}(t) dt$, we have
\begin{align}
&\int_{0}^{T}\mathbb{E}_x[\xi_{x, t}]^{\top}\mathbb{E}_x[w_{\theta,x}(t)]dt \nonumber \\ &=\int_{0}^{T}\mathbb{E}_x[\xi_{x, t} - C_{v} + C_{v}]^{\top}\mathbb{E}_x[w_{\theta,x}(t) - C_w + C_w]dt \nonumber \\
&= \int_{0}^{T}\mathbb{E}_x[\xi_{x, t} - C_{v}]^{\top}\mathbb{E}_x[w_{\theta,x}(t) - C_w]dt  \nonumber \\
&+\int_{0}^{T} \mathbb{E}_x[\xi_{x, t} - C_{v}]^{\top}\mathbb{E}_x[C_w]dt \nonumber\\
&+\int_{0}^{T}\mathbb{E}_x[C_{v}]^{\top}\mathbb{E}_x[w_{\theta,x}(t) - C_w]dt + T\mathbb{E}_x[C_{v}]^{\top}\mathbb{E}_x[C_w] \nonumber \\
\begin{split}\label{eq:expect_int_vw}
&= \int_{0}^{T}\mathbb{E}_x[\xi_{x, t} - C_{v}]^{\top}\mathbb{E}_x[w_{\theta,x}(t) - C_w]dt  \\
&+\int_{0}^{T} \mathbb{E}_x[\xi_{x, t} - C_{v}]^{\top}\mathbb{E}_x[C_w]dt \\
&+\int_{0}^{T}\mathbb{E}_x[C_{v}]^{\top}\mathbb{E}_x[w_{\theta,x}(t) - C_w]dt + \frac{1}{T} \langle \xi, \mathbb{E}_x[d_x^{JFB}(\theta)] \rangle.
\end{split}
\end{align}

Here, we evaluate the second term of~\eqref{eq:expect_int_vw} as
\begin{align*}
    \int_{0}^{T} \mathbb{E}_x[\xi_{x, t} - C_{v}]^{\top}\mathbb{E}_x[C_w]dt 
    &= \mathbb{E}_x[C_w]^\top \left(\int_{0}^{T}\mathbb{E}_x[\xi_{x, t} - C_{v}]dt\right)  \\
    &= \mathbb{E}_x[C_w]^\top \mathbb{E}_x \left[ \int_{0}^{T}\xi_{x, t} - C_{v} dt\right]  \\
    &= \mathbb{E}_x[C_w]^\top \mathbb{E}_x \left[ 0 \right]  \\
    &= 0,
\end{align*}
where in the first step, we used the independence of $\mathbb{E}_x[C_w]$ on $t$. And in the second step, we used Fubini's Theorem, because $\xi_{x, t}$ is integrable on $[0,T] \times \Omega$. And in the third step, we used $\int_0^T (\xi_{x, t} - C_{v}) dt = 0$.

The same arguments yield 
\begin{equation*}
\int_{0}^{T}\mathbb{E}_x[C_{v}]^{\top}\mathbb{E}_x[w_{\theta,x}(t) - C_w]dt=0.
\end{equation*}
Thus, the second and third terms of \eqref{eq:expect_int_vw} are zero, and \eqref{eq:expect_int_vw} becomes 
\begin{align*}
    &\int_{0}^{T}\mathbb{E}_x[\xi_{x, t}]^{\top}\mathbb{E}_x[w_{\theta,x}(t)]dt \\
    &= \int_{0}^{T}\mathbb{E}_x[\xi_{x, t} - C_{v}]^{\top}\mathbb{E}_x[w_{\theta,x}(t) - C_w]dt + \frac{1}{T} \langle \xi, \mathbb{E}_x[d_x^{JFB}(\theta)] \rangle.
\end{align*}

Rearranging yields
\begin{align}
\begin{split}\label{eq:inner_expectation}
    &\frac{1}{T} \langle \xi, \mathbb{E}_x[d_x^{JFB}(\theta)] \rangle \\
    &= \int_{0}^{T}\mathbb{E}_x[\xi_{x, t}]^{\top}\mathbb{E}_x[w_{\theta,x}(t)]dt - \int_{0}^{T}  \underbrace{\mathbb{E}_x[\xi_{x, t} - C_{v}]^{\top}\mathbb{E}_x[w_{\theta,x}(t) - C_w]}_{(\text{I})}dt.
\end{split}
\end{align}
We evaluate the integrand (I) as follows
\begin{align}
\mathbb{E}_x[\xi_{x, t} - C_{v}]^{\top}\mathbb{E}_x[w_{\theta,x}(t) - C_w] &= \mathbb{E}_x[v_{\theta,x}(t) - C_v]^{\top}\mathbb{E}_x[w_{\theta,x}(t) - C_w], \quad \text{for a.e. $t$,}\\
&\leq \| \mathbb{E}_x[v_{\theta,x}-C_v]\|_2 \|\mathbb{E}_x[w_{\theta,x}-C_w]\|_2 \nonumber \\
&\leq (a_v +\delta_v \inf_{\phi \in \partial^C_\theta \mathbb{E}_x [J_x(\theta)]} \norm{\phi} ) (a_w + \delta_w \left\| \mathbb{E}_x[\dJFB] \right\|_2) \nonumber \\
&\leq \max(a_v + \delta_v \inf_{\phi \in \partial^C_\theta \mathbb{E}_x [J_x(\theta)]} \norm{\phi}, a_w +\delta_w \left\| \mathbb{E}_x[\dJFB] \right\|_2)^2 \nonumber \\
&\leq \delta_{v, \theta}^{2} - \frac{\epsilon_v}{T^2}  \inf_{\phi \in \partial^C_\theta \mathbb{E}_x [J_x(\theta)]} \norm{\phi}^2,
\label{eq:delta_theta}
\end{align}
for almost everywhere $t$, where in the second step, we used Cauchy-Schwarz inequality. And in the subsequent steps, we used Assumption~\ref{assumption:expectation_timeaverage}.

Substituting \eqref{eq:delta_theta} into~\eqref{eq:inner_expectation} yields
\begin{align*}
    & \frac{1}{T} \langle \xi, \mathbb{E}_x[d_x^{JFB}(\theta)] \rangle \\
    &\geq \int_{0}^{T}\mathbb{E}_x[v_{\theta,x}(t)]^{\top}\mathbb{E}_x[w_{\theta,x}(t)]dt - \int_0^T \left(\delta_{v, \theta}^{2} - \frac{\epsilon_v}{T^2}  \inf_{\phi \in \partial^C_\theta \mathbb{E}_x [J_x(\theta)]} \norm{\phi}^2 \right)  dt \\
    &\geq \int_{0}^{T} \delta_{v, \theta}^2 dt - \int_0^T \left(\delta_{v, \theta}^{2} - \frac{\epsilon_v}{T^2}  \inf_{\phi \in \partial^C_\theta \mathbb{E}_x [J_x(\theta)]} \norm{\phi}^2 \right)  dt,\quad \text{by \eqref{eq:expect_descent} of Lemma~\ref{lemma:expected_inner_product},} \\
    &= \frac{\epsilon_v}{T^2} \int_0^T   \inf_{\phi \in \partial^C_\theta \mathbb{E}_x [J_x(\theta)]} \norm{\phi}^2  dt \\
    &= \frac{\epsilon_v}{T}    \inf_{\phi \in \partial^C_\theta \mathbb{E}_x [J_x(\theta)]} \norm{\phi}^2.
\end{align*}

Multiplying $T$ to both sides, we obtain
\begin{align*}
    \langle \xi, \mathbb{E}_x[d_x^{JFB}(\theta)] \rangle \geq \epsilon_v \inf_{\phi \in \partial^C_\theta \mathbb{E}_x [J_x(\theta)]} \norm{\phi}^2,
\end{align*}
which holds for every $\xi \in \partial_\theta^C \mathbb{E}_x [ J_x(\theta)]$.
\end{proof}

\section{Proof of Convergence}
\subsection{Proof of Theorem~\ref{thm:gradient_flow_converge}}
\thmconvergence*

\begin{proof}
    \textbf{Step 1 (MVT):} Let $F(\theta):=\mathbb{E}_x[J_x(\theta)]$.
    By the Lebourg's mean-value theorem~\cite[Theorem~2.3.7]{clarke1990optimization}, we have
    \begin{equation}\label{eq:MVT}
        F(\theta(\tau+\Delta \tau)) - F(\theta(\tau)) = \inner{\hat{\xi}(\Delta \tau)}{\theta(\tau+\Delta \tau)-\theta(\tau)},
    \end{equation}
    for some $\hat{\xi}(\Delta \tau) \in \partial_\theta^C \mathbb{E}_x [J_x(\hat{\theta})]$, where $\hat{\theta}$ is on the line segment between $\theta(\tau+\Delta \tau)$ and $\theta(\tau)$.

    \textbf{Step 2 (Upper Right Dini Derivative):} From Assumption~\ref{assumption:T}.\ref{assm:bounded}, ${d^{\mathrm{JFB}}_x(\theta)}$ is uniformly bounded over $x$ and $\theta$. This implies that $\mathbb{E}_x \left[ d_x^{JFB}(\theta) \right]$ is uniformly bounded for all $\theta$. By \eqref{eq:gradient_flow} and the uniform boundedness of $\mathbb{E}_x \left[ d_x^{JFB}(\theta) \right]$, $\theta(\tau)$ is Lipschitz continuous. Thus, by the Lipschitz continuity of $\theta(\tau)$ and of $F$ in $\theta$ (Assumption~\ref{assumption:T}.\ref{assm:smooth}), the upper right Dini derivative 
    \begin{equation}\label{eq:Dini}
        D^+ F(\theta(\tau)) := \limsup_{\Delta \tau \to 0^+} \frac{F(\theta(\tau+\Delta \tau)) - F(\theta(\tau))}{\Delta \tau}
    \end{equation}
    exists and is finite.

    Combining \eqref{eq:MVT} and \eqref{eq:Dini}, we have
    \begin{align}
        D^+ F(\theta(\tau)) &= \limsup_{\Delta \tau \to 0^+} \frac{F(\theta(\tau+\Delta \tau)) - F(\theta(\tau))}{\Delta \tau} \\
        &= \limsup_{\Delta \tau \to 0^+} \inner{\hat{\xi}(\Delta \tau)}{\frac{\theta(\tau+\Delta \tau)-\theta(\tau)}{\Delta \tau}}.
    \end{align}
    Moreover, by the sequential characterization of limit supremum, there exists a sequence $\{ h_i \}_{i=0}^\infty$ such that $h_i \to 0^+$ as $i \to \infty$, and
    \begin{equation}\label{eq:Dini_seq}
        D^+ F(\theta(\tau)) = \lim_{i \to \infty} \inner{\hat{\xi}(h_i)}{\frac{\theta(\tau+h_i)-\theta(\tau)}{h_i}}.
    \end{equation}

    Since $F$ is Lipschitz continuous in $\theta$ (Assumption~\ref{assumption:T}.\ref{assm:smooth}), $\hat{\xi}$ is bounded~\cite[Proposition~2.1.2(a)]{clarke1990optimization}. By the Bolzano-Weierstrass theorem~\cite[Theorem~3.4.8]{bartle2000introduction}, there exists a subsequence $\{ h_{i_k} \}_{k=0}^\infty$ of $\{ h_i \}_{i=0}^\infty$ such that $\lim\limits_{k \to \infty} \hat{\xi}(h_{i_k})$ converges. Thus, by~\cite[Theorem~3.4.2]{bartle2000introduction}, \eqref{eq:Dini_seq} can be rewritten as
    \begin{equation}\label{eq:Dini_subseq} 
        D^+ F(\theta(\tau)) = \lim_{k \to \infty} \inner{\hat{\xi}(h_{i_k})}{\frac{\theta(\tau+h_{i_k})-\theta(\tau)}{h_{i_k}}}.
    \end{equation}

    We evaluate the limits of the two functions inside the inner product. On one hand, let $\xi:=\lim\limits_{k \to \infty} \hat{\xi}(h_{i_k})$. Since $\hat{\xi}(h_{i_k}) \in \partial_\theta^C \mathbb{E}_x [J_x(\hat{\theta}_{i_k})]$, where $\hat{\theta}_{i_k}$ is on the line segment between $\theta(\tau)$ and $\theta(\tau + h_{i_k})$, and $\lim\limits_{k \to \infty}\hat{\theta}_{i_k}=\theta(\tau)$, by~\cite[Theorem~2.1.5(b)]{clarke1990optimization},
    \begin{equation}\label{eq:k_lim_xi}
        \xi \in \partial_\theta^C \mathbb{E}_x [J_x(\theta(\tau))].
    \end{equation}
    On the other hand, By~\eqref{eq:gradient_flow},
    \begin{align}\label{eq:k_lim_theta}
        \lim_{k \to \infty}{\frac{\theta(\tau+h_{i_k})-\theta(\tau)}{h_{i_k}}} = \frac{d\theta(\tau)}{d\tau} = - \mathbb{E}_x \left[ d_x^{JFB}(\theta(\tau)) \right].
    \end{align}

    \textbf{Step 3 (A.E. Decreasing Objective):} Thus, we can continue to evaluate~\eqref{eq:Dini_subseq} as follows
    \begin{align}
        D^+ F(\theta(\tau)) &= \lim_{k \to \infty} \inner{\hat{\xi}(h_{i_k})}{\frac{\theta(\tau+h_{i_k})-\theta(\tau)}{h_{i_k}}} \nonumber \\
        &= \inner{\lim_{k \to \infty} \hat{\xi}(h_{i_k})}{\lim_{k \to \infty} \frac{\theta(\tau+h_{i_k})-\theta(\tau)}{h_{i_k}}}, \quad \text{by continuity of inner product,} \nonumber \\
        &= \inner{\xi}{- \mathbb{E}_x \left[ d_x^{JFB}(\theta(\tau)) \right]}, \quad \text{by \eqref{eq:k_lim_xi} and \eqref{eq:k_lim_theta},} \nonumber \\
        & \leq - \epsilon_{v} \cdot \inf_{\phi \in \partial_\theta^C \mathbb{E}_x [J_x(\theta(\tau))]} \norm{\phi}^2, \quad \text{{by Theorem~\ref{thm:expect_suff_descent}}} \label{eq:Dini_suff_descent}.
    \end{align}

    This implies a sufficient decrease in the objective function in terms of the upper-right Dini derivative.

      \textbf{Step 4 (A.E. Differentiability):}  By Assumption~\ref{assumption:T}.\ref{assm:smooth}, $F(\theta)$ is Lipschitz continuous with respect to $\theta$. Moreover, by~\eqref{eq:gradient_flow}, $\theta(\tau)=- \int_0^\tau \mathbb{E}_x \left[ d_x^{JFB}(\theta(\sigma)) \right] d\sigma$ is given by an indefinite integral, and thus it is absolutely continuous~\cite[Theorem~5.4.14]{royden1988real}. 
    
    Hence, $F(\theta(\tau))$ is absolutely continuous on every compact subinterval of $[0,\infty)$~\cite[Exercise~17\&20]{royden1988real}. This implies that {$F(\theta(\tau))$ is differentiable almost everywhere} on every subinterval $[n, n+1]$ and thus $[n, n+1)$ for all $n\in\mathbb{N}$~\cite[Corollary~5.4.12]{royden1988real}. In other words, the subset of $[n, n+1)$ on which $F(\theta(\tau))$ is nondifferentiable has Lebesgue measure zero. 
    
    Since $[0, \infty)=\cup_{n\in\mathbb{N}}[n, n+1)$, by the countable additivity of Lebesgue measure~\cite[Chapter~1]{royden1988real}, the subset of $[0, \infty)$ on which $F(\theta(\tau))$ is nondifferentiable also has Lebesgue measure zero. This means $F(\theta(\tau))$ is differentiable almost everywhere.
  
  \textbf{Step 5 (Convergence):} 
    When $F(\theta(\tau))$ is differentiable, its derivative coincides with the Dini derivative $D^+F(\theta(\tau))$~\cite[Chapter~1]{royden1988real}. Thus, by~\cite[Corollary~5.4.15]{royden1988real}, for any $\tau>0$,
    \begin{align*}
    F(\theta(\tau)) - F(\theta(0)) &= \int_0^\tau D^+ F(\theta(\sigma)) d\sigma \\
    & \leq \int^\tau_0 - \epsilon_{v} \cdot \inf_{\phi \in \partial_\theta^C \mathbb{E}_x [J_x(\theta(\sigma))]} \norm{\phi}^2 d\sigma, \quad \text{by \eqref{eq:Dini_suff_descent},}\\
    &=- \epsilon_{v} \int^\tau_0 \inf_{\phi \in \partial_\theta^C \mathbb{E}_x [J_x(\theta(\sigma))]} \norm{\phi}^2 d\sigma.
    \end{align*}

   We remark that $\epsilon_v >0$ and $\inf_{\phi \in \partial_\theta^C \mathbb{E}_x [J_x(\theta(\tau))]} \norm{\phi}^2 \geq 0$. Moreover, $F$ is bounded from below (Assumption~\ref{assumption:T}.\ref{assm:smooth}), thus $\lim\limits_{\tau \to \infty} \int^\tau_0 \inf\limits_{\phi \in \partial_\theta^C \mathbb{E}_x [J_x(\theta(\sigma))]} \norm{\phi}^2 d\sigma < \infty$. This implies $\liminf\limits_{\tau \to \infty} \inf\limits_{\phi \in \partial_\theta^C \mathbb{E}_x [J_x(\theta(\tau))]} \norm{\phi} = 0$.

\end{proof}

\textbf{Remark.}
The convergence result is established under continuous-time gradient flow rather than discrete gradient descent. The primary technical obstacle is the nonsmoothness introduced by the projection operator, which renders the standard Taylor-expansion arguments underlying most gradient descent convergence analyses inapplicable. Instead, our proof relies on a novel analysis of Clarke subgradients in a neighborhood of the gradient-flow trajectory, allowing us to establish monotonic descent despite the set-valued nature of the generalized Jacobian. To the best of our knowledge, this constitutes the first convergence analysis for end-to-end policy learning with embedded nonsmooth CBF projection layers.

\begin{figure}[t]
    \centering
    \begin{subfigure}[b]{0.4\textwidth}
        \centering
        \includegraphics[width=\textwidth]{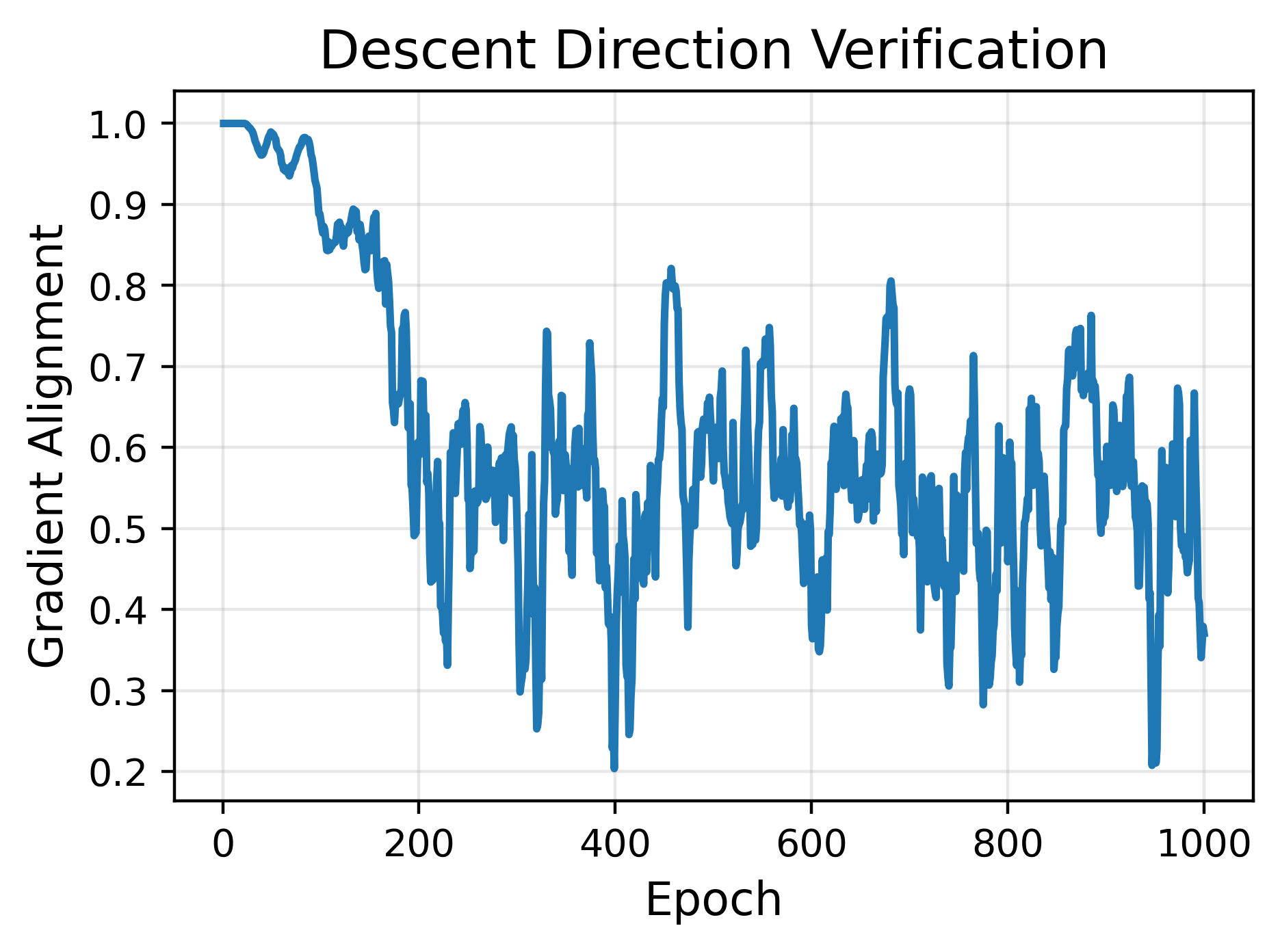}
        \caption{Descent direction verification}
        \label{fig:descent}
    \end{subfigure}
    \begin{subfigure}[b]{0.4\textwidth}
        \centering
        \includegraphics[width=\textwidth]{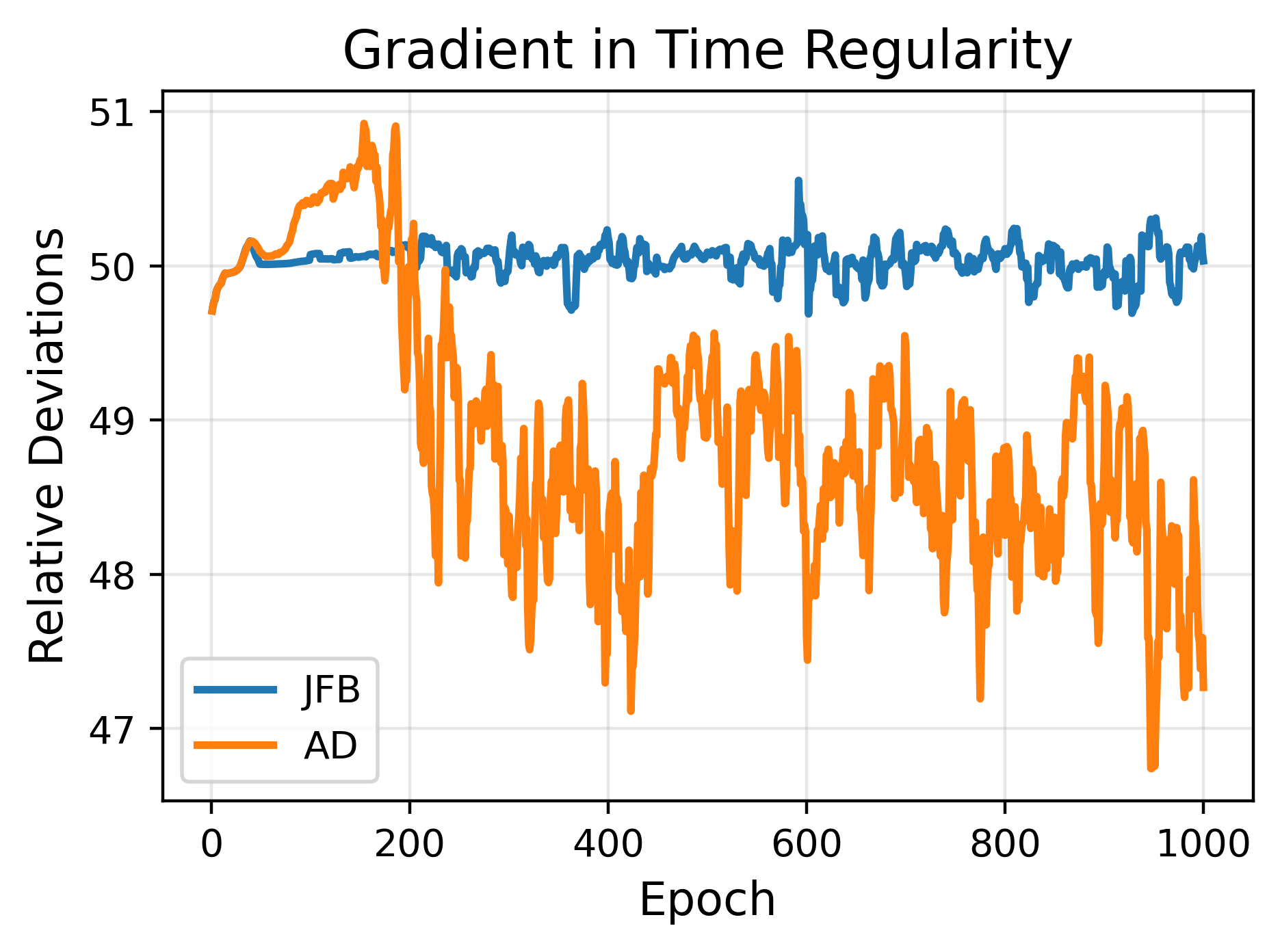}
        \caption{Gradient regularity along trajectories}
        \label{fig:grad_reg}
    \end{subfigure}
    \caption{Numerical verification of the theoretical results and key assumptions. \textbf{Left:} gradient alignment results between JFB and AD during training, while the alignment signal varies during training, we notice it remains positive during the run, indicating descent direction and serves as a direct verification of Theorem~\ref{thm:expect_suff_descent}. \textbf{Right:} Gradient regularity results along trajectories, here we establish numerical support for Assumption~\ref{assumption:expectation_timeaverage}, the time average of the gradients for both JFB and AD remain bounded during training, hence supporting the validity of the assumption. }
    \label{fig:theory_validation}
\end{figure}

\section{Details on Numerical Experiments}
\label{sec:example_details}
In this section we provide additional details on the experiment setup to ensure reader understanding and reproducibility. Source code for reproducing the results will be released upon publication of the work. 

\subsection{Dynamics and Optimal Control Objectives}
We first present necessary details on the dynamics models used for the work, as well as the control objectives that define each corresponding example. 
In total we consider three different dynamics models, namely
\begin{itemize}
    \item \textbf{Single Integrator Dynamics:} The state evolves according to
    \[
    \frac{d}{dt} z(t) = u(t),
    \]
    where \( z(t) \in \mathbb{R}^3 \) denotes the state and \( u(t) \in \mathbb{R}^3 \) is the control input, which directly specifies the velocity of the system over a finite time horizon.

    \item \textbf{Double Integrator Dynamics:} The state evolves according to
    \[
    \frac{d^2}{dt^2} z(t) = u(t),
    \]
    or equivalently in first-order form,
    \[
    \frac{d}{dt}
    \begin{bmatrix}
        z(t) \\
        v(t)
    \end{bmatrix}
    =
    \begin{bmatrix}
        v(t) \\
        u(t)
    \end{bmatrix},
    \]
    where \( z(t) \in \mathbb{R}^2 \) denotes the position, \( v(t) = \displaystyle\frac{d}{dt}z(t) \in \mathbb{R}^2 \) the velocity, and \( u(t) \in \mathbb{R}^2 \) the control input representing acceleration.
    
    \item \textbf{Quadcopter Dynamics:} We consider the following quadcopter dynamics also used in~\cite{onken2022neural,li2025zero} with $12$-dimensional state with dynamics function $f$ defined as
\[
f(t,z)=
\begin{bmatrix}
z_7\\
z_8\\
z_9\\
z_{10}\\
z_{11}\\
z_{12}\\
0\\
0\\
-\bar{g}\\
0\\
0\\
0
\end{bmatrix},
\]
and
\[
g(t,z)=
\begin{bmatrix}
0 & 0 & 0 & 0\\
0 & 0 & 0 & 0\\
0 & 0 & 0 & 0\\
0 & 0 & 0 & 0\\
0 & 0 & 0 & 0\\
0 & 0 & 0 & 0\\
\displaystyle\frac{\sin(z_4)\sin(z_6)+\cos(z_4)\sin(z_5)\cos(z_6)}{\bar{m}} & 0 & 0 & 0\\
\displaystyle\frac{-\cos(z_4)\sin(z_6)+\sin(z_4)\sin(z_5)\cos(z_6)}{\bar{m}} & 0 & 0 & 0\\
\displaystyle\frac{\cos(z_5)\cos(z_6)}{\bar{m}} & 0 & 0 & 0\\
0 & 1 & 0 & 0\\
0 & 0 & 1 & 0\\
0 & 0 & 0 & 1
\end{bmatrix}.
\]
The controls for the problem are $u = [u_1,u_2,u_3,u_4]^\top \in \mathbb{R}^{4}$. We assume that both the mass $\bar{m}$ and gravity $\bar{g}$ are given and remain fixed for all examples. Notably, the quadcopter dynamics are nonlinear in the state; this, combined with the relatively high state dimension, adds to the complexity of the control problem.
\end{itemize}

We consider multi-agent control problems by treating the collection of all agent states as a single joint state. Specifically, if $z^{(i)}(t) \in \mathbb{R}^n$ denotes the state of agent $i$, then for a total of $N$ agents we define the joint state
\[
z(t) =
\begin{bmatrix}
z^{(1)}(t) \\
z^{(2)}(t) \\
\vdots \\
z^{(N)}(t)
\end{bmatrix}
\in \mathbb{R}^{Nn}.
\]
The joint control $u(t)$ and the corresponding control-affine dynamics are defined analogously through concatenation. With this convention, the multi-agent system is treated as a single high-dimensional control system, and we continue to use the notation $z$ and $u$ throughout for simplicity. Although the dynamics are presented independently of the controller, the primary goal of our algorithm is to recover a centralized feedback policy acting on the joint state.

Given an initial-state distribution $x \sim \rho$ and a desired target state
\[
z_{\mathrm{target}} =
\begin{bmatrix}
z^{(1)}_{\mathrm{target}} \\
z^{(2)}_{\mathrm{target}} \\
\vdots \\
z^{(N)}_{\mathrm{target}}
\end{bmatrix},
\]
we consider the quadratic running cost
\[
L(t, z, u) = \frac{1}{2}\|u\|^2.
\]
For systems with explicit velocity states, such as the double integrator, we additionally penalize the velocity to encourage smooth trajectories,
\[
L(t, z, u) = \frac{1}{2}\|u\|^2 + \frac{1}{2}\|v\|^2.
\]
Similarly, the terminal cost is defined as
\[
G(z(T)) = \frac{1}{2}\|z(T) - z_{\mathrm{target}}\|^2.
\]
These objective functions are standard choices in the optimal control literature. For simplicity of presentation, we omit the weighting coefficients used in practice. In our experiments, a larger terminal cost weight is typically employed to encourage convergence to the desired target states, with the specific values chosen according to each problem.

\subsection{Implementation Details and Hyperparameters}
We implement our proposed approach using PyTorch~\cite{paszke2019pytorch}. For comparative study, we also use the CVXPY Layer library~\cite{agrawal2019differentiable}. All experiments are conducted on the same system, which consists of a shared NVIDIA RTX 5090 GPU with VRAM capped at $16$GB and an AMD Ryzen PRO 7975WX CPU. Numerical results are seeded to ensure reproducibility.

We use the Adam optimizer~\cite{kingma2014adam} for all experiments with initial learning rate set at $0.001$ across the board. 
For the DYS projection layer, we choose update step size $\zeta = 0.5$, maximum iteration number $5000$ and early stopping criteria $\texttt{tol} = 0.005$. These settings are kept fixed for all experiments. 
We consider $T = 10$ and $dt = 0.2$ for all experiments and use Runge–Kutta 4 (RK4)~\cite{butcher2016numerical} for time integration of the forward dynamics. We note that this leads to $50$ time steps for each trajectory rollout, longer than most existing work that shares similar ideas.
Similarly, we fix the batch size at $32$ for all examples. 
We parameterize the policy network using a ResNet~\cite{he2016deep}-based architecture as the main building block, as such architectures have been shown to be numerically stable over long-horizon simulations and are commonly used in learning-based control applications.
We list problem-specific hyperparameters such as network size, weight decay strength, total number of training epochs in Table~\ref{tab:hyperparameters}. 

As is the case in both learning and classical control applications~\cite{ames2019control,li2022optimizable,amos2017optnet}, incorporating a projection-based safety filter in the loop leads to a more challenging optimization problem. As such, to ensure stable training, we employ a continuation strategy on the terminal cost weight, initializing it at a relatively small value and progressively increasing it during training. We find this approach useful for mitigating early optimization instability. To ensure a fair comparison, the same setting is applied across all experiments on all methods compared.

\begin{table}[t]
\centering
\begin{tabular}{ccccc}
\hline
Problem (\# of agents) & \# of epochs & \# of network weights  & Weight decay &  Obstacle radius    \\ \hline \hline
Single Integrator ($50)$ & $3500$ & $354582$ & $0.001$ & $0.5, 0.7$  \\ \hline
Double Integrator ($1$) & $1000$ & $25474$ & $0.0001$ & $0.3$ \\ \hline
Double Integrator ($6$) & $5000$ & $103948$ & $0.001$ & $0.35$  \\ \hline
Quadcopter ($5$) & $2800$ & $109588$ & $0.001$ &  $0.35$  \\ \hline
Quadcopter ($30$) & $4500$ & $193912$ & $0.001$ & $0.35$ \\ \hline
Quadcopter ($100$) & $5000$ & $604432$ & $0.001$ & $0.35$ \\ \hline
\end{tabular}
\caption{Problem-specific hyperparameters for each numerical experiment. Note that, to ensure a fair comparison, we use the same training configuration across different models for each example.}
\label{tab:hyperparameters}
\end{table}

\subsection{CVXPY Layers Baselines and Regularizing the QP}
The CVXPY Layers baseline differentiates through the QP solution via 
the implicit function theorem (IFT) applied to the KKT conditions of 
the problem at optimality~\cite{agrawal2019differentiable}. 
The standard hard CBF-QP safety filter takes the form of~\eqref{eq:Axbconstraint} and~\eqref{eq:projection}, which we combine and restate here.
\begin{equation}
u^\star = \arg\min_{u} \|u - u_\theta\|^2 
\quad \text{s.t.} \quad 
A(z)u \le b(z),
\label{eq:hard_qp}
\end{equation}
where $A(z) \in \mathbb{R}^{c \times m}$ and $b(z) \in \mathbb{R}^c$ 
encode the stacked CBF constraints from \ref{subsec:DYS}, 
and $c$ is the total number of constraints across all agent-obstacle pairs, $u_\theta$ is the nominal output of the policy network. At each step the QP is solved using CVXPY Layers with the Splitting Conic Solver (SCS)~\cite{diamond2016cvxpy,agrawal2019differentiating} and is differentiable with respect to the policy output. Similar to the DYS, we use a uniform configuration of 
maximum iteration number $1000$ and early stopping criteria $\texttt{tol} = 0.001$ across all experiments. 

We notice during training as the size of the problem increases, we repeatedly encounter infeasibility warnings and solver failures, especially on the quadcopter examples. We also notice numerical instability that hinders performance of CVXPY Layers-based models.

To address this and ensure the comparison is as comprehensive as possible, we consider a modified problem by introducing a relaxation variable $\delta_i \ge 0$ 
per constraint and regularizing the QP objective directly, a standard technique for restoring feasibility 
of CBF-QPs~\cite{ames2019control, xiao2021sufficient, liu2023auxiliary}, yielding the regularized QP
\begin{equation}
u^\star = \arg\min_{u,\,\delta} \; 
\|u - u_\theta\|^2 + \lambda \sum_{i=1}^c \delta_i
\quad \text{s.t.} \quad 
A_i(z)u \le b_i(z) + \delta_i, \quad 
\delta_i \ge 0,
\label{eq:reg_qp}
\end{equation}
with penalty term $\lambda \gg 1$. This formulation is always feasible: 
$\delta_i$ absorbs any violation, and the large penalty drives 
$\delta \to 0$ whenever the hard constraint is satisfiable, 
recovering the behavior of~\eqref{eq:hard_qp} in safe regions. 
The KKT system for IFT differentiation is therefore well-defined 
everywhere, since the augmented problem always has a unique solution. In our experiments, we set $\lambda = 10,000$ throughout, While this alleviates some of the numerical instability, we note that the solution to the regularized problem is generally suboptimal with respect to the original control problem. We also tested multiple learning rates ranging from $0.01$ and $1\to,es 10^{-5}$ as well as a learning rate scheduler, to ensure that scaling and normalization were not the cause of the observed instability. 

Despite the regularization and our best effort in tuning the models, the CVXPYLayers baseline still 
struggles as problem dimensionality increases, as shown in Table~\ref{tab:results_obj}, Table~\ref{tab:results_mem} and Figure~\ref{fig:training_time}. In particular we notice blowup in quadcopter examples with $30$ and $100$ agents whether we regularize the QP or not. 
We attribute this to ill-conditioning of the KKT linear system 
during backpropagation~\cite{bai2020multiscale,bai2021stabilizing}, as inaccuracies in gradients get amplified when stacked through the whole trajectories. 
Increasing the number of agents and constraints can further exacerbate the problem: 
for $N$ agents and $c$ obstacles, the total constraint count grows at factor of
$N\cdot c$, and the KKT system scales accordingly. 
In contrast, the JFB update avoids this linear system solve 
entirely, backpropagating only through the final DYS iteration 
and maintaining stable gradients regardless of problem scale, 
as confirmed by the time and memory results.

\subsection{Training Time Comparisons}
\label{sec:results_time}
Having controllable training time is crucial for scaling any of these approaches to high-dimensional problems. We provide a comparative study across all methods on different examples. The results are shown in Figure~\ref{fig:training_time}. Note that, in all examples, DYS with JFB training has the lowest computation time per epoch. This is due to the fact that we use inexact gradients for JFB training, which significantly reduces the cost of tracking deep computational graphs, especially in cases where QP projections are frequently active along the trajectories.

The computation time of CVXPY Layers remains stable, yet high, when the method works, indicating an advantage of operator-splitting methods in high-dimensional problems: replacing the more expensive interior-point method with much cheaper projections allows for more stable behavior even in large-scale problems.
We do however, want to note that the convergence behavior of different methods can vary. While our method provides a general framework for problems of arbitrary dimensionality and complexity, for smaller-scale problems a CVXPY Layers-based solver can often be sufficient in terms of both accuracy and speed.

\begin{figure}[t]
    \centering
    \begin{subfigure}[b]{0.32\textwidth}
        \centering
        \includegraphics[width=\textwidth]{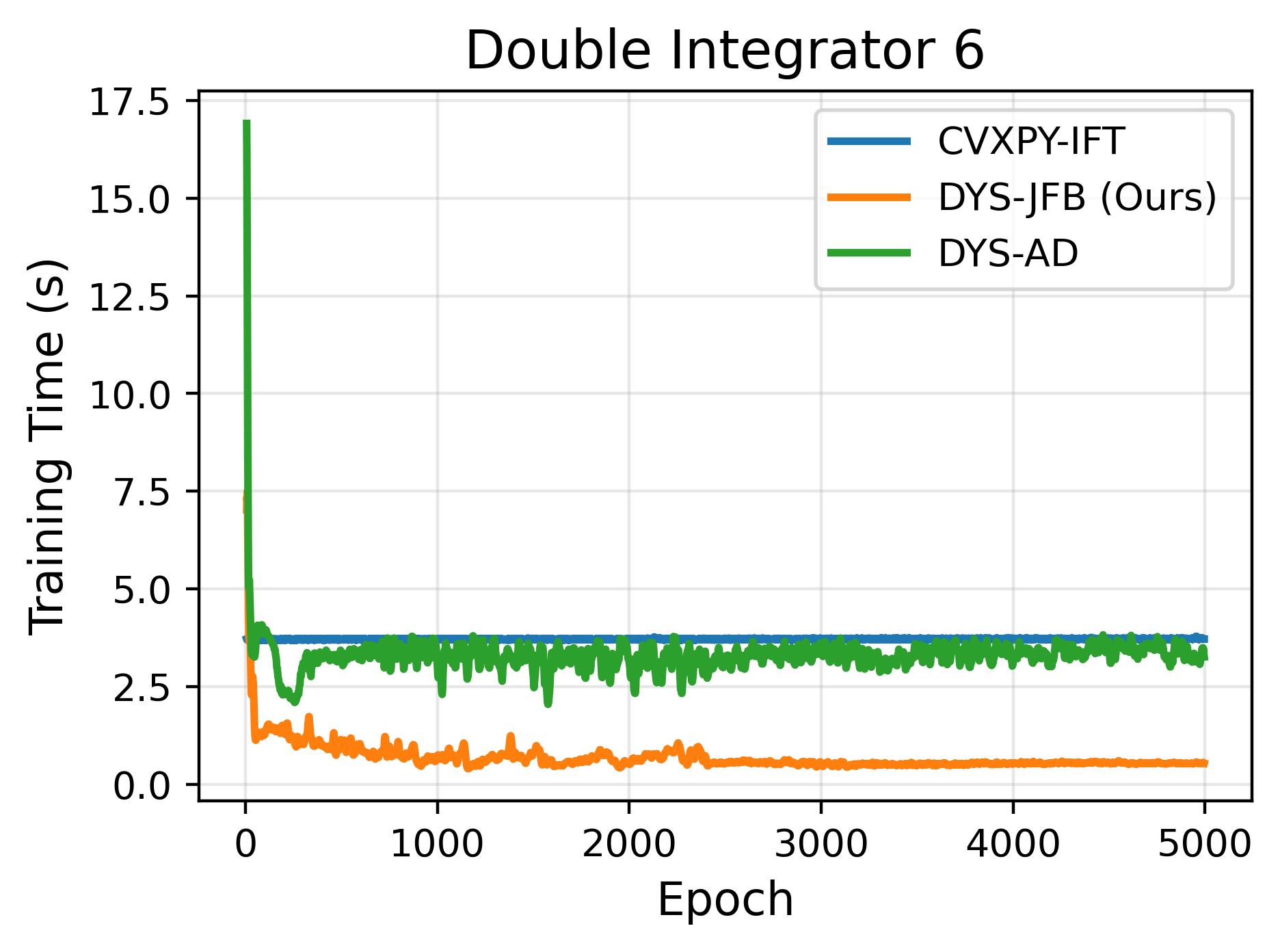}
        \caption{Double integrator 6}
        \label{fig:di_time}
    \end{subfigure}
    \hfill
    \begin{subfigure}[b]{0.32\textwidth}
        \centering
        \includegraphics[width=\textwidth]{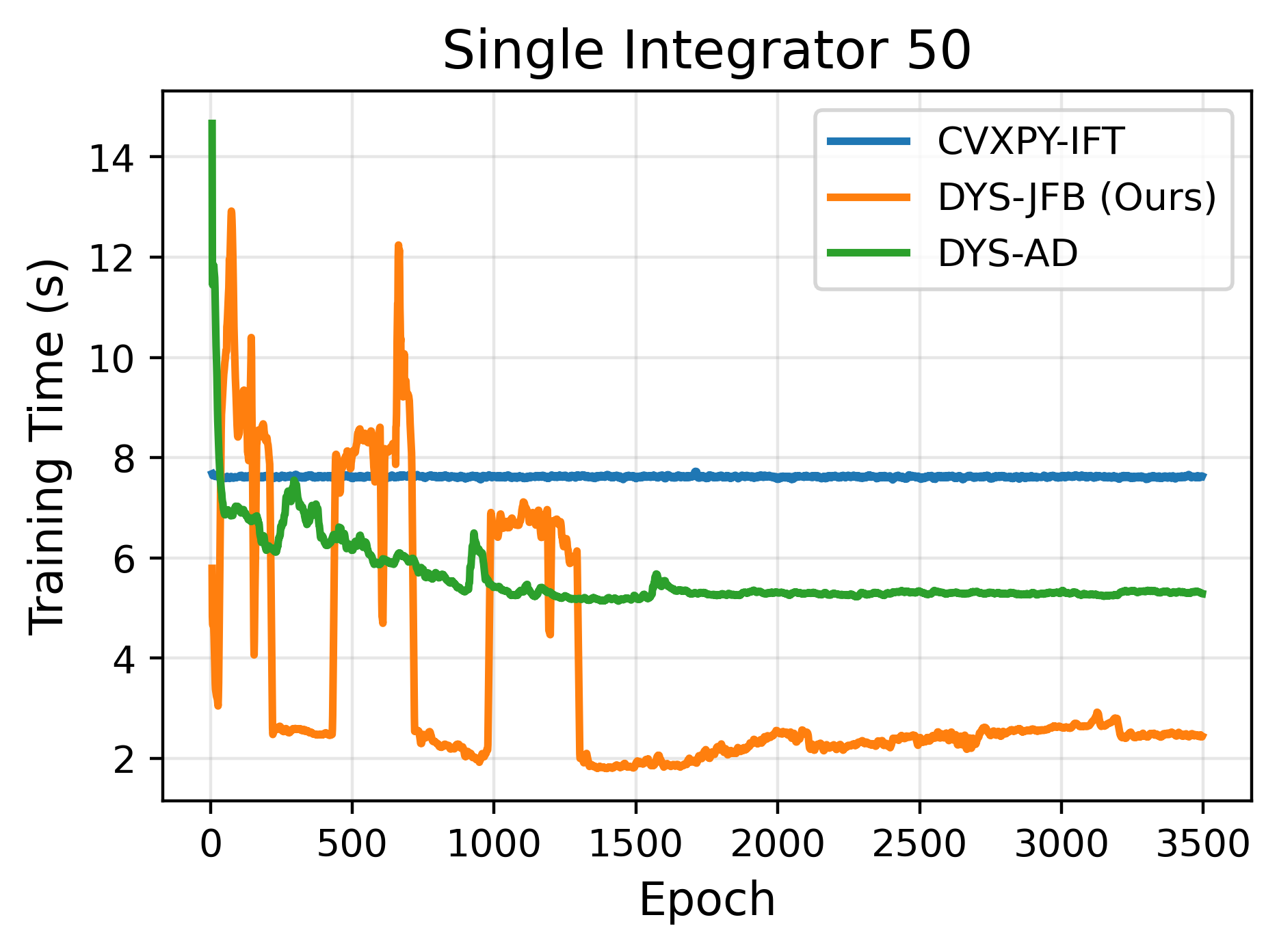}
        \caption{Single integrator 30}
        \label{fig:si_time}
    \end{subfigure}
    \hfill
    \begin{subfigure}[b]{0.32\textwidth}
        \centering
        \includegraphics[width=\textwidth]{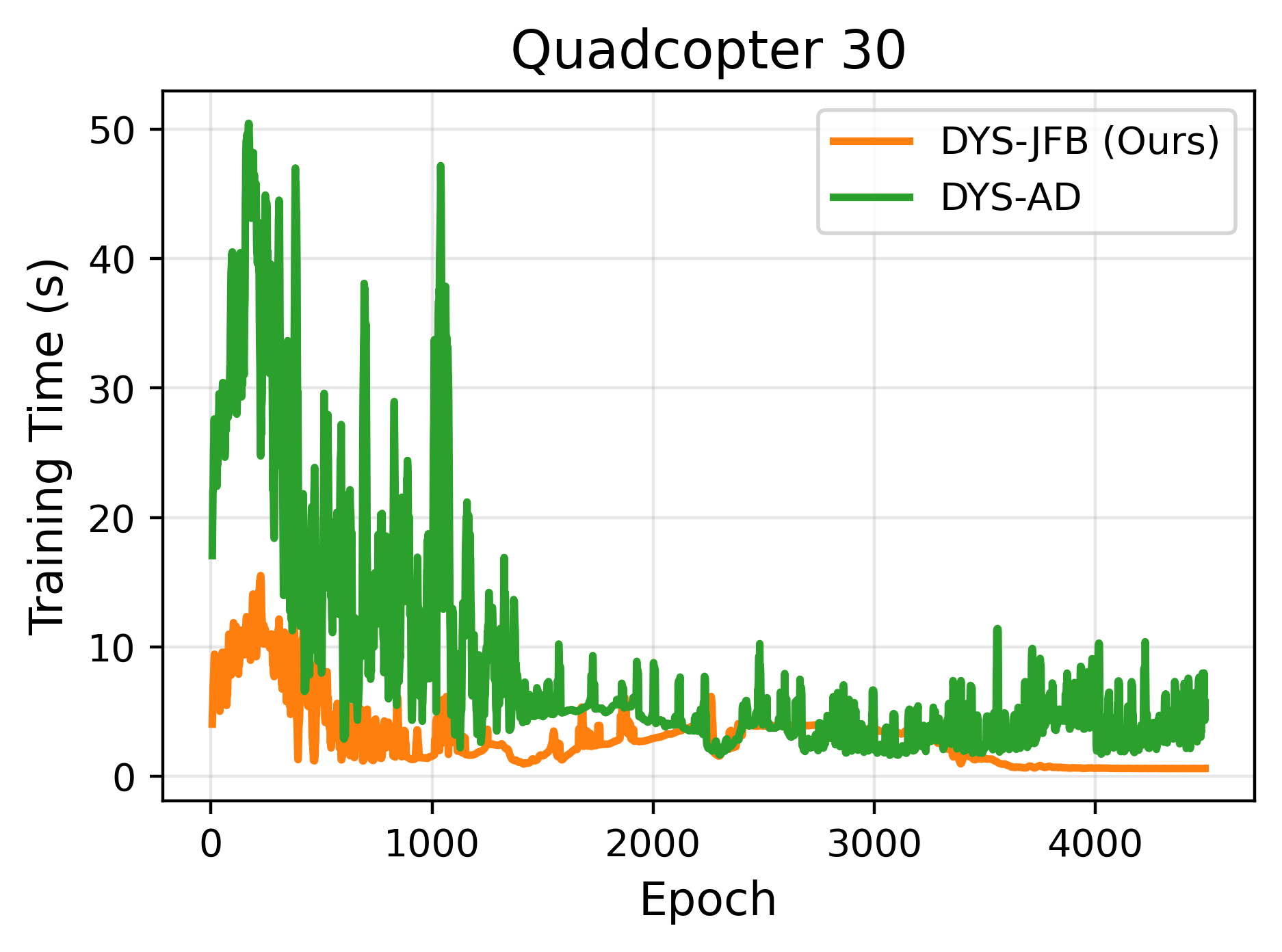}
        \caption{Quadcopter 30}
        \label{fig:quad_time}
    \end{subfigure}
    \caption{Per epoch training time results compared across all tested method under three different dynamics models. In all test cases DYS with JFB update yields the lowest per epoch time. While automatic differentiation can usually work it incurs significant cost overhead. Solver using CVXPY Layers does not converge for the quadcopter example.  We also note that, for DYS, the number of iterations varies depending on trajectory behavior. As such, the per-epoch training time can fluctuate, unlike that of CVXPY Layers; however, it typically decreases and stabilizes as training progresses.}
    \label{fig:training_time}
\end{figure}

\begin{table}[t]
\centering
\begin{tabular}{ccc}
\toprule
Problem & \makecell{Best Trained Model \\ ($L$, $G$)} & \makecell{Final Trained Model \\ ($L$, $G$)}\\
\midrule
\makecell{Single Integrator\\(50, 2)}
& ($5.48 \pm 0.18$, $0.0107 \pm 0.0050$)
& ($5.49 \pm 0.21$, $0.0189 \pm 0.0132$) \\

\makecell{Double Integrator\\(6, 3)}
& ($90.39 \pm 7.80$, $0.2639 \pm 0.1215$)
& ($90.67 \pm 7.43$, $0.4334 \pm 0.2405$) \\

\makecell{Quadcopter\\(30, 3)}
& ($170.5 \pm 17.76$, $0.3497 \pm 0.2578$)
& ($174.38 \pm 15.81$, $0.3717 \pm 0.2834$) \\
\bottomrule
\end{tabular}
\caption{Statistical results in DYS-JFB training under different dynamics using multiple seeds. While we mainly use the final epoch results for the rest of the paper, we here also include results from best trained model logged during training to provide a more comprehensive view of the solver. Variances are expected given the hard constraints nature of the problems, we do note that despite having different values, all models converge regardless of seeding under our proposed approach. Similarly we display both running cost and terminal state error here. }
\label{tab:seeded_results}
\end{table}

\subsection{Additional Numerical Results: A Case Study on Different CBF choices}
While investigating different control barrier function formulations is not the primary focus of this work, we nonetheless conduct additional numerical experiments to verify that our proposed approach remains applicable and performs well across different CBF choices. In particular, we focus on alternative CBF formulations for modeling rectangular obstacles.

\paragraph{$p$ norm-based CBFs.}
A simple and widely used approach for modeling different obstacle shapes is to generalize the norm used in the CBF definition. In particular, replacing the standard Euclidean norm with an $p$ norm yields the barrier function
\[
    h(x) = \|x - c\|_p^p - r^p,
\]
where $c$ is the obstacle center and $r$ its size. As $p$ increases, the level sets of $h$ become increasingly box-like, providing a convenient way to approximate rectangular obstacles. This formulation is smooth for $p > 1$, easy to differentiate, and requires only minimal modifications to the implementation, making it a practical first step for modeling different geometries. However, it is important to note that large values of $p$ can introduce numerical challenges. In particular, the barrier function and its derivatives become increasingly ill-conditioned, which can lead to numerical instability and as such higher computational cost in the projection layers.

\paragraph{Smooth signed box CBFs.}
A common way to represent an axis-aligned rectangular obstacle centered at $c$ with half-width $r$ is via the max-type barrier, take a 2-dimensional problem for example, we have
\begin{equation}
    h(x) = \max\{|x_1 - c_1|,\ |x_2 - c_2|\} - r,
\end{equation}
which enforces that the state remains outside the boxed region. This formulation directly captures rectangular geometry, in contrast to $p$ norm-based approximations which require large $p$ for accurate representation and can be unstable to use in practice.
For our experimental setup, to ensure smoothness and differentiability of the CBF we use a smooth approximation of the signed box barrier, which ensures compatibility with the higher-order CBF construction while remaining faithful to the underlying box geometry.

We conduct comparative experimentation using the double integrator dynamics with three different obstacles. We consider four total scenarios, namely $p=1,2,4$ as well as the box CBF definition for rectangular obstacles. We display trajectories under trained controller in Figure~\ref{fig:ablation_square_cbf}. We observe consistent convergence to the target while respecting safety constraints. 
We note in our testing that although larger values of $p$ yield shapes that more closely approximate rectangular obstacles, they lead to increasingly slower convergence unstable numerical behavior. In contrast, the smooth box formulation provides stable performance while accurately capturing the desired geometry. Overall, these results highlight that our approach is both general and principled, and can be applied across different CBF choices without modification.

\begin{figure}[t]
    \centering
    \begin{subfigure}[b]{0.24\textwidth}
        \centering
        \includegraphics[width=\textwidth]{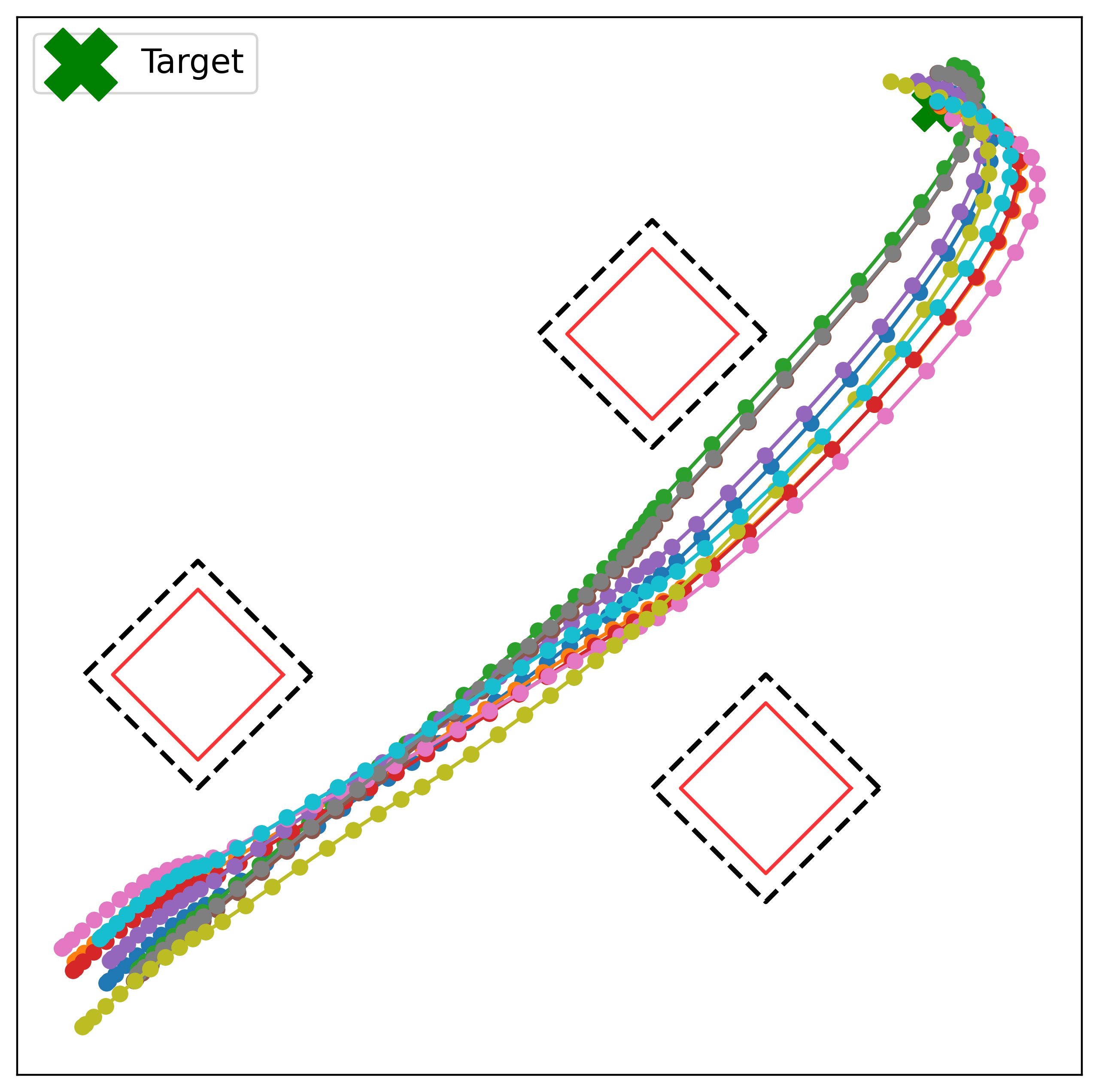}
        \caption{p1 norm CBF}
        \label{fig:p1_ball}
    \end{subfigure}
    \hfill
    \begin{subfigure}[b]{0.24\textwidth}
        \centering
        \includegraphics[width=\textwidth]{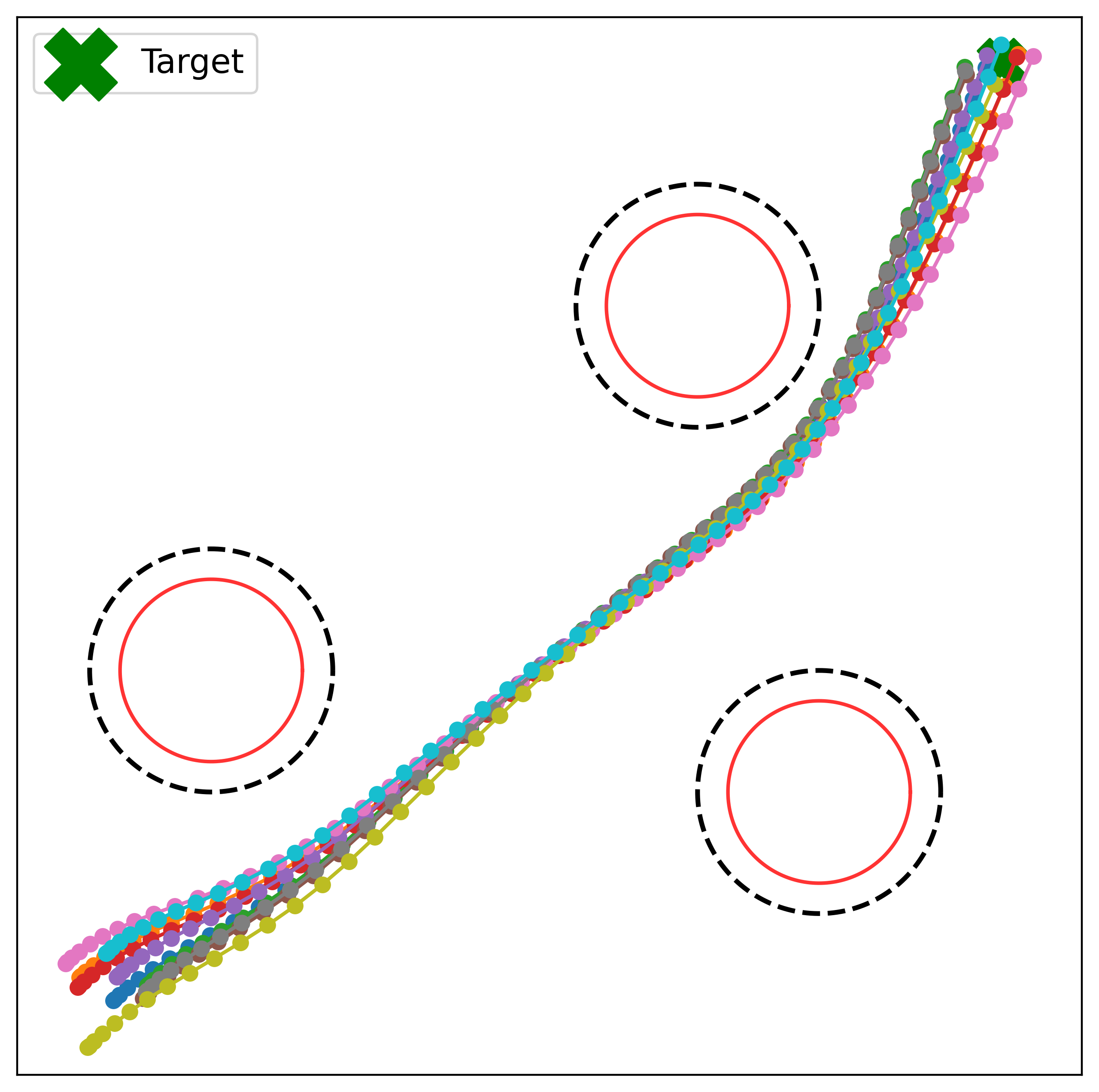}
        \caption{p2 norm CBF}
        \label{fig:p2_ball}
    \end{subfigure}
    \hfill
    \begin{subfigure}[b]{0.24\textwidth}
        \centering
        \includegraphics[width=\textwidth]{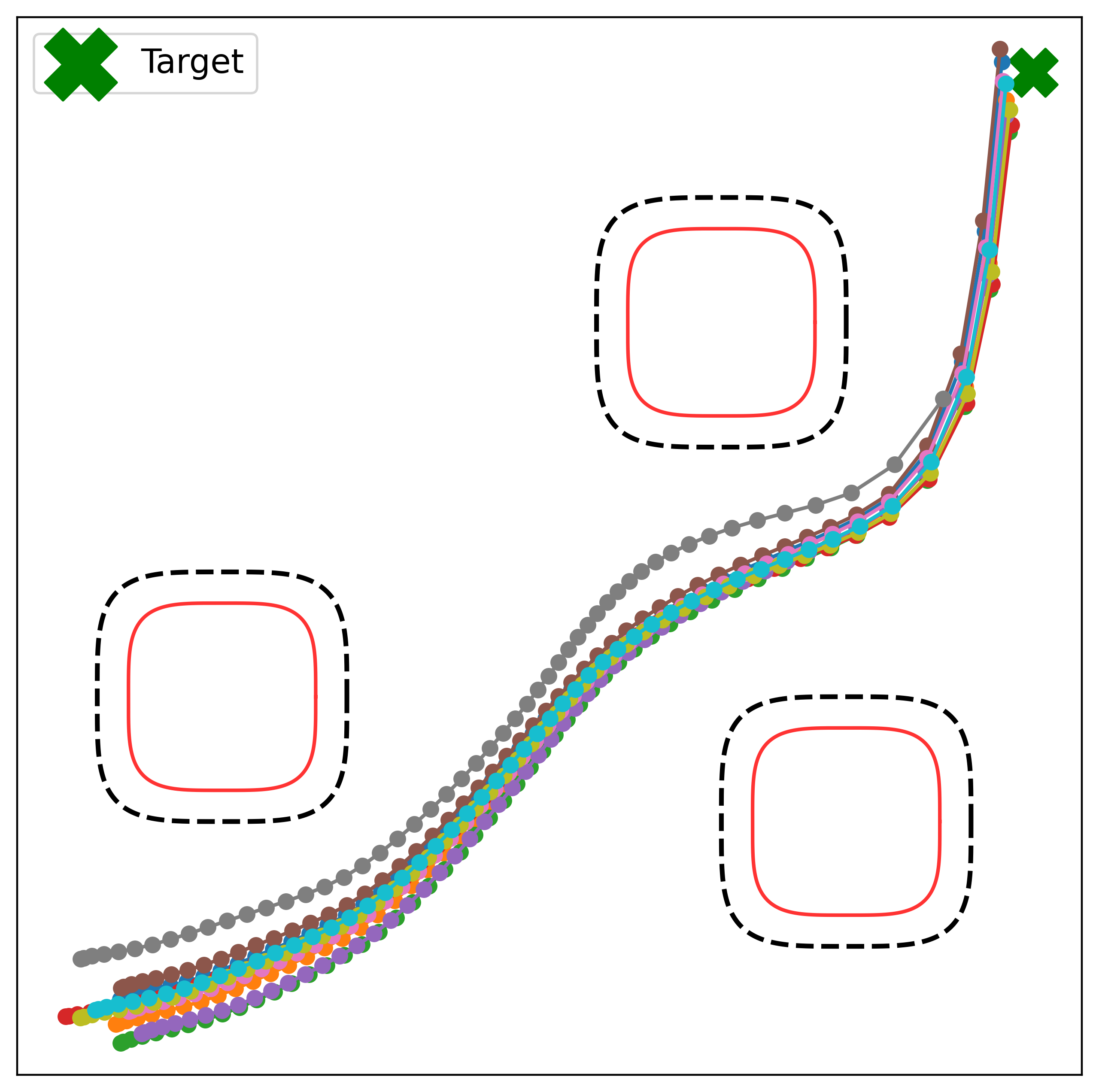}
        \caption{p4 norm CBF}
        \label{fig:p4_ball}
    \end{subfigure}
    \hfill
    \begin{subfigure}[b]{0.24\textwidth}
        \centering
        \includegraphics[width=\textwidth]{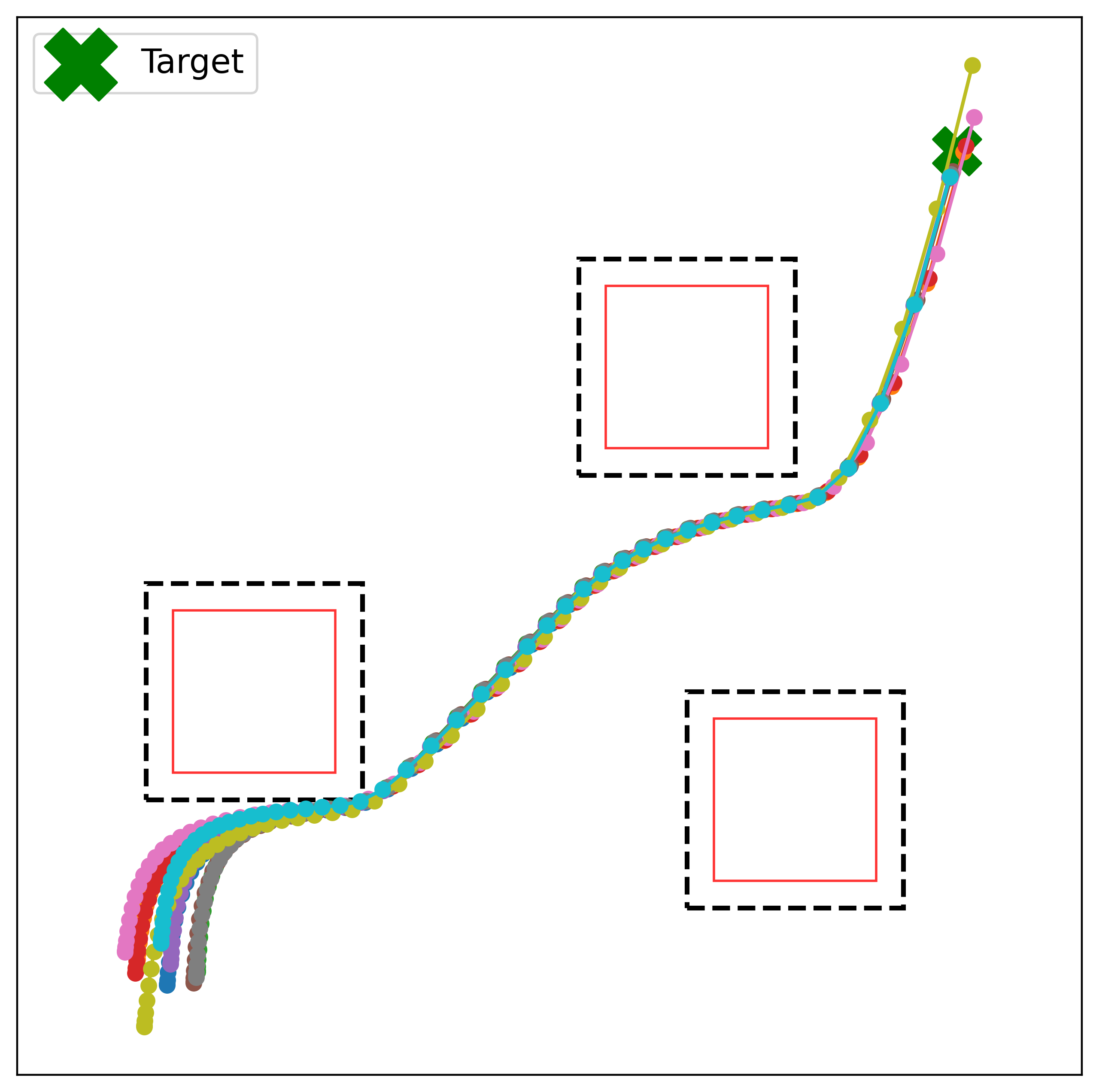}
        \caption{signed box CBF}
        \label{fig:box}
    \end{subfigure}
    \caption{Controlled trajectories using the learned feedback controller over different initial conditions and obstacles shapes, our approach yields close to optimal performances over all tested settings, while preserving the safety requirements. }
    \label{fig:ablation_square_cbf}
\end{figure}

\paragraph{Limitations and Fine Tuning.}
Training implicit neural networks can be challenging and is often less stable than training their explicit counterparts. These challenges are well documented in~\cite{bai2019deep,winston2020monotone,bai2021stabilizing,jafarpour2021robust,fung2022jfb,chu2024lyapunov}. Our problem formulation also requires stacking these implicit network models along trajectory rollouts, sometimes incurring thousands of fixed-point iterations for a forward pass, further adding to the complexity of the training problem. These difficulties are inherent to the hard-constrained control problems themselves and may not be resolved through optimization alone.
We notice and document the statistical variance in running and terminal costs across multiple seeded runs in Table~\ref{tab:seeded_results}. Despite some differences in values, we highlight that all tested trials converge regardless of initialization and random seeding, thus suggesting our method is reliable in problems across different scales.

While we use standard multi-layer perceptrons (MLPs) in all displayed results with limited tuning, upon further testing, we find that additional changes in weight decay, learning rate, terminal cost weight scheduling, and model sizes can help stabilize training and reduce variance across runs. In particular, we notice that for problems where agent behaviors share similarities, network architectures that exploit symmetry can significantly reduce training variance and improve convergence speed, though this is beyond the scope of this work and we leave further investigation as future directions. Finally, trajectory rollouts and visualizations for each example are displayed in Figure~\ref{fig:six_panel_grid}.

\section{Final Discussion}
\paragraph{Proof sketch.}
The convergence analysis proceeds in three stages. The main technical challenges are the nonsmoothness introduced by the projection layer and the bias of the JFB approximation. The former prevents the use of classical smooth analysis techniques based on Taylor expansions, while the latter means that descent is not immediate. Our proof addresses these issues in the following sequence.

\begin{itemize}
    \item \textbf{Step 1: Contraction of the DYS operator.}
    We first establish that the DYS fixed-point operator is contractive with respect to the lifted variable $y$ for any $\zeta\in(0,1)$. Since the projection onto the feasible set is nonsmooth, classical Jacobians need not exist everywhere. We therefore work with Clarke generalized Jacobians and show that every Clarke generalized Jacobian satisfies the same contraction bound. This guarantees existence and uniqueness of the fixed point and provides the stability estimates required in the subsequent analysis.

    \item \textbf{Step 2: From pointwise alignment to trajectory-level descent.}
    The core of the proof is to show that the JFB approximation remains sufficiently aligned with the true Clarke subgradient despite ignoring the inverse Jacobian. We first establish a pointwise alignment result for each time $t$ and initial condition $x$. This estimate is then lifted to the expectation over initial conditions and finally integrated over the time horizon to obtain a descent direction for the complete trajectory objective. Assumption~\ref{assumption:M}(iii) controls the contribution of the slack variables introduced by the safety layer, ensuring that they do not dominate the gradient alignment.

    \item \textbf{Step 3: Convergence under gradient flow.}
    Having established trajectory-level descent, we consider the continuous-time gradient flow generated by the JFB update. The use of gradient flow is essential here because the nonsmooth objective prevents the standard Taylor-expansion arguments underlying discrete gradient-descent analyses. Instead, we analyze the behavior of Clarke subgradients in a neighborhood of the gradient-flow trajectory, which allows us to prove asymptotic convergence to a Clarke stationary point.
\end{itemize}

\paragraph{Key advantages of Davis--Yin splitting.}
We highlight the key properties of DYS that motivate this work and explain why we believe it is particularly well suited for end-to-end training with embedded CBF-QP layers. We summarize these advantages in two parts.
\begin{itemize}
    \item \textbf{Provable contraction.} Under the assumptions of the problem formulation, the DYS operator is contractive for any step size $\zeta\in(0,1)$. This yields a unique fixed point and provides the foundation for the convergence analysis developed in this work. Importantly, no problem-dependent step-size tuning is required, removing what can otherwise become a significant practical bottleneck as constraints evolve along trajectories. This property is generally unavailable for comparable splitting methods.
    
    \item \textbf{Simple closed-form updates.} Each component of the DYS iteration consists only of projections onto sets with closed-form solutions together with a simple gradient step. Consequently, every iteration is inexpensive to evaluate, which is particularly important for scaling to high-dimensional multi-agent problems.
\end{itemize}

While alternative splitting methods, such as ADMM~\cite{boyd2011distributed} and PDHG~\cite{chambolle2011first}, are also applicable to constrained optimization problems, they generally require more problem-specific derivations to obtain equally simple updates, and establishing comparable contraction properties is considerably less direct. For the CBF-QPs considered here, DYS therefore provides a particularly convenient combination of theoretical guarantees and computational simplicity.

\paragraph{Discussion and future work.}
The CBF layer solves a strongly convex quadratic program and therefore admits a unique solution $y^\star=(u^\star,s^\star)$. Consequently, the forward and backward computations need not employ the same numerical solver. Any algorithm that computes $y^\star$ may be used during the forward pass, after which a single DYS evaluation can be applied to the converged solution to define the JFB backward pass. 
This decouples the choice of forward solver from differentiation, opening the possibility of combining JFB with more specialized or problem-dependent optimization algorithms, enabling potentially faster and more accurate forward solvers while retaining the simplicity of JFB in the backward pass, together with the descent and convergence guarantees established in this work. 
We believe this provides a promising direction for further improving the efficiency of embedded optimization layers for complex control problems.

\end{document}